\documentclass{article}


\usepackage[nonatbib,preprint]{neurips_2024}
\usepackage[numbers]{natbib}

\usepackage{latexsym}


\usepackage{amsmath,amsfonts,bm}









\def\eqref#1{equation~\ref{#1}}









\def\ceil#1{\lceil #1 \rceil}

\def\1{\bm{1}}

\def\eps{{\epsilon}}




\def\rve{{\mathbf{e}}}

\def\rvh{{\mathbf{h}}}

\def\rvp{{\mathbf{p}}}

\def\rvv{{\mathbf{v}}}

\def\rvx{{\mathbf{x}}}
\def\rvy{{\mathbf{y}}}
\def\rvz{{\mathbf{z}}}

\def\rvX{{\mathbf{X}}}


\def\rmA{{\mathbf{A}}}

\def\rmI{{\mathbf{I}}}

\def\rmO{{\mathbf{O}}}

\def\rmR{{\mathbf{R}}}

\def\rmW{{\mathbf{W}}}
\def\rmX{{\mathbf{X}}}


\def\vzero{{\bm{0}}}



\DeclareMathAlphabet{\mathsfit}{\encodingdefault}{\sfdefault}{m}{sl}
\SetMathAlphabet{\mathsfit}{bold}{\encodingdefault}{\sfdefault}{bx}{n}











\newcommand{\E}{\mathbb{E}}
\newcommand{\prob}{\mathbb{P}}




\DeclareMathOperator*{\argmax}{arg\,max}
\DeclareMathOperator*{\argmin}{arg\,min}

\usepackage{commath}
\usepackage{amssymb}
\usepackage{amsmath,amsfonts,bm}
\newcommand{\bcube}[1]{\{0,1\}^{#1}}
\newcommand{\hamn}{\{0,1\}^{n}}
\newcommand{\ham}{\{0,1\}}
\newcommand{\bham}{\{-1,1\}}
\newcommand{\aut}{\mathcal{A}}

\newcommand{\dprod}[2]{\langle #1, #2 \rangle}
\newcommand{\normtwo}[1]{\left\lVert#1\right\rVert_2}

\newcommand{\indicator}{\mathbb{I}}


\newcommand{\sfmax}{\operatorname{softmax}}

\newcommand{\embf}{\phi}

\newcommand{\rnn}{\mathcal{R}}

\newcommand{\att}{\operatorname{Att}}
\newcommand{\matt}{\operatorname{M-Att}}
\newcommand{\mlp}{\psi}
\newcommand{\relu}{\operatorname{ReLU}}
\newcommand{\wq}{\rmW_Q}
\newcommand{\wk}{\rmW_K}
\newcommand{\wv}{\rmW_V}

\newcommand{\tfclass}[4]{\textsc{TF}_{#1, #2, #3}^{#4}}
\newcommand{\tfineq}{f_{\text{ineq}}}
\newcommand{\tfnn}{f_{\text{NN}}}
\newcommand{\tffunc}{f_{\textsc{TF}}}

\newcommand{\domp}{\mathbb{Q}_p}
\newcommand{\setq}{\mathcal{Q}}
\newcommand{\jl}{\mathcal{T}}
\newcommand{\jt}{\mathcal{T}_{s}}

\newcommand{\eqtask}{\textsc{EQ}\xspace}
\newcommand{\ineqtask}{\textsc{INEQ}\xspace}

\newcommand{\ineq}{\mathrm{INEQ}}
\newcommand{\eqfunc}{\mathrm{EQ}}
\newcommand{\nntask}{\textsc{NstNb}\xspace}
\newcommand{\mqar}{\textsc{Mqar}\xspace}
\newcommand{\disj}{\textsc{Disj}\xspace}
\newcommand{\indtask}{\textsc{Index}\xspace}
\newcommand{\postask}{\textsf{IdxL}\xspace}

\newcommand{\dyckt}{\textsf{Dyck-2}\xspace}
\newcommand{\dkb}{\textsf{Dyckb-(2, 2)}}
\newcommand{\kspar}{{k\textsc{-sparse}}}
\newcommand{\thres}{{\textsf{Th}}}
\newcommand{\thresk}{{\textsc{Thres}_{k, N}}}

\newcommand{\sept}{\textsc{[sep]}\xspace}

\newcommand{\dyckbt}[1]{\textsf{Dyck-(2, #1)}\xspace}
\newcommand{\dycknk}{\textsf{Dyck-(n, k)}\xspace}
\newcommand{\fdyck}{f_{\textsf{Dyck}}}

\newcommand{\orop}{\textsc{OR}\xspace}
\newcommand{\xorop}{\textsc{XOR}\xspace}
\newcommand{\andop}{\textsc{AND}\xspace}

\newcommand{\nnmar}{\gamma}
\newcommand{\nnmart}{\tau}
\newcommand{\jst}{j^{*}}
\newcommand{\constp}{\mathrm{K}_c}

\newcommand{\tx}{\tilde{x}}

\newcommand{\bn}{\mathbb{N}}
\newcommand{\br}{\mathbb{R}}
\newcommand{\bz}{\mathbb{Z}}

\def\eps{{\epsilon}}





\usepackage[utf8]{inputenc} 
\usepackage[T1]{fontenc}    
\usepackage{url}            
\usepackage{booktabs}       
\usepackage{amsfonts}       
\usepackage{nicefrac}       
\usepackage{microtype}      
\usepackage{xcolor}         

\usepackage{graphicx}
\usepackage{svg}
\usepackage{amsmath}
\usepackage{amssymb}
\usepackage{mathrsfs}
\usepackage{amsthm}
\usepackage{thmtools}
\usepackage{thm-restate}

\usepackage{stmaryrd}
\usepackage{algorithm}
\usepackage{xr}
\usepackage{wrapfig}
\usepackage{lipsum}
\usepackage{xspace}

\usepackage{siunitx}
\usepackage{subcaption}
\usepackage{array}
\usepackage{balance}
\usepackage{colortbl}
\usepackage{makecell}
\usepackage{soul}
\usepackage{blindtext}
\usepackage{multirow}

\usepackage{tocloft}

\definecolor{xdarkblue}{rgb}{0,0.08,0.45}
\definecolor{hdarkblue}{HTML}{010f47}
\definecolor{mydarkblue}{HTML}{013b91}
\definecolor{darkgreen}{HTML}{139113}
\definecolor{mydarkblue2}{HTML}{011c73}
\definecolor{todored}{HTML}{6e1501}
\definecolor{remblue}{HTML}{027b99}

\usepackage{hyperref}       
\hypersetup{colorlinks=true,urlcolor=xdarkblue,linkcolor=xdarkblue,citecolor=xdarkblue}

\theoremstyle{definition}
\newtheorem{definition}{Definition}

\theoremstyle{remark}

\theoremstyle{plain}
\newtheorem{theorem}{Theorem}

\newtheorem{lemma}{Lemma}

\theoremstyle{plain}
\newtheorem{corollary}{Corollary}[theorem]

\newtheorem{proposition}[theorem]{Proposition}
\newtheorem{fact}{Fact}


\title{Separations in the Representational Capabilities of Transformers and Recurrent Architectures}

%

\author{\stepcounter{footnote}
Satwik Bhattamishra$^1$\thanks{ Corresponding Authors: \href{mailto:satwik.bmishra@cs.ox.ac.uk,varun.kanade@cs.ox.ac.uk}{satwik.bmishra, varun.kanade@cs.ox.ac.uk}} \hspace{0.75em} Michael Hahn$^2$  \hspace{0.75em} Phil Blunsom$^{1,3}$ \hspace{0.75em} Varun Kanade$^1$\footnotemark[2] \\
\\
  \normalsize{$^1$University of Oxford\quad $^2$Saarland University \quad $^3$Cohere} 
 }

 \date{}

\begin{document}

\maketitle

\begin{abstract}
	Transformer architectures have been widely adopted in foundation models. Due to their high inference costs, there is renewed interest in exploring the potential of efficient recurrent architectures (RNNs). In this paper, we analyze the differences in the representational capabilities of Transformers and RNNs across several tasks of practical relevance, including index lookup, nearest neighbor, recognizing bounded Dyck languages, and string equality.  For the tasks considered, our results show separations based on the size of the model required for different architectures. For example, we show that a one-layer Transformer of logarithmic width can perform index lookup, whereas an RNN requires a hidden state of linear size.  Conversely, while constant-size RNNs can recognize bounded Dyck languages, we show that one-layer Transformers require a linear size for this task. Furthermore, we show that two-layer Transformers of logarithmic size can perform decision tasks such as string equality or disjointness, whereas both one-layer Transformers and recurrent models require linear size for these tasks. We also show that a log-size two-layer Transformer can implement the nearest neighbor algorithm in its forward pass; on the other hand recurrent models require linear size. Our constructions are based on the existence of $N$ nearly orthogonal vectors in $O(\log N)$ dimensional space and our lower bounds are based on reductions from communication complexity problems.  We supplement our theoretical results with experiments that highlight the differences in the performance of these architectures on practical-size sequences.
\end{abstract}

\section{Introduction}\label{sec:intro}

Transformers \citep{Transformervaswani2017} are the go-to architecture for building LLMs \citep{gpt3neurips}, but recently, there has been significant interest in reviving recurrent architectures to build LLMs for practical tasks \citep{mamba, resurrectingrnn, rwkvpeng2023} to circumvent the quadratic complexity of inference for Transformers.
While Transformers process all tokens in parallel, maintaining $N$ vectors, and are in some sense stateless, recurrent models primarily store information
in a fixed-size hidden state that they can update during the course of the
computation. This raises a fundamental question that we explore: Are there
tasks that are computationally easier for one architecture to represent but 
significantly more difficult for the other one?

Ever since Transformers supplanted LSTMs, there has been significant interest
in understanding the differences in the computational capabilities of the
models both theoretically and empirically.  On the one hand, even as
Transformers have been widely adopted for building LLMs, several studies found
that they struggled at modeling various formal languages, particularly those
that required modular counting or state-tracking
\citep{bhattamishra-etal-2020-ability, deepminddeletang2022neural, flipflop}.
At the same time, despite substantial effort, it has proven hard to match the
performance of Transformer-based LLMs at scale with recurrent architectures
\citep{griffin, rwkvpeng2023, mamba}; in particular, it has been observed that
LLMs based on recurrent models struggle on associative recall or
extraction-related tasks \citep{griffin, zoologyarora2023}. More recently, \citet{zoologyarora2023} and \citet{bhattamishra2024understanding} demonstrated that Transformers are better than attention-free models at synthetic tasks based on associative recall and more general forms such as implementing nearest neighbors. Based on these observations, it is natural to wonder if such tasks are theoretically easier for Transformers to represent in comparison to recurrent models.

\textbf{Our Contributions.} We show differences in the representational
capabilities of Transformers and recurrent models across several natural tasks
of real-world relevance. In this paper, we show strong \emph{separation}
results: when a task is easy we show that it can be expressed by one
architecture with size poly-logarithmic in the input length $N$; on
the other hand we show the other type of architecture requires the size to be
linear in $N$. We describe the tasks studied and our key results below.

\textbf{(i) Index Lookup.} Given a sequence of symbols $s_1, \ldots, s_N$
followed by a position $p \in [N]$, a model has to output the symbol in the
$p$-th position $s_p$ (Figure~\ref{fig:indexfig}). Our first result shows that one-layer Transformers with
size poly-logarithmic in $N$ can express this task (Theorem~\ref{thm:tf_posret_main})
whereas any recurrent model must have width $\Omega(N)$ to perform this
task (Theorem~\ref{thm:rnn_posret_main}). 

\textbf{(ii) Bounded Dycks.} Dyck languages with bounded depth require a model to recognize whether a string of parentheses is well-balanced (Figure~\ref{fig:dyckfig}). These formal languages have received a great deal of attention in the study of neural sequence models because they capture hierarchical dependencies that occur in natural languages \citep{ebrahimi2020can, hewitt-etal-2020-rnns, hahn2020theoretical,bhattamishra-etal-2020-practical}. They are also central to formal language theory as all context-free languages can be expressed in terms of Dyck languages~\cite{chomsky1963algebraic}. In contrast to the results for index lookup, we show that one-layer Transformers must have a size that grows linearly in input length to represent this task (Theorem~\ref{thm:TF_dyck_main}), whereas prior works found that constant-size recurrent models can express this task \citep{hewitt-etal-2020-rnns, bhattamishra-etal-2020-practical}.

\textbf{(iii) String Equality.}
This task is formalized by the Equality function $\eqfunc(\rvx, \rvy) = \indicator[\rvx=\rvy]$ where $\rvx,\rvy$ are two strings of length $N$; it models the natural task of matching two documents. We show that log-sized two-layer Transformers can represent the function $\eqfunc$ (Theorem \ref{thm:tf_eq_main}), whereas both one-layer Transformers (Theorem~\ref{thm:tf_eq_1layer}) and recurrent models (Theorem~\ref{thm:rnn_eq_lower}) must have a size that grows linearly with the sequence length. These results extend to a broader class of Boolean functions.

\textbf{(iv) Nearest Neighbor and Associative Recall.} 
In this task, a model is provided with a sequence of inputs and labels $\rvx_1, y_1, \ldots, \rvx_{k-1}, y_{k-1}$ followed by a query input $\rvx_k$. The model has to determine the label $y_k$ of the query input by applying the nearest neighbor algorithm in its forward pass (Figure~\ref{fig:nnfig}).
This task subsumes various associative recall tasks considered in earlier works (cf. Appendix~\ref{app:nn}) and was introduced to understand in-context learning. \citet{bhattamishra2024understanding} found empirical differences in the performance of Transformers and attention-free architectures, though theoretical understanding is lacking.
We show that a two-layer Transformer with size logarithmic in the input length
can perform this task (Theorem~\ref{thm:tf_nn_main}), whereas recurrent models
require a size linear in the length 
(Theorem~\ref{thm:rnn_nn_lower_main}). 

We also empirically investigated the performance of Transformers and standard
recurrent models \cite{lstm_hochreiter1997long}, including recently proposed
state-space models~\cite{dss, mamba} on these tasks. The observed behavior is
along the lines indicated by our theoretical results. 

\textbf{Our Techniques}. For constructing Transformers that perform the tasks,
one of the key requirements is the ability to precisely attend to specific positions. In order to do this with low-dimensional embeddings, our
constructions make use of nearly orthogonal vectors obtained using the
Johnson-Lindenstrauss lemma~\citep{Johnson1984ExtensionsOL}; furthermore these can be generated efficiently in
logarithmic space which allows the size of the models to be poly-logarithmic in
the input length (cf. Appendix~\ref{ssec:tools}).

For our lower bounds, we appeal to results from communication complexity. We
show how to obtain protocols with communication complexity bounded by the size
of the models. Together with established (and new) communication complexity
lower bounds, we obtain lower bounds on the model size. For our lower bound for one-layer Transformers, we derive a lower bound on the communication complexity for bounded Dyck languages (Lemma~\ref{lemma:dyck-comm-bound_mainPaper}).
\subsection{Related Work}\label{subsec:rel_work}

\textbf{Expressivity of Sequence Models.} The expressive power of neural sequence models has been an active area of research, targeting both RNNs and Transformers \citep[e.g.][]{hahn2020theoretical, sanford2024transformers, merrill2024logic, strobl2024formal, liu2023transformers}. With infinite precision or unbounded resources, Transformers are universal function approximators \citep{Yun2020Are} and Turing complete \citep{perezturing, bhattamishra-etal-2020-computational}. In bounded precision settings, they relate to Boolean circuits \citep{merrill2023parallelism, hao2022formal} and logical formalisms \citep{chiang2023tighter}.
Importantly, such studies usually consider asymptotic expressivity of a single model in the limit of unboundedly long inputs. Our study provides a more fine-grained and realistic picture by accounting for the size of the network, and its scaling with the input. The closest to our work is \citet{sanford2023representational}, who first used communication complexity to prove lower bounds for Transformers and RNNs for abstract tasks such as a sparse averaging task and pair/triple detection tasks. Our work extends that line of work to show separations on natural tasks of practical and real-world relevance.

\textbf{Formal languages and Algorithmic tasks.} A strand of work has sought to understand sequence models via empirical analysis on formal languages and algorithmic tasks \citep{bhattamishra-etal-2020-ability, suzgun2019lstm, zhou2023algorithms}. Numerous works have examined the ability of recurrent models \citep{suzgun2019memory, skachkova2018closing, yu2019learning, bhattamishra-etal-2020-practical, hewitt-etal-2020-rnns, dyckinterpret2024} and Transformers \citep{ebrahimi2020can, yao-etal-2021-self, dyckinterpret2024} to model Dyck languages.
More recently, significant effort \citep{garg2022can, icmltransformersgd, bai2023transformers} has been devoted to investigating how  Transformers can learn to implement learning algorithms in their forward pass to understand the in-context learning phenomenon. \citet{bhattamishra2024understanding} empirically observed that attention-free architectures struggle to implement the nearest neighbor algorithm in comparison to Transformers. Our result takes a step toward understanding this phenomenon by showing that nearest neighbors can be implemented by small-sized Transformers but not by recurrent architectures. 

\section{Definitions}\label{sec:definitions}

We consider two types of models: Transformers \citep{Transformervaswani2017} and recurrent models. We use recurrent models to refer more generally to nonlinear RNNs such as LSTMs \citep{lstm_hochreiter1997long}, state-space models \citep{dss, mamba} as well as variants of linear Transformer \citep{linearTransformers, retnet} which can process inputs in a recurrent manner \citep{linearTransformers}. 

For some finite alphabet $\Sigma$, a sequence model is given a sequence in
$\Sigma^N$ and depending on the task outputs either $\ham$ or a sequence of
outputs. Each $s_i \in \Sigma$ from a sequence $s_1 \cdots s_N \in \Sigma^N$
is mapped to a vector in $\rmR^d$ via an embedding map $\embf: \Sigma
\rightarrow \rmR^d$.  For Transformers, the input embedding function further
takes the position $i$ as input, along with $s_i$.  Each layer for Transformers
and recurrent models maps inputs from $\br^{N\times d} \rightarrow \br^{N
\times d}$. 
A model $M$ has \emph{fixed precision $p$} if all the
parameters of the model as well as the values in the intermediate vectors can
be implemented with $p$-bit precision numbers
(cf. Appendix~\ref{subsec:precision}). 

\textbf{Transformers.} Each layer of a Transformer has an attention block
followed by an MLP block. The attention block takes as input $\rmX \in
\br^{N\times d}$ and applies the operation $\att(\rmX) = \sfmax(\rmX\wq^\top
\wk \rmX^{\top})\rmX\wv^\top$ where $\wq, \wk, \wv \in \br^{m \times d}$. For
simplicity, we will use $Q(\rvx_i)$ (and likewise $K(\rvx_i)$ and $V(\rvx_i)$)
to denote $\wq\rvx_i$. The \textit{width} of the Transformer is $\max(m, d)$,
where $m \times d$ is the shape of the projection matrices $\wq, \wk$. For any
matrix $\rmA \in \br^{N \times M}$, the $\sfmax$ operator is applied row-wise
as follows $\displaystyle \sfmax(\rmA)_{i,j} =
\frac{\exp(\rmA_{i,j})}{\sum_{k=1}^{M} \exp(\rmA_{i,k})}.$ 
%
%
Multi-head attention with $H$ heads is defined as $\matt_H(\rmX) =
[\att_1(\rmX), \ldots, \att_H(\rmX)]\rmW_O$ where each $\att_i(\rmX)$ has its own set
of parameters. The matrix $\rmW_O \in \br^{mH \times d}$ projects the
concatenated vector to a vector of dimension $d$.
%
%
%
%
%
%
For an input $\rmX \in \br^{N \times d}$, the output of a layer of Transformer
will be $\mlp(\matt(\rmX)) \in \br^{N \times d}$ where $\mlp: \br^{d}
\rightarrow \br^{d}$ is a feedforward network. We use $\tfclass{m}{p}{H}{L}$ to
denote the class of all Transformers operating over $p$-bit precision numbers
with width $m$, $H$ heads, and at most $L$ layers.

\textbf{Recurrent Models.} A general recurrent neural network (RNN) takes as
input the sequence $\rvx_1, \ldots, \rvx_N$ where $\rvx_i \in \br^d$ and
produces an output sequence $y_1, \ldots, y_N$; in this paper we will mostly
consider the case when $y_i \in \ham$. An RNN with a hidden state of size $m$
over $p$-bit numbers can be defined as follows.  The hidden state is an
$mp$-bit memory $\rvh_i \in \bcube{mp}$ and for some $\rvh_0 \in \bcube{mp}$,
the RNN computes $\rvh_t = g_{(t)}(\rvx_t, \rvh_{t-1})$ and $y_t =
f_{(t)}(\rvh_t)$ for $t=1,\ldots,N$, and $g_{(t)}$ and $f_{(t)}$ are
arbitrary functions. Since the transition function is allowed to be
arbitrary, this definition captures the general family of recurrent
or state-space architectures including LSTMs, state-space models, and linear
Transformers, each of which differs in the way the transition function is
defined. For a recurrent model, we say that the \emph{representation size} of a
hidden state is $mp$, and its \textit{width} is $m$, i.e. the hidden state
consists of $m$ units each of which uses $p$-bits. Throughout the paper, we
will use
recurrent models or RNNs to refer to the general family of recurrent
architectures mentioned above.

By the \textit{representation size} of a model, we will refer to the total number of bits required to represent the model including all the parameters and embeddings. For Transformers, this is $\Theta(mdpH)$. For a recurrent model, the representation size is at least $mp$.

\section{Index Lookup Task}\label{sec:pos_ret}

\begin{figure}[t]%
    \centering
    \subfloat[Index Lookup Task]{\includegraphics[scale=0.065, trim=0 0 5 5, clip]{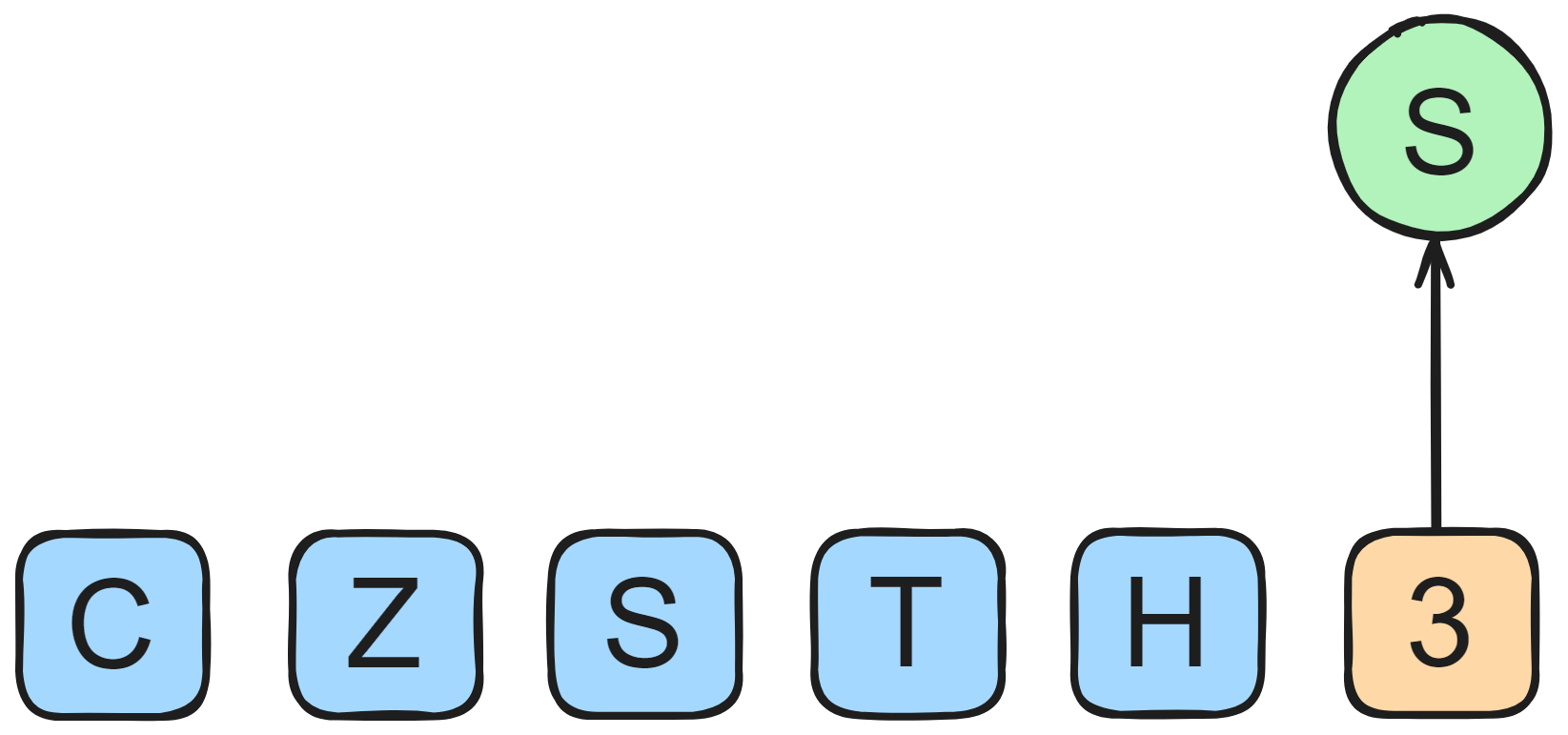} \label{fig:indexfig}}%
    \qquad
    \subfloat[Dyck-2 with depth $\leq 2$]{\includegraphics[scale=0.055, trim=0 0 5 5, clip]{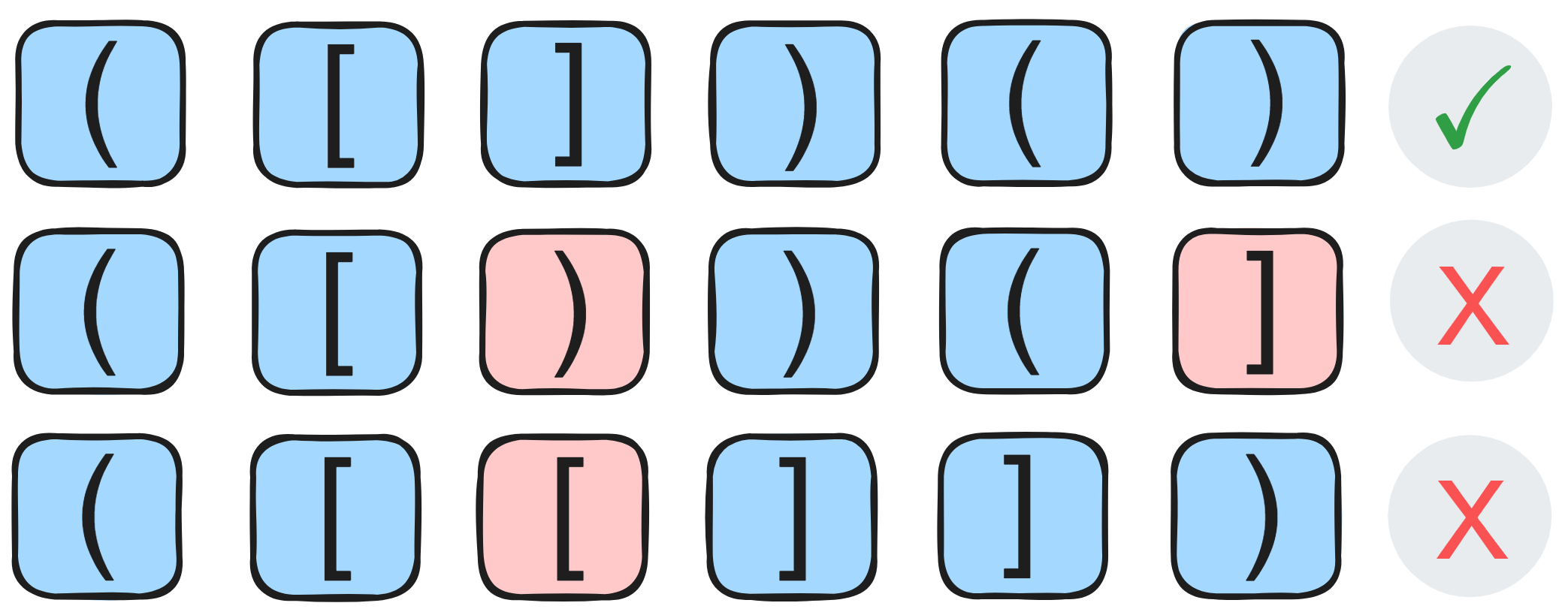} \label{fig:dyckfig}}
    \qquad
    \subfloat[Nearest Neighbor Task]{\includegraphics[scale=0.045, trim=0 0 5 5, clip]{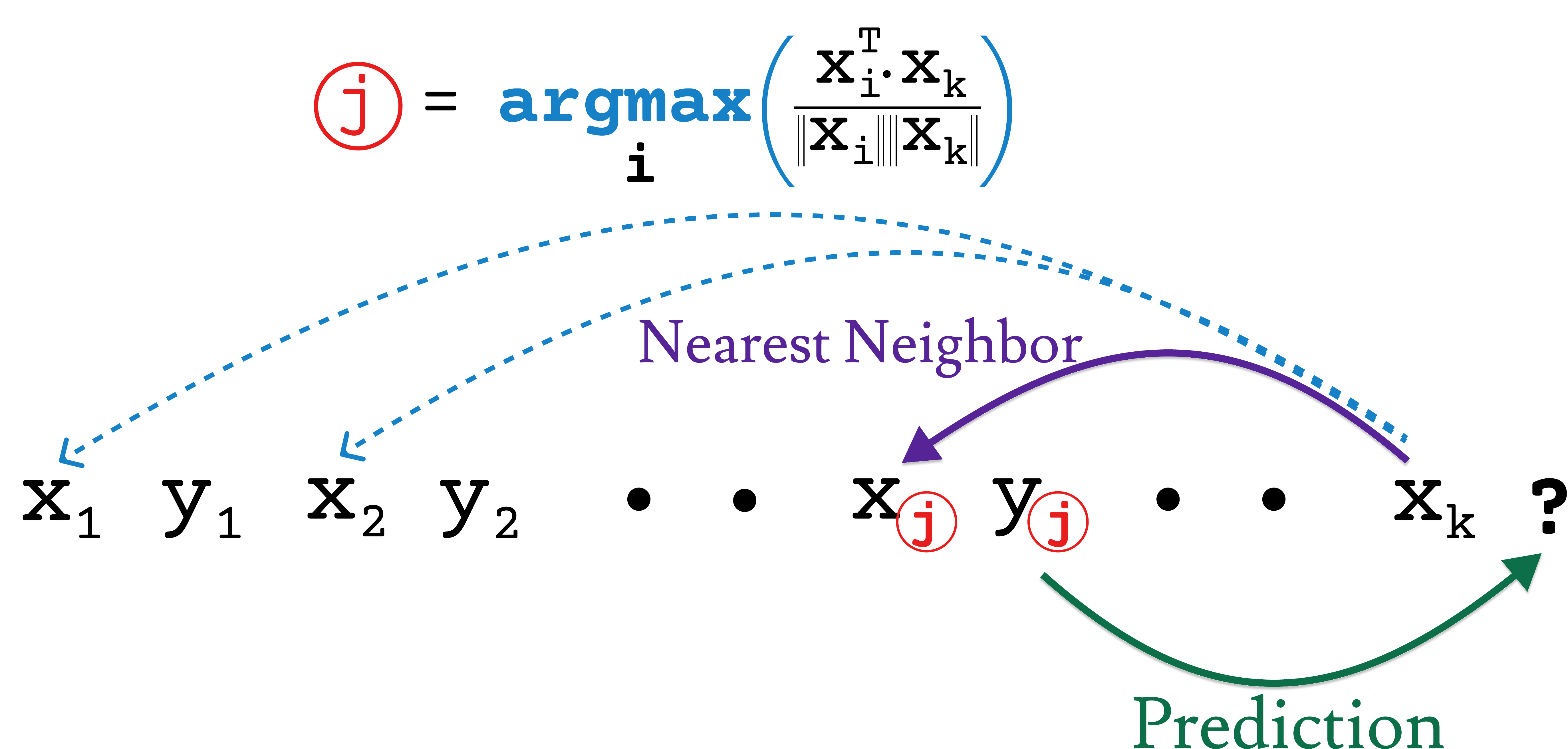} \label{fig:nnfig}}%

    \caption{Illustration of a few key tasks considered in our work.}%
    \label{fig:task_fig}%
\end{figure}

\textbf{Task Description.} We introduce a simple task called Index Lookup (\postask). In this task a model receives a sequence of tokens $s_1, \ldots, s_N$ (possibly with repetitions) followed by an index $p$ where $p \in [N]$ and the goal of the model is to output the token $s_{p}$. Here the symbols $s_i$ belong to a finite alphabet $\Sigma$.

This simple and natural task helps illustrate the key tools and building blocks
we use to obtain other more general results in this paper: On the one hand, we
show how a one-layer Transformer with width $O(\log N)$ can perform this task;
on the other hand, we use communication complexity arguments to show that any
type of recurrent or state-space model performing this task needs a hidden
state with representation size $\Omega(N)$.

Our first result shows that, for any length $N \in \bn$, there is a $1$-layer
Transformer with width $O(\log N)$ that performs the Index Lookup task for all
input sequences of length at most $N$. Na\"{i}vely one could construct such a
transformer by using one-hot encodings as positional embeddings, as they are
orthogonal and would allow to attend to the desired index. However, this would
require the embedding dimension, and hence the width of the model, to be
$\Omega(N)$. Key to our constructions of a width $O(\log N)$ Transformer, both
here and in other sections, is a result (Lemma~\ref{lem:jlvec} in the Appendix)
which states that, in $k = O(\log N/\gamma^2)$ dimensional space we can find
$N$ nearly orthogonal vectors. We use such vectors in our construction of the
Transformers to allow it to attend almost exactly over desired positions.

\begin{restatable}{theorem}{tfindexmain}\label{thm:tf_posret_main}
	  For all $N \in \bn$, there is a $1$-layer Transformer with width $m=
	  O(\log N)$ and precision $p= O(\log N)$ which performs the index
	  lookup task for all input sequences of lengths up to $N$.
\end{restatable}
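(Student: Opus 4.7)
The plan is to exploit the nearly-orthogonal vectors promised by Lemma~\ref{lem:jlvec} as positional embeddings and then sharpen the softmax enough that the single layer of attention acts as a near-hard lookup at position $p$.

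First I would fix a parameter $\gamma = 1/4$ (say) and obtain, from Lemma~\ref{lem:jlvec}, vectors $\vu_1,\dots,\vu_N \in \R^{k}$ with $k = O(\log N)$, unit norm, and pairwise inner products at most $\gamma$. These serve as positional embeddings. The input embedding at position $i \le N$ is the concatenation of a one-hot (or any bounded) symbol embedding $\embf(s_i) \in \R^{|\Sigma|}$ with the positional vector $\vu_i$; at the final query position $N+1$ I would embed the index $p \in [N]$ by placing $\vu_p$ in the positional slot and leaving the symbol slot zero. Since we only need to distinguish query tokens from content tokens, one extra bit in the embedding flags the query position. The total embedding dimension is $d = O(\log N) + |\Sigma| + O(1) = O(\log N)$ for fixed $\Sigma$.

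Next I would specify $\wq, \wk, \wv$. The query and key matrices read off the positional slot and scale it by $\beta$, i.e.\ $Q(\rvx_i) = \beta\,\vu_{\pi(i)}$ and $K(\rvx_j) = \vu_j$, where $\pi(i)$ is the index stored at position $i$ (equal to $i$ for content positions and equal to $p$ for the query). The value matrix reads off the symbol slot: $V(\rvx_j) = \embf(s_j)$. At the query position the pre-softmax logits are therefore $\beta\langle \vu_p, \vu_j\rangle$, which equals $\beta$ at $j=p$ and is at most $\beta\gamma$ elsewhere. Choosing $\beta = c\log N$ for a sufficiently large constant $c$ makes the softmax put mass $1 - O(1/N^2)$ on position $p$, so the attention output at position $N+1$ is within $O(1/N)$ (in each coordinate) of $\embf(s_p)$. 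A constant-width MLP with a threshold nonlinearity then cleans up this small error and outputs (a representation of) $s_p$.

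The two technical points I would verify carefully are: (i) the softmax concentration bound, which follows from $e^{\beta} / (e^{\beta} + (N-1)e^{\beta\gamma}) \ge 1 - N e^{-\beta(1-\gamma)}$ and our choice of $\beta$; and (ii) that the construction is realizable in $O(\log N)$ bits of precision. For (ii), the Johnson--Lindenstrauss vectors can be taken (via Lemma~\ref{lem:jlvec}, as discussed in Appendix~\ref{ssec:tools}) with entries computable in $O(\log N)$ bits, the attention logits lie in a range of size $O(\log N)$, and the resulting softmax values, though small off the peak, only need to be represented to additive accuracy $O(1/N)$, which requires $O(\log N)$ bits. The main obstacle is really step (ii): ensuring that the entire pipeline, including the exponentials inside the softmax and the aggregated value vector, stays within $O(\log N)$-bit precision while preserving a large enough gap at the MLP decoding stage. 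Once this bookkeeping is done the width, depth, and precision bounds claimed in the theorem follow immediately.
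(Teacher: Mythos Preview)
Your proposal is correct and follows essentially the same route as the paper: JL-type nearly orthogonal vectors as positional embeddings, a $\Theta(\log N)$ scaling factor to sharpen the softmax into a near-hard lookup at position $p$, values carrying the symbol encoding, and a threshold FFN to decode. One small bookkeeping point the paper handles explicitly is the query token's own key (the paper places $\jl(p)$ in a \emph{separate} slot of the embedding so that the query's key is the zero vector); in your construction the query's key would also read off $\vu_p$ and receive logit $\beta$, but since its value slot is zero this at worst halves the attention output and your threshold decoding still recovers $s_p$, so the argument goes through.
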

  %
%
\begin{proof}[Proof Sketch.]
	 For an input sequence $(s_1, \ldots, s_N, p)$, the Transformer uses the
	 embeddings of the position token $p$ and the positional embeddings of the
	 first $N$ inputs to attend over $s_p$, so that the feedforward network can
	 extract the label from the output of the attention block.  Our key idea is
	 to use the $N$ almost orthogonal vectors provided by Lemma~\ref{lem:jlvec},
	 both as positional embeddings and also as a way to embed the numbers $\{1,
	 \ldots, N\}$, any of which can be used as the index $p$. Formally, let $\jl(1), \ldots, \jl(N)$ be $N$ vectors of dimension $k = O(\log N)$ such that $\dprod{\jl(i)}{\jl(j)} \leq 1/4$ for $i\neq j$ and $\dprod{\jl(i)}{\jl(j)} \geq 3/4$ for $i = j$. 

	 Formal details of the construction are in Appendix \ref{appsub:tf_posr}; we
	 provide a sketch.  The embedding of each input token is of size $\log
	 |\Sigma| + 2k$ where $\log |\Sigma| + k$ entries are used for the token
	 embeddings and the last $k$ entries are used for the positional embeddings.
	 The query and key matrices are designed so that the query vector
	 $Q(\rvp) = \eta [\jl(p)]$, the key vectors $K(\rvx_i) = [\jl(i)]$ and
	 $K(\rvp) = [\vzero_k]$. The value vectors simply contain the token
	 embeddings $V(\rvx_i) = [\rho(s_i)]$, where $\rho : \Sigma \rightarrow \{0, 1\}^{|\Sigma|}$ is some binary encoding of $\Sigma$. With such query and key vectors, the
	 dot products in attention, $\dprod{Q(\rvp)}{K(\rvx_i)}$, are $\geq 3\eta/4$ if
	 $i=p$, and $\leq \eta/4$ otherwise.
     %
	  The dot product of the query vector with itself will be
	  $\dprod{Q(\rvp)}{K(\rvp)}=0$. We choose $\eta > 0$ to scale the dot products to amplify the difference between the \emph{high} and \emph{low} dot products. Thus we have,

	  \begin{small}
    \begin{align*}
		 \sfmax(Q(\rvp)^\top K(\rmX)) &= 
		 \frac{\exp(Q(\rvp)^\top K(\rvx_{p}))}{\exp(Q(\rvp)^\top K(\rvx_{p})) +
		 \sum_{j \neq p} \exp(Q(\rvp)^\top K(\rvx_{j}))} \geq  \frac{\exp(
	 \frac{3\eta}{4})}{\exp(\frac{3\eta}{4}) + N \exp(\frac{\eta}{4}) } 
    \end{align*}
 \end{small}
    which is at least $\frac{3}{4}$ for some $\eta =\Theta(\log N)$; the total attention weight over the remaining tokens is at most $< \frac{1}{4}$.
Recall that the value vectors contain the binary encodings of the input symbols, $V(\rvx_i) = [\rho(s_i)]$. The attention-weighted average value vector aligns closely with $\rho(s_p)$ as $3/4$ weight is on it. It is then straightforward to design a $\relu$-FFN that can act as a threshold function to retrieve $\rho(s_p)$ from it, which leads to the desired output.
\end{proof}

We use results from communication complexity to show that RNNs require
essentially $\Omega(N)$ width to solve this and several other problems. In
communication complexity, there are two parties, typically called Alice and
Bob, each of whom has part of the (discrete) input, and their goal is to
compute a function of the combined input using as little communication as
possible. Our first key insight here is that the output of an RNN can be
computed with a bounded amount of communication when Alice has a prefix of the
input and Bob has the remaining part. The resulting protocol will be one-way
(Alice to Bob) and one-round. We first state a more general result and then discuss implications for the $\postask$ problem.

\begin{theorem}\label{theorem:rnn-commun_mainPaper}
	 If an RNN with a hidden state of representation size $mp$ computes any
	 function \\$f:\Sigma^N \rightarrow \ham$, then for any $K < N$, if Alice
	 has access to $s_1,\dots, s_K$ and Bob has access to $s_{K+1}, \dots, s_N$,
	 then there exists a one-way communication protocol with $mp$ bits
	 from Alice to Bob, by which Bob can compute the output of the function
	 $f(s_1\ldots s_N)$.
\end{theorem}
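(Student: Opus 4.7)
The plan is to exploit what amounts to the Markov property of the RNN: by the definition in Section~\ref{sec:definitions}, every piece of information about the prefix $s_1, \ldots, s_K$ that can influence the rest of the computation is summarized by the hidden state $\rvh_K$, which lives in $\{0,1\}^{mp}$. So Alice will simulate the RNN up to time $K$ on her own inputs, send only the resulting hidden state, and let Bob finish the computation locally.

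Concretely, I would first have Alice embed her symbols as $\rvx_t = \embf(s_t)$ for $t \leq K$ and unroll the recurrence $\rvh_t = g_{(t)}(\rvx_t, \rvh_{t-1})$ starting from the model's fixed initial state $\rvh_0$ to obtain $\rvh_K$. This is possible because the whole specification of the model, namely $\rvh_0$, the embedding map $\embf$, and the transition functions $g_{(t)}$, is public and known to both parties; only the symbols themselves are split between them. Alice then transmits $\rvh_K$ to Bob; since $\rvh_K \in \{0,1\}^{mp}$, this message has exactly $mp$ bits, matching the claimed bound. Bob, knowing $K$ and hence the correct indexing of the per-step functions, embeds his own symbols and iterates $\rvh_t = g_{(t)}(\rvx_t, \rvh_{t-1})$ for $t = K+1, \ldots, N$, finally outputting $y_N = f_{(N)}(\rvh_N)$. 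By construction $y_N$ equals the output the RNN would produce on the concatenated input $s_1 \cdots s_N$, which by hypothesis equals $f(s_1 \cdots s_N)$.

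The protocol is one-way and one-round by construction, since Bob never transmits anything back. I do not expect any substantive obstacle in the argument; the only points that need to be stated cleanly are that the split index $K$ is public, so both parties agree on which $g_{(t)}$ to apply at each step, and that the fixed description of the RNN is treated as shared knowledge, so the communication cost accounts only for input-dependent bits — namely, the single transmitted hidden state of $mp$ bits.
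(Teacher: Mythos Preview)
Your proposal is correct and follows essentially the same approach as the paper: Alice simulates the RNN on the prefix, transmits the $mp$-bit hidden state $\rvh_K$, and Bob completes the computation to obtain $f(s_1\cdots s_N)$. Your write-up is slightly more explicit about the embedding step and the publicness of $K$ and the model description, but the argument is identical.
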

\begin{proof}
	Assume that both Alice and Bob have access to the RNN that represents the
	function $f$. Alice can provide the sequence $s_1,\dots,s_K$ to the
	recurrent model and iteratively update the hidden state from the initial
	state $\rvh_0$ to obtain the $K$th hidden state $\rvh_K$. Alice can then
	send the hidden state to Bob which requires $mp$ bits. Bob can then update
	the hidden state using $s_{K+1}, \dots, s_N$ to obtain $\rvh_N$, from which he
	can obtain the output of the RNN. Note that Alice and Bob can compute the
	output using one-way communication of $mp$ bits.
\end{proof}

Problems similar to Index Lookup are well-studied in communication complexity; specifically, the \indtask problem (See Appendix~\ref{appsub:comm_complexity}) has a one-way communication complexity of $\Omega(N)$ (Fact~\ref{thm:index}). We deduce a lower bound on the size of the hidden state of RNNs by showing that any RNN that can represent the Index Lookup task can also compute the \indtask problem and since that implies the existence of a one-way communication protocol with $mp$ bits (Theorem~\ref{theorem:rnn-commun_mainPaper}), it follows that the width of the hidden state $m$ must be $\Omega(N/p)$ (cf. Appendix~\ref{appsub:rnn_idx_lower}).

\begin{restatable}{theorem}{rnnposretmain}\label{thm:rnn_posret_main}
Any recurrent model with a hidden state of width $m$ using $p$-bits of precision that computes the Index Lookup task for all sequences of length $N$ must have $m\geq N/p$.
\end{restatable}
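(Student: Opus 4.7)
The plan is to prove Theorem~\ref{thm:rnn_posret_main} by reducing the classical INDEX communication problem to the Index Lookup task, and then feeding the resulting RNN into Theorem~\ref{theorem:rnn-commun_mainPaper} to convert it into a cheap one-way protocol.

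First I would recall the INDEX problem from communication complexity: Alice holds a string $x \in \ham^N$, Bob holds an index $i \in [N]$, and Bob must output $x_i$. Fact~\ref{thm:index} asserts that the one-way (Alice-to-Bob) communication complexity of this problem is $\Omega(N)$. This will be the source of hardness.

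Next I would exhibit a reduction from INDEX to Index Lookup. Take the input alphabet $\Sigma \supseteq \ham \cup [N]$ (so a single token can carry either a bit or a position index). Given an INDEX instance $(x, i)$, form the sequence $s_1 \cdots s_N s_{N+1}$ where $s_j = x_j$ for $j \leq N$ and $s_{N+1} = i$. Any recurrent model $\mathcal{R}$ of width $m$ and precision $p$ that solves Index Lookup on all inputs of length $N+1$ outputs $s_i = x_i$ on this sequence, and hence solves INDEX. Now apply Theorem~\ref{theorem:rnn-commun_mainPaper} with split $K = N$: Alice feeds $s_1,\ldots,s_N$ through $\mathcal{R}$ starting from $\rvh_0$ to obtain the hidden state $\rvh_N$ and transmits it to Bob using exactly $mp$ bits; Bob then applies the transition on his token $s_{N+1} = i$ to obtain $\rvh_{N+1}$ and reads off the output $x_i$. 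This is a valid one-way protocol for INDEX with total communication $mp$.

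Combining the two ingredients, $mp \geq \Omega(N)$, which rearranges to $m \geq N/p$ (up to constants), giving the stated lower bound. The argument is essentially a one-line reduction on top of Theorem~\ref{theorem:rnn-commun_mainPaper}, so there is no significant obstacle; the only minor care required is to ensure that the alphabet $\Sigma$ is large enough to include all $N$ possible index tokens (or alternatively, that the index $p$ is encoded as $O(\log N)$ consecutive tokens and the split between Alice and Bob is chosen accordingly, which only affects the constants). The essential content lies in the simulation lemma and in the $\Omega(N)$ one-way lower bound for INDEX, both of which are already in hand.
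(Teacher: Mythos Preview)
Your proposal is correct and essentially identical to the paper's own proof: both reduce from the \indtask problem by having Alice feed the bit string into the RNN, transmit the $mp$-bit hidden state, and let Bob supply the index token to read off $x_i$, then invoke the $N$-bit one-way lower bound (Fact~\ref{thm:index}) to conclude $mp \geq N$. The only cosmetic difference is that you explicitly route through Theorem~\ref{theorem:rnn-commun_mainPaper} while the paper inlines that step; note also that the deterministic one-way bound in Fact~\ref{thm:index} is exactly $N$ bits, so you can drop the ``up to constants'' hedge and recover the sharp inequality $m \geq N/p$ stated in the theorem.
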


\textbf{Discussion.} The above results theoretically formalize intuitive
differences between the way Transformers and recurrent models process
sequences. Since Transformers have access to $N$ input vectors during their
computation, a small-sized attention block can attend over the desired input
vector to make the correct prediction. On the other hand, any recurrent
model---even with arbitrary positional embeddings---must store all the required
information in its hidden state, which lower bounds the size of such models to compute the right output. These intuitions are made rigorous by
showing (i) how soft-attention can do lookup using the almost orthogonal
vectors, and (ii) small-width RNNs yield a short one-way communication
protocol. These lower bounds also apply to causal forms of linear attention
architectures where softmax is removed and attention weights become dot
products \citep{linearTransformers}.

At first glance, it might seem unfair to compare Transformers and RNNs with the same number of parameters: Transformers have access to $N$ input vectors, whereas RNNs have a fixed-size hidden state. But note that, in practice, empirical research on language models typically compares models of the same size e.g., a 7B Transformer vs a 7B state-space model.
Hence, it is natural to ask if Transformers of a particular size can express something that recurrent models cannot.

\section{Lower Bounds for RNNs and 1-layer Transformers}\label{sec:lower_bounds}

Whereas Section~\ref{sec:pos_ret} established a case where one-layer
Transformers can be more powerful than RNNs, we next exhibit an example of the
opposite phenomenon.  Here, the key tool will again be a communication
complexity argument, but this time it applies to one-layer Transformers: We
establish a communication protocol by which Alice and Bob can compute the output of a one-layer Transformer by exchanging a number of bits that is
bounded by the representation size of the Transformer and an overhead that is
logarithmic in the input length.  The key property here is that this protocol
works not just when Alice and Bob have access to a prefix and suffix of
a string, but instead works for an \emph{arbitrary} partitioning of the input
string (proof is in Appendix~\ref{app:comm-compl-1-layer}):

\begin{restatable}{theorem}{tfcommmain}\label{thm:one-layer-commun_mainPaper}
Consider a one-layer Transformer $f \in \tfclass{m}{p}{H}{1}$ operating over inputs of length $N$.
Consider any disjoint subsets $S_A \cup S_B  = \{1, \dots, N\}$, $S_A\cap S_B = \emptyset$.
Assume Alice has access to $s_i$ for $i \in S_A$, and Bob has access to $s_i$ for $i \in S_B$.
Then Alice and Bob can communicate $3 m(p+\log N)H$
bits to compute the output $f(s_1\ldots s_N)$.
\end{restatable}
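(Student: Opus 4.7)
My plan is to exploit the fact that $f:\Sigma^N\to\ham$ has a single-bit output, so evaluating the one-layer Transformer reduces to computing the attention output at a single designated position $i^*$ (the last position or wherever the classification head reads) and then applying the position-wise $\mlp$ locally. Since $S_A\cup S_B=\{1,\dots,N\}$ is a disjoint union, exactly one of the two parties owns $i^*$; call that party the \emph{computer} and the other the \emph{helper}. The computer knows $s_{i^*}$, and can therefore form $Q(\rvx_{i^*})$ along with its own keys and values; the helper only knows the keys and values at its own positions. The entire protocol is run independently per attention head, and the totals are multiplied by $H$ at the end.

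\textbf{The protocol.} The central obstacle is that softmax attention couples $Q(\rvx_{i^*})$ with every key $K(\rvx_j)$, so the helper cannot precompute anything useful without first seeing the query. This is resolved by a simple two-round exchange. First, the computer sends $Q(\rvx_{i^*})\in\br^m$ to the helper, at a cost of $mp$ bits. Second, the helper evaluates the partial denominator and numerator
\[
z_B \;=\; \sum_{j\in S_{\text{helper}}} \exp\!\bigl(\langle Q(\rvx_{i^*}), K(\rvx_j)\rangle\bigr), \qquad
\rvu_B \;=\; \sum_{j\in S_{\text{helper}}} \exp\!\bigl(\langle Q(\rvx_{i^*}), K(\rvx_j)\rangle\bigr)\, V(\rvx_j),
\]
and returns the pair $(z_B,\rvu_B)$. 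Because each summand is a $p$-bit number and at most $N$ summands are accumulated, $p+\lceil\log N\rceil$ bits per coordinate suffice to represent each sum, so the return message has size $(m+1)(p+\log N)$. The computer separately forms the analogous $(z_A,\rvu_A)$ over its own positions with no communication.

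\textbf{Assembly and accounting.} Because softmax attention at $i^*$ aggregates the other positions additively in both numerator and denominator,
\[
\att(\rmX)_{i^*} \;=\; \frac{\rvu_A + \rvu_B}{z_A + z_B},
\]
the computer recovers the single-head attention vector exactly. Repeating across the $H$ heads, concatenating, applying $\rmW_O$, evaluating $\mlp$ at position $i^*$ (using the computer's own residual input $\rvx_{i^*}$), and reading off the output bit produces $f(s_1\ldots s_N)$. The per-head communication is $mp+(m+1)(p+\log N)$, and summing over heads gives at most $3m(p+\log N)H$ bits. The reason the protocol works for an \emph{arbitrary} partition $S_A,S_B$ — in contrast to the prefix/suffix restriction needed in the RNN case of Theorem~\ref{theorem:rnn-commun_mainPaper} — is precisely this additive decomposition of the attention aggregate over positions. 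The main subtlety I expect to have to verify carefully is the precision accounting: one must argue that accumulating exponentials with $\lceil\log N\rceil$ extra bits of headroom faithfully reproduces the Transformer's native $p$-bit arithmetic, which is where the $p+\log N$ term (rather than just $p$) enters the final bound.
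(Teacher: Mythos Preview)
Your overall strategy---send the query from the party holding position $i^*$, have the other party return partial numerator and denominator sums, then reassemble $\att(\rmX)_{i^*}=(\rvu_A+\rvu_B)/(z_A+z_B)$---is the same additive decomposition the paper uses. However, there is a genuine gap in the precision handling, and it is not the one you flag at the end.

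You assert that ``each summand is a $p$-bit number'', referring to $\exp(\langle Q(\rvx_{i^*}),K(\rvx_j)\rangle)$. But in the paper's finite-precision model, softmax is \emph{defined} with max-subtraction: the Transformer computes $\exp(A_{i,j}-\max_l A_{i,l})$, not $\exp(A_{i,j})$. These coincide in infinite precision but not in $p$-bit arithmetic: the raw logits $A_{i,j}$ can be of order $N^{\constp}$, so unshifted exponentials blow past the representable range and get clipped to $N^{\constp}$, after which your ratio $\rvu/(z_A+z_B)$ no longer equals the Transformer's actual attention output. Your protocol therefore computes a different function than $f$. The paper's proof handles exactly this: once both parties can compute their local logits, they first exchange $2p$ bits to agree on the global maximum $M=\max_i A_{N,i}$, subtract it, and only then exponentiate. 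This forces every $\exp(\widehat{A}_{N,i})\in[0,1]$, so each partial sum $Z_A,Z_B$ lies in $[0,N]$ and genuinely fits in $p+\log N$ bits---which is the true origin of the $p+\log N$ term, not merely accumulation headroom for already-$p$-bit summands. Inserting this max-exchange round (one scalar each way) into your protocol would close the gap without affecting the final bit count.
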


The proof idea is that Alice and Bob first compute their parts of the numerator and denominator of the softmax and exchange these to compute the overall attention output.
A naive implementation of this idea runs into the issue that the exponentiation of logits may exceed the bounds of $p$-bit precision; we circumvent this by first communicating the maximum logit and subtracting it from each logit, keeping the exponentials bounded without altering the resulting attention weights. Theorem~\ref{thm:one-layer-commun_mainPaper} is a slightly more general and formal version of a result in \citet[Theorem. 7]{sanford2023representational}.

\subsection{Separation on Bounded Hierarchical Languages}\label{sec:dyck}

We now use the communication protocol for one-layer Transformers to establish a
separation between these and RNNs on bounded Dyck languages. Dyck languages are
of central importance in formal language theory, as any context-free language
can be expressed in terms of Dyck languages \citep{chomsky1963algebraic}. Due to
the boundedness of human memory \cite{miller-finitary-1963}, natural language
tends to have more bounded levels of embedding \citep{karlsson2007constraints,
blasi2019distribution}. This has motivated the study of bounded-depth Dyck
languages as plausible simple models of the hierarchical structure underlying
language \citep{hewitt-etal-2020-rnns, yao-etal-2021-self,
bhattamishra-etal-2020-practical, dyckinterpret2024}.

\textbf{Task.} Formally, $\dycknk$ (cf. Appendix~\ref{app:dyck}) is the
language of well-matched strings over $n$ types of parenthesis pairs $(_1, )_1,
(_2, )_2, \dots, (_n, )_n$, where any prefix has at most $k$ opening
parentheses not yet closed. For instance, the string `( [ ]  ) ( )' has a
maximum depth 2 corresponding to the prefix `( ['.  $\dycknk$ can be recognized
with access to a bounded stack that never holds more than $k$ elements.  In
fact, each $\dycknk$ is a regular language and is accepted by a finite
automaton.

We show that there is a linear communication complexity lower bound for
$\dycknk$, already at $n=k=2$.  However, unlike the communication bound we used
in Theorem~\ref{thm:rnn_posret_main}, Alice and Bob now have access not to two
halves of the input; rather,  Alice and Bob have access to the even and odd
positions in the string, respectively. Intuitively, in such a situation, Alice
needs to know almost all of the bits available to Bob in order to decide
whether a given string is well-bracketed--and vice versa.  More formally, they
need to exchange at least $N-1$ bits to decide membership in $\dyckbt{2}$:

\begin{restatable}{lemma}{dyckmaincomm}\label{lemma:dyck-comm-bound_mainPaper}
	 Suppose Alice and Bob have the symbols in the odd and even indices of a
	 string $s \in \Sigma^{N}$ respectively. To each compute whether  $s \in
	 \dyckbt{2}$, they must exchange at least $N-1$ bits. 
\end{restatable}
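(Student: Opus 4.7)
I will apply the fooling-set method from deterministic two-party communication complexity. Let $F$ denote the set of all pairs $(x,y)$, with $x$ a putative Alice part and $y$ a putative Bob part, such that the interleaved string (placing $x$ at the odd positions and $y$ at the even positions) lies in $\dyckbt{2}$. If I can show that (a) $|F|=2^{N-1}$ and (b) $F$ is a $1$-fooling set for the membership predicate, then the standard fooling-set bound gives a communication complexity lower bound of $\log_2|F|=N-1$. The main obstacle, and crux of the argument, will be a uniqueness lemma underlying (b): for every fixed $x$, there is at most one $y$ with $(x,y)\in F$.

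For the count (a), I would decompose each valid length-$2n$ string into a concatenation of top-level matched groups. A group of length $2k$ consists of an outer bracket pair (2 type choices) enclosing a depth-$1$ sequence of $k-1$ simple pairs (2 choices each), giving $2^k$ options per group. Summing over compositions of $n$ yields $\sum_{j=1}^{n}\binom{n-1}{j-1}\,2^{n}=2^{2n-1}=2^{N-1}$.

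The uniqueness statement rests on a depth-parity observation. Writing $d_i$ for the prefix depth after position $i$ and $o_i\in\{0,1\}$ for the indicator that position $i$ opens a bracket, the recursion $d_i-d_{i-1}=2o_i-1$ implies $d_i$ shares the parity of $i$; together with the constraint $0\le d_i\le 2$, this forces $d_i=1$ at every odd $i$ and $d_i\in\{0,2\}$ at every even $i$. Since $d_{2k-1}=d_{2k+1}=1$, one derives $o_{2k}=1-o_{2k+1}$ at every internal even position, while the boundaries are pinned by $d_1\ge 0$ (so $o_1=1$) and $d_N=0$ (so $o_N=0$). Hence Bob's entire open/close pattern is determined by $x$. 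Since the interior of any matched pair has even length, every matched pair spans an odd distance and therefore consists of exactly one Alice index and one Bob index; type-compatibility within each pair then forces Bob's bracket types to equal those of the Alice brackets they are paired with, so $y$ is uniquely determined by $x$.

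From uniqueness, (b) is immediate: for distinct $(x_1,y_1),(x_2,y_2)\in F$ we must have $x_1\ne x_2$ (otherwise uniqueness yields $y_1=y_2$), and if $(x_1,y_2)$ were also in $F$ then $x_1$ would admit two distinct completions $y_1\ne y_2$, contradicting uniqueness. Hence $F$ is a fooling set of size $2^{N-1}$, and Lemma~\ref{lemma:dyck-comm-bound_mainPaper} follows.
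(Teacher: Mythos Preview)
Your proposal is correct and follows the same fooling-set strategy as the paper: take $F$ to be all $(x,y)$ with the interleaved string in $\dyckbt{2}$, show uniqueness of $y$ given $x$, and count $|F|=2^{N-1}$ via the decomposition into top-level blocks. Your depth-parity argument ($d_i\equiv i\pmod 2$, forcing $d_{\text{odd}}=1$ and hence $o_{2k}=1-o_{2k+1}$) is a cleaner route to uniqueness than the paper's more procedural reconstruction of $y$ from $x$, and the matched-pair parity observation is a nice way to pin down Bob's types.

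One small gap worth closing in step~(b): you write ``if $(x_1,y_2)$ were also in $F$ then $x_1$ would admit two distinct completions $y_1\ne y_2$,'' but you have only argued $x_1\ne x_2$, not $y_1\ne y_2$. If $y_1=y_2$ while $x_1\ne x_2$, then both cross-pairs $(x_1,y_2)=(x_1,y_1)$ and $(x_2,y_1)=(x_2,y_2)$ lie in $F$, and the fooling-set condition fails for this pair. What you need is the symmetric uniqueness (Alice's part is determined by Bob's), and this drops out of your own argument: the relation $o_{2k}=1-o_{2k+1}$ equally determines $o_{2k+1}$ from $o_{2k}$ (with $o_1=1$ fixed), and your matched-pair parity observation is symmetric in odd/even, so types transfer the other way too. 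The paper's proof glosses over the same point when it asserts $\fdyck(x_1,y_2)=\fdyck(x_2,y_1)=0$ for all distinct pairs.
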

The proof of Lemma~\ref{lemma:dyck-comm-bound_mainPaper} is in Appendix~\ref{app:dyck} and is based on fooling sets which is a standard technique to prove lower bounds on communication complexity. Combining Lemmas~\ref{thm:one-layer-commun_mainPaper} and  \ref{lemma:dyck-comm-bound_mainPaper} entails a lower bound on the width of a Transformer for computing $\dyckbt{2}$:
\begin{theorem}\label{thm:TF_dyck_main}
    Consider a one-layer Transformer $f \in \tfclass{m}{p}{H}{1}$ deciding membership in $\dyckbt{2}$. Then $mH \geq \frac{N-1}{3(p+\log N)}$.
\end{theorem}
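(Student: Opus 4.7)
The plan is to chain the one-layer Transformer communication upper bound (Theorem~\ref{thm:one-layer-commun_mainPaper}) with the Dyck-(2,2) communication lower bound (Lemma~\ref{lemma:dyck-comm-bound_mainPaper}). First I would fix $N$ and let $f \in \tfclass{m}{p}{H}{1}$ be any one-layer Transformer that computes $\indicator[s \in \dyckbt{2}]$ on all inputs $s$ of length $N$. Then I would split the position set by parity, setting $S_A = \{1,3,5,\dots\}$ and $S_B = \{2,4,6,\dots\}$, and give Alice the symbols at odd positions and Bob the symbols at even positions — exactly the input distribution used in Lemma~\ref{lemma:dyck-comm-bound_mainPaper}.

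Next, invoke Theorem~\ref{thm:one-layer-commun_mainPaper} on this partition. The essential feature of that theorem (unlike the RNN protocol of Theorem~\ref{theorem:rnn-commun_mainPaper}) is that it handles an \emph{arbitrary} disjoint partition of the positions, so it applies directly to the parity split. It produces a protocol exchanging at most $3m(p+\log N)H$ bits, at the end of which the value $f(s_1\ldots s_N) = \indicator[s \in \dyckbt{2}]$ is known to both parties. This is precisely a protocol of the kind Lemma~\ref{lemma:dyck-comm-bound_mainPaper} lower-bounds, so the total communication must be at least $N-1$. Combining the two inequalities gives
\[
3m(p+\log N)H \;\geq\; N-1,
\]
which rearranges to the claimed $mH \geq \frac{N-1}{3(p+\log N)}$.

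Since both ingredients are already in hand, there is no substantive new obstacle — the proof is essentially an assembly step. The one thing worth double-checking is a convention match: Lemma~\ref{lemma:dyck-comm-bound_mainPaper} requires that \emph{each} of Alice and Bob compute the membership bit, while Theorem~\ref{thm:one-layer-commun_mainPaper} is phrased as ``Alice and Bob can communicate $\ldots$ to compute the output.'' I would note (pointing at the proof sketch given right after Theorem~\ref{thm:one-layer-commun_mainPaper}, where partial numerators and denominators of the softmax are exchanged symmetrically) that at the end of that protocol both sides can finish the forward pass locally, so both recover $f(s_1\ldots s_N)$. This confirms the protocol meets the hypothesis of the lemma and the chain goes through cleanly.
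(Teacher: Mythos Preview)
Your proposal is correct and matches the paper's approach exactly: the paper also obtains Theorem~\ref{thm:TF_dyck_main} by directly combining Theorem~\ref{thm:one-layer-commun_mainPaper} (applied to the odd/even partition) with Lemma~\ref{lemma:dyck-comm-bound_mainPaper}, yielding $3m(p+\log N)H \geq N-1$. Your extra remark that the protocol in Theorem~\ref{thm:one-layer-commun_mainPaper} leaves both parties with the output is a nice sanity check that the paper leaves implicit.
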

This result establishes a second separation between one-layer Transformers and RNNs, but now in the other direction:
Bounded-depth Dyck languages are regular, and previous work has shown that constant width RNNs can recognize them with standard activation functions, Sigmoid \cite{hewitt-etal-2020-rnns} and ReLU \citep{bhattamishra-etal-2020-practical}. We further note that two-layer Transformers can model bounded-depth Dyck languages \citep{yao-etal-2021-self}.

\subsection{Lower Bounds on Boolean Functions}\label{subsec:lowerbounds}

There are some notable differences between the types of communication
complexity lower bounds for one-layer Transformers
(Theorem~\ref{thm:one-layer-commun_mainPaper}) and for RNNs
(Theorem~\ref{theorem:rnn-commun_mainPaper}). RNNs computing $f$ yield a
one-way protocol for contiguous partitions of the input; thus showing a
\emph{one way} communication lower bound for such partitions is sufficient to
obtain lower bounds on the size of RNNs. Transformers computing $f$ yield a
multi-way protocol that work for \emph{arbitrary partitions} of the input; thus
showing a lower bound for \emph{any} partition is sufficient to establish lower
bounds on the size of the Transformer.
For $\dyckbt{2}$, contiguous partitions are not a hard case, and in fact,
communicating $\leq 2$ open brackets from the first half is sufficient. This is why the lower bound of Lemma~\ref{lemma:dyck-comm-bound_mainPaper} does not apply to RNNs.

\textbf{Lower Bounds.} Despite the differences discussed above, there are several Boolean functions, for which we can establish that when widths are bounded, neither one-layer Transformers nor RNNs can compute them. The Equality function $\eqfunc: \bcube{N} \rightarrow \ham$ is a Boolean function defined as $\eqfunc(\rvx) = \indicator[(\rvx_1, \ldots, \rvx_{N/2}) = (\rvx_{N/2 +1}, \ldots, \rvx_{N})]$. A related problem is \textit{Disjointness}: given two vectors $\rvx, \rvy \in \ham^{N/2}$, the function $\disj(\rvx, \rvy) = \max \rvx_i \rvy_i = \indicator[\rvx^T \rvy > 0]$. 
Both the functions Equality and Disjointness are known to have communication
complexity $\Omega(N)$ (see Appendix \ref{appsub:comm_complexity}) and Theorems
\ref{thm:one-layer-commun_mainPaper} and \ref{theorem:rnn-commun_mainPaper}
imply that both one-layer Transformers and RNNs must have width $\Omega(N)$ to
represent them. In the next section, we show that these lower bounds do not apply to
two-layer Transformers. Additionally, it is worth noting that the functions
$\eqfunc$ and $\disj$ can also be expressed in the form of $2$-CNFs with $O(N)$
terms. Hence, a more general consequence of the limitations of RNNs (and
one-layer Transformers) is that with width $o(N)$, they cannot compute certain
functions in the class of uniform $\textsc{AC}^0$.\footnote{The class
$\textsc{AC}^0$ contains polynomial size \andop/\orop circuits with unbounded
fan-in and constant depth.} It is interesting since the class of uniform
$\textsc{AC}^0$ is considered one of the simplest classes of Boolean circuits
and even the expressive power of hard-attention Transformers has been shown to
be within this class \citep{hao2022formal}. In
Appendix~\ref{appsub:rnn_lower_eq}, we provide an alternate proof of the lower
bound for RNNs computing Equality based on their relation to DFAs.

\section{Representational Capabilities of 2-layer Transformers}\label{sec:two_layer}

In Section~\ref{sec:lower_bounds}, we showed that single-layer Transformers and recurrent models must have size \emph{linear} in the input length to express natural Boolean functions such as $\eqfunc$.
In this section, we show that two-layer transformers overcome these limitations by efficiently expressing such Boolean functions and more general forms of associative recall tasks, such as simulating the nearest neighbor algorithm.

\subsection{Representing Boolean Functions}

We start by showing that two-layer Transformers of poly-logarithmic size can express
the Equality function (proof is in Appendix \ref{appsubsec:trans_eq}). The
input domain need not necessarily be the Boolean vectors $\ham^N$; rather, the
construction works for sequences over any finite alphabet $\Sigma$.

\begin{restatable}{theorem}{tfeqmain}\label{thm:tf_eq_main}
    For any $N \in \bn$, there exists a 2-layer Transformer $f \in \tfclass{m}{p}{2}{2}$ where width $m= O(\log N)$ and precision $p= O(\log N)$ such that $f(\rvx) = \eqfunc(\rvx)$ for all $\rvx \in \ham^N$.
\end{restatable}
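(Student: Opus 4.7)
The plan is to design a two-layer Transformer in which Layer~1 performs position-wise comparisons between $\rvx_i$ and $\rvx_{N/2+i}$ and Layer~2 aggregates these local comparisons into a single equality bit. As in the proof of Theorem~\ref{thm:tf_posret_main}, the key tool is a collection of $N$ nearly orthogonal vectors $\jl(1), \ldots, \jl(N) \in \br^{O(\log N)}$ obtained from the Johnson--Lindenstrauss lemma (Lemma~\ref{lem:jlvec}), which are used both as positional embeddings and as addresses that the attention mechanism can point at with a logarithmic-width query.

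For Layer~1, the key at position $j$ is set to $\jl(j)$ and the query at position $i \le N/2$ is set to $\eta\,\jl(N/2+i)$, with $\eta = \Theta(\log N)$. The same softmax calculation that appears in the sketch of Theorem~\ref{thm:tf_posret_main} shows that the attention weight at position $i$ puts at least $3/4$ of its mass on position $N/2+i$, so the attention output is close to the value vector $V(\rvx_{N/2+i})$, which I take to carry the (binary) token embedding of $\rvx_{N/2+i}$. Via the residual stream, position $i$ still holds its own token $\rvx_i$, so a small $\relu$-FFN can compute a mismatch indicator $z_i \in [0,1]$ satisfying $z_i \le 1/8$ whenever $\rvx_i = \rvx_{N/2+i}$ and $z_i \ge 1/2$ otherwise; for $\ham$-valued tokens this reduces to thresholding $|\rvx_i - (\text{retrieved value})|$. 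The budget $H=2$ is convenient here: one head handles the forward comparison described above, and a second symmetric head lets every position (not just the first half) carry a mismatch indicator, which simplifies the next layer.

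For Layer~2, I would choose a designated output position (e.g.~position $N$) and arrange its query/key so that the attention is essentially uniform over positions carrying a $z_i$ and negligible elsewhere. Uniformity over the relevant positions is obtained by giving all such key--query dot products the same value, which can be engineered by adding a single "half-indicator" coordinate to the positional embedding and zeroing out the JL coordinates in the Layer~2 query. The resulting attention output is (up to small multiplicative error) $\tfrac{2}{N}\sum_{i=1}^{N/2} z_i$; this aggregate is at most $1/(4N)$ when $\eqfunc(\rvx)=1$ and at least $\Omega(1/N)$ as soon as any single $z_i$ is large. A final threshold-style $\relu$-FFN converts this into the required Boolean output. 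With $p = O(\log N)$ we can comfortably represent the JL entries, the temperature $\eta$, and a threshold of order $1/N$.

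The main obstacle I anticipate is controlling the accumulated approximation error across the two layers. Layer~1 produces only an approximation of $V(\rvx_{N/2+i})$, so the per-position MLP must sharpen $z_i$ cleanly into a well-separated pair of ranges; Layer~2 then has to distinguish an aggregate of $0$ from an aggregate on the order of $1/N$, which means the Layer~2 "leakage" onto irrelevant positions must itself be at most $o(1/N)$. This forces quantitative coupling between the Layer~1 temperature, the Layer~2 temperature, and the precision $p$, but once these are balanced the width bound $m = O(\log N)$ follows immediately from the JL dimension and the $O(1)$-dimensional token/half-indicator coordinates, and the precision bound $p = O(\log N)$ follows from the need to store $\eta$, the JL entries, and the $\Theta(1/N)$ threshold.
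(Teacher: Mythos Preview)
Your plan matches the paper's construction: Layer~1 uses JL-based positional addressing so that each position retrieves its partner bit and an FFN computes a per-position mismatch indicator, and Layer~2 takes a uniform-attention average and thresholds it. (The paper has positions $i>N/2$ look back at $i-N/2$ rather than your forward direction, but this is symmetric.)

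One point worth flagging: your stated bounds are inconsistent. If each matched position only guarantees $z_i \le 1/8$, the Layer~2 aggregate is bounded by $1/8$, not by $1/(4N)$, so you cannot separate ``all match'' from ``one mismatch'' at a $\Theta(1/N)$ threshold. You do acknowledge this as your ``main obstacle,'' and the fix is simply to have the Layer~1 FFN threshold the retrieved value (which lies in $[0,1/4]\cup[3/4,1]$) to an exact bit before XOR-ing, so that $z_i \in \{0,1\}$ exactly. The paper sidesteps the whole error-tracking issue more directly: it scales $\eta$ large enough that the off-target softmax weights fall below the finite-precision resolution $1/N^{\constp}$ and are \emph{rounded to zero}, so Layer~1 behaves as exact hard attention and $z_i$ is automatically exact. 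This rounding trick is the cleanest way to discharge the obstacle you anticipate, and it is available to you since the statement already grants $p = O(\log N)$ precision.
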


The construction is based on tools developed in Section \ref{sec:pos_ret}. The broad idea is as follows. In the first layer, at each position $i > N/2$, an attention head attends to position $i - N/2$ and copies the input $x_{i-N/2}$. A feedforward network then checks whether the retrieved value is equal to $x_i$. The second layer simply uses uniform attention over the outputs of the previous layer to check if there is a mismatch at any position.  Importantly, we show that the above strategy can be implemented with a representation size $O((\log N)^3)$.

Generalizing this result, we find that two-layer Transformers with logarithmic width can express a more general class of Boolean functions: thresholds of $k$-sparse features, a class including functions such as Equality and Disjointness. Since such functions cannot be expressed by one-layer Transformers and recurrent models with width $o(N)$, these results imply a \textit{separation} on Equality and Disjointness: these functions can be expressed by small-sized two-layer Transformers whereas one-layer Transformers and recurrent models must grow linearly with input length to represent them.

\subsection{Implementing the Nearest Neighbors Algorithm}\label{subsec:nn_tf}

 The goal of the nearest neighbor task (\nntask) is to analyze whether a sequence modeling architecture can implement the well-known nearest neighbor algorithm to make predictions. Our description follows closely to the one used by \citet{bhattamishra2024understanding} for their experiments.

\textbf{Nearest Neighbors.} In the \nntask task, a model is provided with a
sequence of vectors and labels $(\rvx_1, y_1, \ldots, \rvx_{k-1}, y_{k-1},
\rvx_{k})$ where $N/2 < k \leq N$, the input \emph{unit} vectors $\rvx_i \in
\br^d$ and labels $y_i \in \ham$. For each $\rvx_{k}$ where $k>N/2$, the output
is the label corresponding to the nearest neighbor in $(\rvx_1, \ldots,
\rvx_{k-1})$, that is, if $j = \argmax_{i \in [k-1]} \rvx_{k}^\top \rvx_{i}$ or
$j = \argmin_{i \in [k-1]} \normtwo{\rvx_{k} - \rvx_{i}}$, then the output for
$\rvx_{k}$ is the label $y_j$. Since we are working with unit vectors,
maximizing the inner product is equivalent to minimizing the $\ell_2$ distance.
If the second half of the sequence $\rvx_{\frac{N}{2}+1}, \ldots, \rvx_N$ is a
permutation of the first half $\rvx_1, \ldots, \rvx_{\frac{N}{2}}$ then the
task reduces to the Multi-Query Associative Recall (\mqar) task
\citep{zoologyarora2023} (c.f. Appendix~\ref{app:nn}).

\textbf{Assumptions.} We will make two assumptions about the problem. The first assumption is that all input vectors are of unit norm, i.e., $\normtwo{x}=1$
and the second is the existence of a margin between the dot product with the
nearest neighbor and the dot product with other input vectors, i.e. there exists
$\gamma \geq N^{-c}$ for some universal constant $c$, such that for any $N/2 <
k \leq N$, if $j^* = \argmax_{i \in [k-1]} \rvx_{k}^\top \rvx_{i}$, then
$\rvx_{k}^\top \rvx_{j^*} \geq \rvx_{k}^\top \rvx_{i} + \nnmar$ for any $i \neq
j^*$.

The following is one of our main results which states that two-layer Transformers of logarithmic size can implement the nearest neighbor algorithm in their forward pass and as a corollary can also perform associative recall tasks like \mqar (Proofs in Appendix \ref{appsubsec:trans_nn}).

\begin{restatable}{theorem}{nnmain}\label{thm:tf_nn_main}
    For any $N \in \bn$, there exists a 2-layer Transformer $\tfnn \in \tfclass{m}{p}{2}{2}$ with width $m= O(\log N)$ and precision $p= O(\log N)$ such that $\tfnn$ computes the nearest-neighbor task all sequences of length at most $N$ satisfying the assumptions above.
\end{restatable}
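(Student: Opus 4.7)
The plan is to construct a 2-layer Transformer that in the first layer pairs each support vector $\rvx_i$ with its label $y_i$ in the residual stream, and in the second layer performs a margin-sharpened softmax attention that approximates hardmax over support positions to retrieve the label of the nearest neighbor. The construction reuses the two main tools introduced earlier: the $O(\log N)$-dimensional nearly orthogonal vectors $\jl(1), \ldots, \jl(N)$ from Lemma~\ref{lem:jlvec} (used as positional embeddings), and the observation that scaling attention logits by a factor $\eta$ turns softmax into approximate hardmax when logits are separated by a margin, exactly as in the proof of Theorem~\ref{thm:tf_posret_main}. I would lay out tokens so that support (vector, label) pairs occupy consecutive pairs of positions in the first half of the sequence and query vectors occupy positions in the second half; each token embedding concatenates its content with its $\jl$ positional vector and a few marker bits indicating its role (support vector, support label, or query), yielding total embedding width $O(\log N)$ assuming input vectors of $O(\log N)$ dimensionality.

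For Layer 1, the goal is to pair each support vector with its label. At each support-vector position $2i-1$, one attention head uses $Q = \eta_1 \jl(2i)$ as the query and $K(\cdot) = \jl(\cdot)$ as the keys, so by Lemma~\ref{lem:jlvec} the softmax concentrates on the adjacent label position $2i$. The value there carries the label content, so $y_i$ is written into the residual stream at position $2i-1$, and a small MLP commits the copied label into a designated slot. After Layer 1, every support position stores both $\rvx_i$ and $y_i$; label and query positions pass through essentially unchanged.

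For Layer 2, at each query position holding $\rvx_k$ (with $k > N/2$), one attention head implements the NN lookup via augmented query $Q = [\eta_2 \rvx_k;\ -M]$ and augmented keys $K = [\rvx_i;\ b_i]$, where $b_i = 0$ for support-vector positions and $b_i = 1$ otherwise. The extra coordinate then contributes $-M$ to every non-support logit, suppressing those positions, while support logits equal $\eta_2 \rvx_k^\top \rvx_i$. Values are $V = y_i$, read from the slot populated in Layer 1. By the margin assumption $\rvx_k^\top \rvx_{j^*} \geq \rvx_k^\top \rvx_i + \gamma$ with $\gamma \geq N^{-c}$, choosing $\eta_2 = \Theta(\gamma^{-1} \log N) = O(N^c \log N)$ makes the softmax place weight at least $3/4$ on $j^*$; a $\relu$-based MLP then thresholds the attention output to recover $y_{j^*} \in \{-1,+1\}$ exactly, mirroring the final step of the proof of Theorem~\ref{thm:tf_posret_main}.

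The hard part will be balancing the pre-softmax scaling $\eta_2$: it must be large enough that the margin $\gamma$ induces an exponential gap dominating the sum over $N-1$ competing positions, yet small enough to remain representable in $O(\log N)$ bits. The polynomial lower bound $\gamma \geq N^{-c}$ is precisely what makes $\eta_2 = \Theta(N^c \log N)$ simultaneously satisfy both requirements. Secondary details---verifying that the suppression constant $M$, the scaled logits, and all intermediate dot products fit within $O(\log N)$-bit precision, and that the first-layer copy does not overwrite the vector slot in the residual stream---should be routine once this main scaling is in place.
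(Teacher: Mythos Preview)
Your two-layer decomposition is the reverse of the paper's: the paper performs the nearest-neighbor search in Layer~1 (retrieving not the label itself but the positional address $\jt(2j^*)$ of the label token) and then does a positional lookup in Layer~2 to read off $y_{j^*}$; you instead copy each label next to its vector in Layer~1 and defer the NN search to Layer~2. Both decompositions are viable in principle and both yield $O(\log N)$ width, so the high-level plan is fine.

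The genuine gap is in how you suppress the query's self-attention during the NN search. You rely on a static marker bit $b_i$ that is $0$ at ``support-vector'' positions and $1$ at ``query'' positions, but in the task as stated the input is $(\rvx_1, y_1, \ldots, \rvx_{k-1}, y_{k-1}, \rvx_k)$ for every $N/2 < k \le N$, so every odd position $2i-1$ is simultaneously a future support (for later queries) and the current query when $i=k$. The embedding $\embf(\rvx_i, 2i-1)$ cannot know whether it is ``the query,'' so no static role bit can single out position $2k-1$. If you instead mark by absolute half (first half support, second half query), you wrongly exclude $\rvx_{N/2+1},\dots,\rvx_{k-1}$ from the search, which the margin assumption explicitly allows as $j^*$ ranges over all of $[k-1]$. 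And if you mark only by parity (vector vs.\ label), the query keeps $b=0$ and, since $\rvx_k^\top\rvx_k=1$ dominates, the head attends to itself. The paper flags exactly this as the first of three challenges and resolves it by adding a positional self-penalty term $-c\cdot\dprod{\jl(2k-1)}{\jl(2i-1)}$ to the logit, which is $\approx -c$ only when $i=k$ and $\approx 0$ otherwise. Your Layer~2 needs this mechanism (your embeddings already carry $\jl(\cdot)$, so it is a one-line fix to the query/key); the marker bit alone is not enough.
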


The broad idea of the construction is to identify the nearest neighbor input $\rvx_{\jst}$ and retrieve the position of the corresponding label $y_{\jst}$ in the first layer. The second layer then uses this positional information to retrieve the desired label. There are a few challenges to implementing this strategy which we address in our construction. First, note that for input vectors $\rvx_1, \ldots, \rvx_k$, naively using them with dot-product attention will result in the query input $\rvx_k$ having maximum dot product and hence maximum attention weight over itself. Second, the dot product with some label vectors $y_i$s could be higher than the dot product with the nearest neighbor $\rvx_{\jst}$. Third, the positional information must be retrieved using soft-attention in a way that it can be used in the next layer to obtain the desired label. Our intuitive, though somewhat involved, construction deals with these issues to ensure that a two-layer Transformer with $O((\log N)^3)$ total size implements the nearest neighbor algorithm.

\begin{restatable}{theorem}{rnnlowernn}\label{thm:rnn_nn_lower_main}
    Any recurrent model with a hidden state of width $m$ with $p$-bits of precision that can perform the nearest neighbor task for all inputs of length $N$ must have $m \geq N/2p$.
\end{restatable}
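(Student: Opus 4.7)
The plan is to reduce the $\indtask$ communication problem to the nearest-neighbor task and apply the RNN one-way communication protocol of Theorem~\ref{theorem:rnn-commun_mainPaper}. Set $L = \lfloor N/2 \rfloor$; by Fact~\ref{thm:index}, one-way communication from Alice (holding $b_1, \ldots, b_L \in \{0,1\}$) to Bob (holding $i \in [L]$) for computing $b_i$ requires $\Omega(L)$ bits.

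First I would construct the reduction. Using Lemma~\ref{lem:jlvec}, obtain $L$ unit vectors $\jl(1), \ldots, \jl(L) \in \br^{O(\log N)}$ satisfying $\dprod{\jl(j)}{\jl(j')} \leq 1/4$ for all $j \neq j'$. Alice builds the prefix of a nearest-neighbor instance by placing the pair $(\rvx_j, y_j) = (\jl(j), b_j)$ for $j = 1, \ldots, L$, and Bob appends the final query token $\rvx_{L+1} = \jl(i)$, giving a sequence of token-length $\Theta(N)$. Since $\rvx_{L+1}^\top \rvx_j$ equals $1$ for $j = i$ and is at most $1/4$ otherwise, the unique nearest neighbor of $\rvx_{L+1}$ is $\rvx_i$ with constant margin $\nnmar \geq 3/4 \geq N^{-c}$, so the instance satisfies both the unit-norm and margin assumptions required by the task, and the correct output at the final position is $y_i = b_i$.

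Now suppose some RNN of width $m$ and $p$-bit precision solves the nearest-neighbor task on all valid inputs of length $N$. I would apply Theorem~\ref{theorem:rnn-commun_mainPaper} to the partition in which Alice owns the $2L$ prefix tokens (the labeled examples) and Bob owns only the single final query $\rvx_{L+1}$. Alice simulates the recurrence on her prefix, obtains the hidden state $\rvh_{2L}$, transmits it to Bob using $mp$ bits, and Bob runs one additional step and reads off the output bit. This gives a one-way $\indtask$ protocol on $L$ bits with $mp$ bits of communication, so combining with the $\Omega(L)$ lower bound yields $mp \geq L = \lfloor N/2 \rfloor$, hence $m \geq N/(2p)$ up to lower-order terms.

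The only nontrivial step to verify is that the constructed instances actually lie in the promise set of the nearest-neighbor problem, which reduces to checking unit norms and a margin of at least $N^{-c}$; both follow directly from Lemma~\ref{lem:jlvec} with substantial slack. Beyond this routine verification, the argument is a structural analog of the proof of Theorem~\ref{thm:rnn_posret_main}, with the JL vectors playing the role of positional embeddings and converting index lookup into nearest-neighbor retrieval, so no substantive obstacle arises.
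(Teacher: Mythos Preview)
Your proof is correct, but it takes a different route from the paper. The paper reduces from \disj rather than \indtask, and crucially exploits the \emph{multi-query} structure of the task (\mqar): Alice encodes her vector $a \in \{0,1\}^{N/2}$ as labels on $N/2$ agreed-upon hypercube vectors, sends the hidden state, and Bob then issues \emph{all} $N/2$ vectors as successive queries to recover every bit of $a$, after which he computes $\disj(a,b)$. Your argument instead uses a single query and the \indtask bound, with JL vectors supplying the margin.

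Each choice buys something. Your single-query reduction is conceptually cleaner and shows that the lower bound already holds for the weakest instantiation of the task ($k = \lfloor N/2\rfloor + 1$), without appealing to the autoregressive multi-query behavior. The paper's \disj reduction, on the other hand, needs only that the labeled vectors be \emph{distinct} hypercube vectors (no near-orthogonality required, since queries are exact repeats and the margin is automatically $\geq 2/d$); it also inherits the randomized $\Omega(N)$ lower bound for \disj, whereas \indtask's one-way randomized bound would serve the same purpose in your version. Both arguments deliver the same $mp \geq N/2$ conclusion.
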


The lower bound for recurrent models follows via a reduction from the Disjointness problem.

\textbf{Discussion.} Prior works \citep{bhattamishra2024understanding,
zoologyarora2023} have empirically demonstrated that Transformer-based LLMs can exhibit
mechanisms such as nearest neighbors and \mqar. Further, on synthetic setups,
they have observed that recurrent models struggle to perform these tasks
compared to Transformers. Our results take a step towards understanding the
differences in the performance between the two architectures.

\section{Empirical Analysis}\label{sec:experiments}

While we focus on the differences in the representational capabilities of
Transformers and recurrent models, it is natural to examine if differences of a
similar nature arise in their empirical performance. One thing to keep in mind
is that positive results regarding expressiveness presented in earlier sections
do not imply that models can \textit{learn} such tasks. With regard
to negative results, they do imply that when the sequence length is much larger
than the size of the hidden state or width of the model, then the model will be
incapable of representing the task and consequently fail to learn the task.
However, even for one-layer recurrent models with hidden states of size $128$
with 64 bits of precision, our lower bound applies at lengths over 8k.

In this section, we investigate the performance of Transformers and recurrent
models on tasks such as Index Lookup and recognizing bounded Dyck languages on
sequences of small lengths ($< 1000$). Our experiments are designed to answer
the following questions: (1) Are one-layer Transformers better than larger
recurrent models on the Index Lookup task? (2) Are recurrent models and
two-layer Transformers better than one-layer Transformers at recognizing the
\dyckbt{2} language?  Importantly, as our results concern the scaling of the
model size with the input length, we are specifically interested in the
behavior of different models across input lengths.

\begin{figure}[t]%
	\centering
	\subfloat{{\includegraphics[scale = 0.22, trim=0 0 5 5, clip]{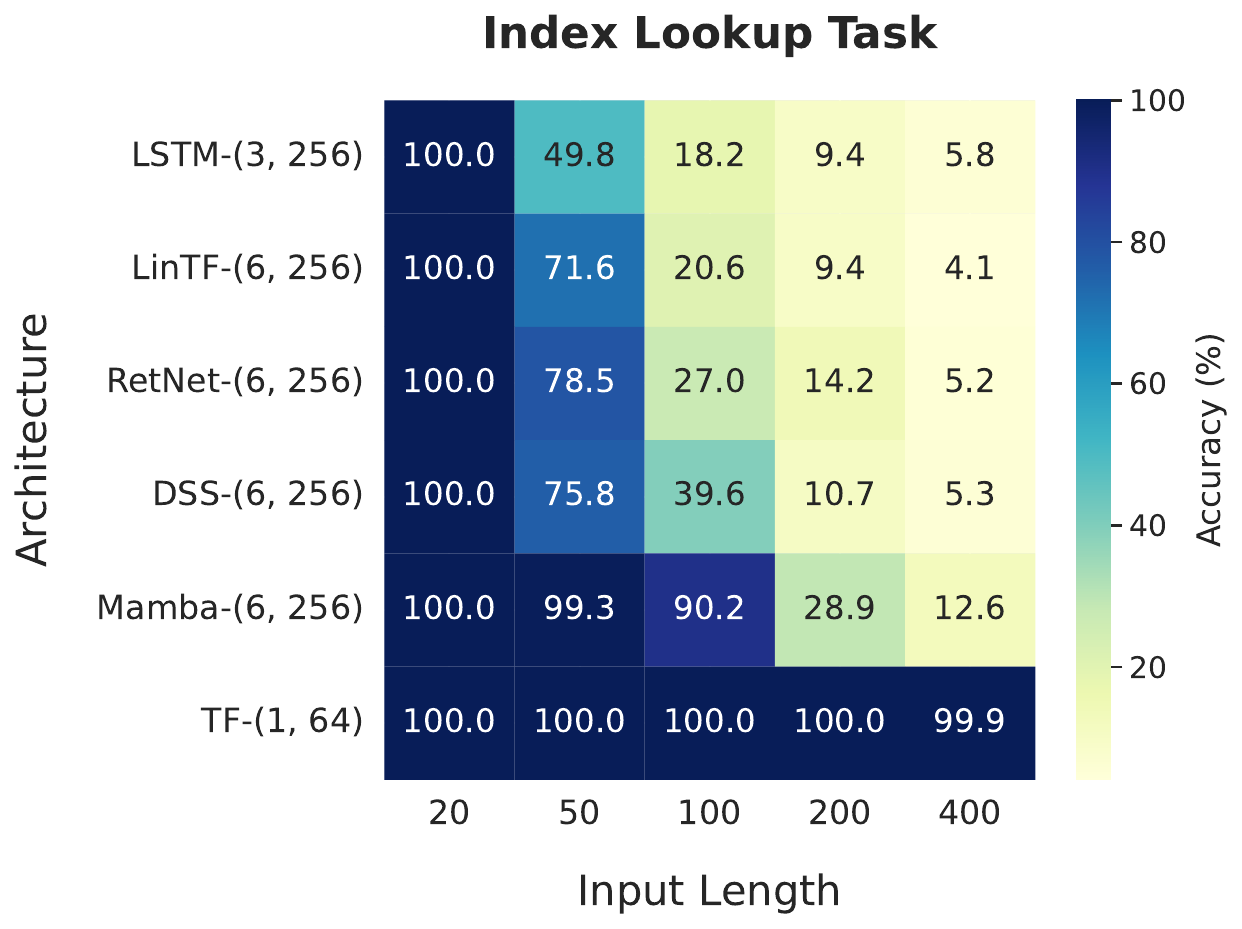} \label{fig:pos_res} }}%
	\subfloat{{\includegraphics[scale = 0.22, trim=0 0 5 5, clip]{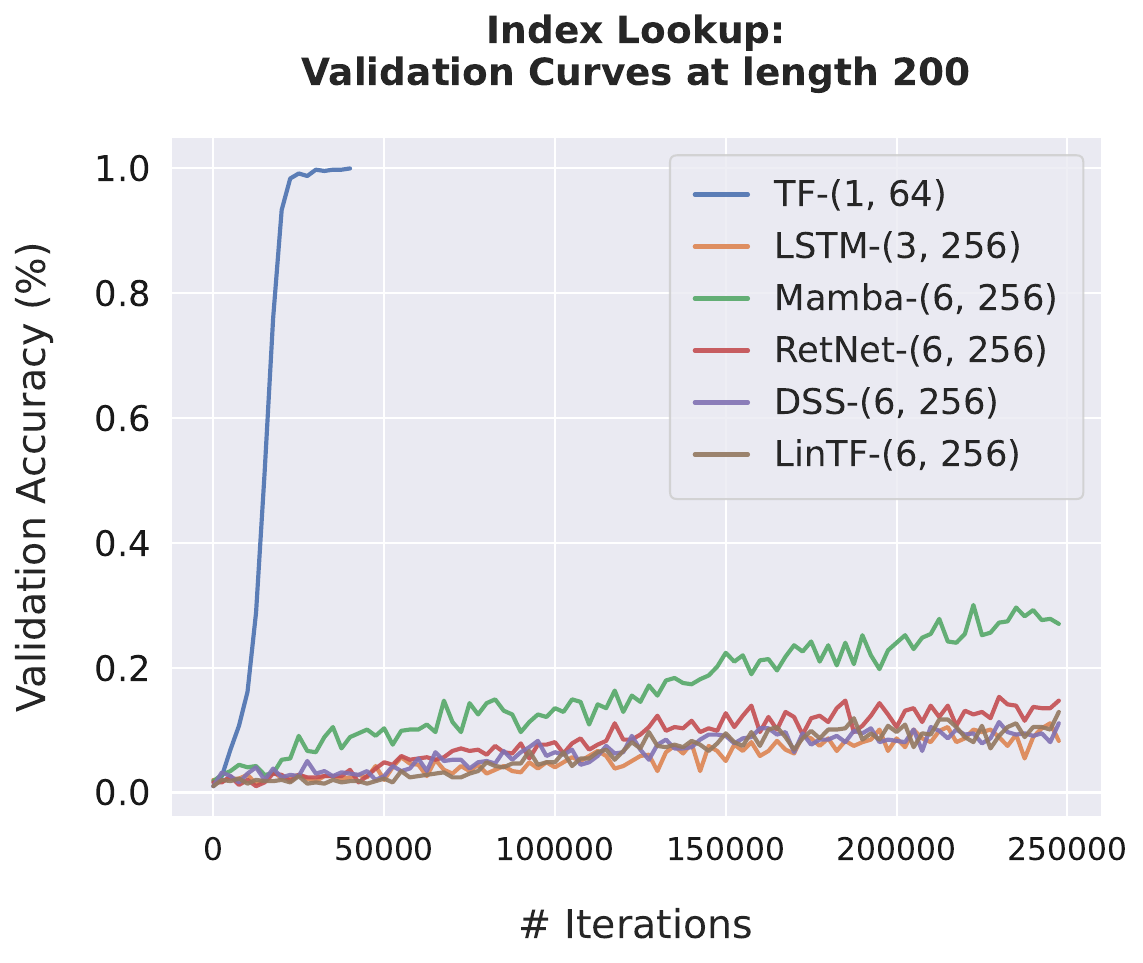} \label{fig:pos_curves} }}
    \subfloat{{\includegraphics[scale = 0.22, trim=0 0 5 5, clip]{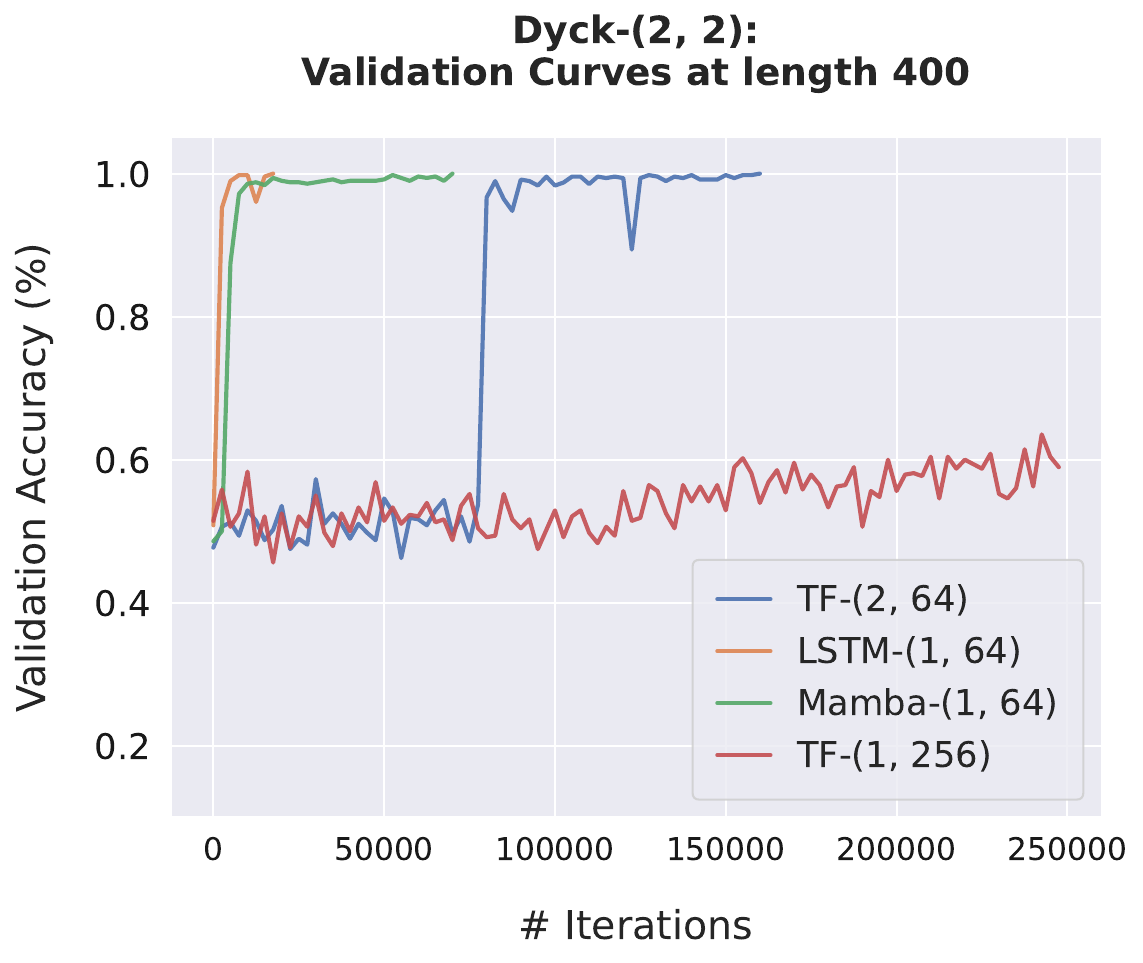} \label{fig:dyck_res} }}%
	
	\caption{Performance of models on the Index Lookup and bounded Dyck task. Labels such as TF-(1, 64) denote Transformers with 1 layer and 64 widths. See Section \ref{sec:experiments} for more details.}%
	\label{fig:exp_results}%
\end{figure}

We also explore the performance of models on string equality in Appendix \ref{appsubsec:eq_exp}. Tasks like \nntask and \mqar have already been analyzed empirically in prior works \citep{bhattamishra2024understanding, zoologyarora2023} so we do not include them.

\textbf{Setup and Training details.} We train the models with cross-entropy loss using the Adam optimizer \citep{adam}. The models are trained for up to 250k steps where at each step we sample a fresh batch of 64 training examples -- resulting in $\approx 16$ million examples over 250k steps. The models are evaluated on 5000 examples for each task. For each model, we tune the various hyperparameters, notably across learning rates $\in \{\text{1e-2, 5e-3,} \ldots\text{, 1e-6}\}$ to find the best-performing model. The details of the data generation method, hyperparameters, and implementation details can be found in Appendix \ref{app:experiments}. 

\textbf{Index Lookup Task.} We compare the performance of one-layer Transformers with five different recurrent models -- LSTMs \citep{lstm_hochreiter1997long}, state space models such as DSS \citep{dss} and Mamba \citep{mamba}, linear Transformers \citep{linearTransformers}, and its variant RetNet \citep{retnet}. We explore the performance across various lengths $\in \{20, 50, 100, 200, 400\}$. We evaluate relatively small-sized Transformers with widths $64$ against recurrent models with up to $6$ layers and widths or hidden state size of $256$. The size of the alphabet in the experiments is $|\Sigma|=64$. Figure \ref{fig:exp_results} (left) depicts the performance of all models across various lengths and Figure \ref{fig:exp_results} (middle) depicts the validation curves during training on examples of length 200. As depicted by the figures, while one-layer Transformers with width $64$ achieve near-perfect accuracy within a few thousand steps, the performance of relatively larger recurrent or state-space models degrades on lengths over $100$ and they fail to learn even with $10\times$ training iterations. We explore the influence of width on the performance of Mamba in Appendix~\ref{appsubsec:data_gen}.

\textbf{Bounded Dycks.} For \dyckt with depth at most 2, our separation results apply to one-layer Transformers and nonlinear recurrent models such as LSTMs but not to linear RNNs such as state-space models and linear Transformers. Hence, we are primarily interested in the difference in performance between one-layer Transformers and LSTMs. In our experiments, we compare the performance of one-layer Transformers with relatively smaller recurrent models such as LSTMs and two-layer Transformers. We also include Mamba for reference. We consider LSTMs and Mamba with hidden state sizes of 64 and similarly, two-layer Transformers with width 64. We evaluate a one-layer Transformer with a width of 256 across lengths $\in \{20, \ldots, 400\}$ most of which are smaller than the width of the model. We observe that one-layer Transformers achieve near-perfect accuracy up to lengths $100$ but struggle on higher lengths. In contrast, small-sized recurrent models as well as two-layer Transformers can achieve near-perfect accuracy for lengths up to $400$. Figure \ref{fig:exp_results} (right) depicts the validation curve of the models on examples of length 400.

\section{Discussion and Final Remarks}\label{sec:discussion}

Based on prior theoretical results, it is known that, while recurrent models can express any regular language \citep{korsky2019computational, kolen2001field}, Transformers with logarithmic precision can only express languages in the class of uniform constant depth threshold circuits (\textsf{TC0}) \citep{merrill2024logic}. These results indicate that---under standard conjectures---Transformers are unable to represent certain state-tracking tasks that recurrent models can represent. With such results, it might appear that Transformers are less expressive than recurrent models--potentially at odds with the persistent practical success of Transformer-based LLMs.
Our findings, however, show that when the model size is constrained relative to the sequence length, a variety of tasks relevant to practice can be represented by small-sized Transformers but not by recurrent models. Our results suggest that the attention mechanism does lead to expressiveness that cannot be replicated by recurrent architectures even with arbitrary transition functions.

\textbf{Limitations.} A general limitation of this line of work is that
positive expressivity results do not imply that the problems under
consideration are learnable. Additionally, while lower bounds for an
architecture imply difficulty in learning, when using double precision these
results only apply to very long sequences in practice. Our results (and
probably techniques) do not imply any limitations on two-layer Transformers;
this is left as an open question.  We note that communication complexity-based
techniques akin to Theorem~\ref{thm:one-layer-commun_mainPaper} cannot exist
for two-layer Transformers (cf. Appendix~\ref{appsub:comm_difficult}). Hence,
we believe that other tools will be needed to prove lower bounds for two-layer
Transformers.

\bibliography{main}
\bibliographystyle{abbrvnat}

\newpage

\appendix

\hypersetup{linkcolor=hdarkblue}

\setlength{\cftbeforesecskip}{10pt} 
\tableofcontents
\hypersetup{linkcolor=xdarkblue}

\section{Clarifications}\label{app:clarifications}

\textbf{(1)} \textit{What do the results presented in the paper imply about the learnability of the tasks considered?}

The lower bounds on recurrent models and one-layer Transformers have negative implications for learnability but the positive results do not have any nontrivial implications. The lower bounds on the size of recurrent models and one-layer Transformers imply that unless the width of the models grows linearly with respect to the input length, they cannot represent and consequently cannot learn such tasks. Note, however, that even though the input length need not be unboundedly long for our lower bounds to apply, they still need to be sufficiently large $N \gtrsim m$ and do not apply at the lengths considered in our experiments. The results indicating that Transformers can express tasks like Index lookup, string equality or nearest neighbors do \textit{not} imply that they can learn those tasks in practice. The experiments in Sections \ref{sec:experiments} and \ref{app:experiments} as well as from prior works \citep{bhattamishra2024understanding} seem to indicate that small-sized Transformers perform reasonably well on these tasks.

\textbf{(2)} \textit{For the upper bounds on the total size of Transformers to express functions, does it include the input and positional embeddings or just the Transformer model with attention and feedforward block?}

Yes, when we say that Transformers with total size $O(f(N))$ or poly-logarithmic in $N$ can express a certain task, it includes the input and positional embeddings. The total representation size indicates the total number of bits required to represent all the parameters of the model. Our results imply that a Transformer with total size $O((\log N)^3)$ can represent tasks such as nearest neighbors, Equality, etc, whereas the size of any recurrent model must be $\Omega(N)$. For our constructions of Transformers, we ensure that the positional embeddings can be generated in log space (discussed in Appendix~\ref{app:prelims}) and need not be stored as a $N \times d$ matrix for the required computations. Lastly, it is worth noting that all our lower bounds for recurrent models apply even if they have arbitrary positional embeddings stored in $N \times d$ matrix, and hence the size of recurrent models is $\Omega(N)$ excluding the embeddings.

\textbf{(3)} \textit{For the constructions with Transformers, why can't those results follow from some frameworks like RASP?}

RASP \cite{weiss2021thinking,lindner2024tracr} is a programming language aiming to abstract computations that transformers can compute. While one might be able to construct RASP programs for some of the tasks we have considered, such as Index Lookup, such constructions would not entail results similar to ours, because RASP substantially abstracts from implementational aspects. For instance, RASP allows MLPs to compute arbitrary functions, and attention is not computed from dot products of keys and queries. It is not clear if general-purpose translations from RASP to realistic transformers would be able to recover our efficient bounds, e.g., logarithmic size.

\textbf{(4)} \textit{Do the negative results in experiments imply that those architectures cannot learn those tasks?}

The differences in the performance of models apply more to the rate of learning than a binary form of success/failure. For instance, in the Index Lookup task, one can see that in Figure~\ref{fig:exp_results}, one-layer Transformers achieve near-perfect accuracy in a few thousand steps whereas recurrent models fail to achieve high accuracy even after being trained for $25\times$ steps. It can still be true that if the models are trained for much longer or are much bigger, they might achieve high accuracy. See Figure~\ref{fig:mamba_res} which depicts the performance of Mamba across various sizes and lengths. For tasks like Index lookup, we observe that Transformers achieve near-perfect accuracy across several learning rates whereas recurrent architectures fail on all the learning rates we tried. A similar thing is true for other tasks such as \dyckbt{2} where one-layer Transformers seem to learn at a much slower rate compared to LSTMs and two-layer Transformers even for the lengths where they do achieve near-perfect accuracy.

\textbf{(5)} \textit{Do the lower bounds for Transformers and RNNs apply with additional components such as layer norm, residual connections, positional embeddings, etc?}

Yes, for our lower bounds for one-layer Transformers and Recurrent models, they still apply if the models have additional components including arbitrary positional embeddings, layer norms, etc. We only assume that the attention mechanism operates over finite precision and the hidden state of RNN is in finite precision. The results do not make any assumptions about the computational limits of the remaining components. They apply even when the output of the attention block or the hidden state of an RNN is processed by an \textit{arbitrary function}.

\section{Preliminaries}\label{app:prelims}

\subsection{Communication Complexity}\label{appsub:comm_complexity}

Our lower bounds for recurrent models and 1-layer Transformers are based on communication-complexity bounds. We assume some familiarity with communication complexity (see \citet{communicationbookrao2020, communicationNisan_1996} for an introduction). We will primarily focus on the two-party setting where the communication complexity of a function indicates the number of bits two parties must exchange in order to compute the output of a function. If Alice and Bob want to compute a function $f: \ham^N \rightarrow \ham$ where Alice has the bits in the indices $I \subset [N]$ and Bob has the bits in indices $[N] \setminus I$, the communication complexity of $f$ over that partition is the minimum number of bits they must exchange to compute the output for all inputs $x \in \ham^N$. Alice and Bob are allowed unbounded computational power. If Alice and Bob must exchange at least $k$ bits to compute a function $f(x)\, \forall x \in \ham^n$ over any partition of the input then we say that the communication complexity $C(f) \geq k$. If they use a communication protocol where only one party is allowed to send bits to the other party, then it is called one-way communication, and the communication complexity of the function in that setting is referred to as one-way communication complexity.

For our results, we will use the following three well-known facts about the communication complexity of the Disjointness, Equality, and the \indtask problem.

\textbf{Disjointness.} The disjointness function takes two sets $A, B \subseteq [N]$ as input and returns $0$ if the two sets are disjoint and returns $1$ otherwise. This can also be seen as a function $\disj: \ham^N \times \ham^N \rightarrow \ham$ over two Boolean inputs such that $\disj(a,b) = \max a_i b_i = \indicator[a^T b > 0]$. If Alice has the input vector $a \in \ham^N$ and Bob has the vector $b \in \ham^N$, then the communication complexity of $\disj$ indicates the minimum number of bits that Alice and Bob will have to exchange to determine the output of the $\disj$ function. The following is a well-known fact about the disjointness problem,

\begin{fact}\label{thm:disj}
    [Disjointness \cite{disjointnessyao}] If Alice and Bob have two inputs $a, b \in \ham^N$, then any deterministic communication protocol used by them to compute $\disj(a,b) = \max a_i b_i$ must exchange at least $N$-bits. Moreover, the randomized communication complexity of the \disj is $\Omega(N)$.
\end{fact}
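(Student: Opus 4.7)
The plan is to handle the deterministic and randomized parts separately. For the deterministic $N$-bit lower bound, I would use the standard fooling set technique. Recall that if one exhibits a set $F \subseteq \ham^N \times \ham^N$ together with a value $v$ such that (i) $\disj(a,b) = v$ for every $(a,b) \in F$, and (ii) for every two distinct pairs $(a,b), (a',b') \in F$ at least one of $\disj(a, b'), \disj(a', b)$ differs from $v$, then the deterministic communication complexity of $\disj$ is at least $\log_2 |F|$. The reason is that any correct protocol induces a partition of $\ham^N \times \ham^N$ into combinatorial rectangles on which $f$ is constant, and no rectangle can contain two elements of $F$.

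To build such a set of size $2^N$, I would index the pairs by subsets $S \subseteq [N]$, defining $a_S \in \ham^N$ to be the indicator vector of $S$ and $b_S$ to be the indicator vector of its complement $\bar{S}$. Then $\disj(a_S, b_S) = 0$ for every $S$, so condition (i) holds with $v = 0$. For (ii), given $S \neq T$ pick an index $i$ in the symmetric difference; without loss of generality $i \in S \setminus T$, so $i \in S$ and $i \in \bar T$, giving $(a_S)_i (b_T)_i = 1$ and hence $\disj(a_S, b_T) = 1 \neq v$. Thus $F = \{(a_S, b_S) : S \subseteq [N]\}$ is a fooling set of size $2^N$, yielding the deterministic lower bound of $N$ bits.

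For the randomized $\Omega(N)$ bound I would invoke Yao's minimax principle and reduce to proving a distributional lower bound against deterministic protocols of bounded error. The natural route is the corruption method of Razborov: choose the hard distribution $\mu$ that with probability $3/4$ samples a uniform disjoint pair $(A, B)$ with $|A| = |B| = N/4$, and with probability $1/4$ samples a uniform pair with a single planted intersection point. One then shows that any combinatorial rectangle $R = X \times Y$ which is $\mu$-almost monochromatic on the disjoint inputs must have $\mu(R)$ exponentially small in $N$; the proof is the standard combinatorial argument showing that a rectangle of disjoint pairs that is too large must be heavily corrupted by intersecting pairs, via a counting/sunflower-style lemma. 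Since the leaves of a $c$-bit protocol induce at most $2^c$ rectangles that together cover the $1$-inputs, an averaging argument then forces $c = \Omega(N)$.

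The main obstacle is the randomized part: the corruption lemma for $\disj$ under $\mu$ is the technical heart and requires a careful combinatorial counting argument (originally due to Kalyanasundaram--Schnitger and later simplified by Razborov). The deterministic part, by contrast, is immediate once the fooling set is written down, and I would present it in full while citing the established bound for the randomized case.
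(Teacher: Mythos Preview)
Your proposal is correct. The paper does not actually prove this statement: it is listed as a \emph{Fact} with a citation to the literature, so there is no proof in the paper to compare against. Your deterministic argument via the fooling set $\{(\mathbf{1}_S,\mathbf{1}_{\bar S}):S\subseteq[N]\}$ is the standard textbook one (and, incidentally, the paper itself invokes the fooling-set technique elsewhere, for the $\dyckbt{2}$ bound in Lemma~\ref{lemma:dyck-comm-bound_mainPaper}); your sketch of the randomized lower bound via Yao's principle and Razborov's corruption argument is likewise the canonical route. Presenting the deterministic part in full and citing Kalyanasundaram--Schnitger/Razborov for the randomized bound is exactly the right level of detail for a result the paper merely quotes.
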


\begin{fact}\label{thm:eq}
    [Equality \cite[Ch. 1]{communicationbookrao2020}] If Alice and Bob have two inputs $a, b \in \ham^N$, then any deterministic communication protocol used by them to compute $\eqtask(a,b) = \indicator[a=b]$ must exchange at least $N$-bits. 
\end{fact}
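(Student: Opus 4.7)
The plan is to use the standard \emph{fooling set} (equivalently, rectangle) argument from two-party communication complexity. Recall that any deterministic protocol $\Pi$ with communication cost $c$ induces a partition of the input space $\ham^N \times \ham^N$ into at most $2^c$ \emph{combinatorial rectangles} $A_i \times B_i$ (one per leaf of the protocol tree), and on each such rectangle the value of the function computed by $\Pi$ is constant. In particular, if $\Pi$ computes $\eqtask$ correctly, then each rectangle labeled $1$ is contained in the graph of the equality relation $\{(a,a) : a \in \ham^N\}$.

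The key combinatorial step is to exhibit a large fooling set for $\eqtask$. First I would take the diagonal $D = \{(a,a) : a \in \ham^N\}$, which has $|D| = 2^N$ elements, and verify that $\eqtask(a,a) = 1$ for each of them. Next, for any two distinct pairs $(a,a), (b,b) \in D$ with $a \neq b$, I would observe that if both lay in the same $1$-rectangle $A \times B$, then since $a \in A$ and $b \in B$ we would have $(a,b) \in A \times B$; but $\eqtask(a,b) = 0$, contradicting the fact that the protocol outputs a constant value on each rectangle. Hence each element of $D$ lies in its own distinct $1$-rectangle.

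Combining the two ingredients, the number of rectangles in the partition induced by $\Pi$ is at least $|D| = 2^N$, so $2^c \geq 2^N$, which gives $c \geq N$ as claimed. The whole argument is short and essentially bookkeeping about rectangles; the only genuine step is the fooling set observation, and the ``obstacle,'' such as it is, is merely to be careful that combinatorial rectangles are closed under the swap operation $(a,a),(b,b) \mapsto (a,b)$, which is exactly what makes the diagonal a fooling set. (An equivalent route is to note that the communication matrix of $\eqtask$ is the $2^N \times 2^N$ identity matrix, whose rank over $\bq$ is $2^N$, and then apply the log-rank lower bound $c \geq \log_2 \mathrm{rank}(M_{\eqtask}) = N$; I would mention this alternative but prefer the fooling-set proof since it is self-contained.)
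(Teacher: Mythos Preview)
Your fooling-set argument is correct and is the standard textbook proof of this result. Note, however, that the paper does not actually prove this statement: it is stated as a \emph{Fact} with a citation to \cite[Ch.~1]{communicationbookrao2020}, so there is no ``paper's own proof'' to compare against. Your proposal is precisely the argument one finds in that reference (and the rank alternative you mention is also standard), so nothing more is needed here.
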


\begin{fact}\label{thm:index}
    [\indtask \cite{indexjayram}] If Alice and Bob have two inputs $a \in \ham^N$ and $b \in [N]$ respectively, then any deterministic communication protocol used by Alice to send bits to Bob must require $N$ bits for Bob to compute $\indtask(a,b) = a_b$. Moreover, the one-way randomized communication complexity of the $\indtask$ is $\Omega(N)$.
\end{fact}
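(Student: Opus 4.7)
The plan is to prove Fact~\ref{thm:index} in two parts. For the deterministic one-way lower bound, I would use a simple pigeonhole argument: Alice's message is a function $M : \ham^N \to \ham^k$ of her input $a$. If $k < N$, then pigeonhole gives distinct $a \neq a' \in \ham^N$ with $M(a) = M(a')$, and we can choose a coordinate $b \in [N]$ with $a_b \neq a'_b$. On the two inputs $(a,b)$ and $(a',b)$ Bob receives an identical message and index, so his output must agree, yet the correct answers $a_b, a'_b$ differ. This contradicts correctness, giving $k \geq N$.

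For the randomized $\Omega(N)$ one-way lower bound, the argument is information-theoretic. By Yao's minimax principle it suffices to lower bound the expected message length of any deterministic protocol that errs with probability at most $\epsilon$ when $A$ is uniform on $\ham^N$ and $B$ is uniform and independent on $[N]$. Let $M = M(A)$ denote Alice's message, and let $\epsilon_b$ denote Bob's conditional error probability for predicting $A_b$ from $M$; by assumption $\frac{1}{N}\sum_b \epsilon_b \leq \epsilon$. Since Bob's guess is a function of $M$ and $b$, Fano's inequality gives $H(A_b \mid M) \leq H_2(\epsilon_b)$, and concavity of $H_2$ yields $\sum_b H(A_b \mid M) \leq N \cdot H_2(\epsilon)$. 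Under the uniform distribution the coordinates $A_1, \ldots, A_N$ are independent, so the chain-rule bound $H(A \mid M) = \sum_b H(A_b \mid M, A_{<b}) \leq \sum_b H(A_b \mid M)$ holds, yielding $I(A; M) = N - H(A \mid M) \geq N(1 - H_2(\epsilon))$. Since $|M| \geq H(M) \geq I(A; M)$, we conclude $|M| = \Omega(N)$ for any fixed $\epsilon < 1/2$.

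The main substance is in the randomized half; the deterministic bound is essentially an exercise. The delicate step is the chain-rule bound on $H(A \mid M)$: it is crucial that the coordinates $A_b$ are independent under the product distribution, since only then can Fano's pointwise bounds on each $H(A_b \mid M)$ be aggregated into a bound on the joint entropy. Working with a non-product distribution, or weakening to a two-way protocol, would require genuinely different tools such as direct-sum theorems or corruption-style arguments, which is why the $\Omega(N)$ bound here is specific to the one-way model over the uniform product distribution.
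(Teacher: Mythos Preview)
The paper does not prove this statement; it is stated as a \textbf{Fact} with a citation to \cite{indexjayram} and is used as a black box in the lower bound for recurrent models (Theorem~\ref{thm:rnn_posret_main}). So there is no paper proof to compare against.

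Your argument is correct and standard. The deterministic half is the usual pigeonhole/fooling-set argument. The randomized half is the classical information-theoretic proof: Yao's principle reduces to a deterministic protocol over the uniform product distribution, Fano bounds each $H(A_b \mid M)$ by $H_2(\epsilon_b)$, concavity of $H_2$ aggregates these, and the chain rule plus $H(A)=N$ give $I(A;M)\geq N(1-H_2(\epsilon))$, hence $|M|=\Omega(N)$.

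One small expository point: the inequality $H(A\mid M)=\sum_b H(A_b\mid M,A_{<b})\leq \sum_b H(A_b\mid M)$ holds for any joint distribution (conditioning never increases entropy). Independence of the coordinates is needed not for this step but for $H(A)=N$, which is what turns the upper bound on $H(A\mid M)$ into a lower bound on $I(A;M)$. Your conclusion is unaffected, but the remark about where independence enters is slightly misplaced.
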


One thing to note about the Equality problem is that although the deterministic communication complexity of the $\eqtask$ problem is $\Omega(N)$, the randomized communication complexity is $O(\log N)$. For the Disjointness and \indtask problems, the randomized communication complexity is $\Omega(N)$ as well. In other words, even if the two parties are allowed to compute the output of the function correctly with high probability (say $> 2/3$), even then the number of bits they must exchange is $\Omega(N)$.

\subsection{Finite Precision Implementation}\label{subsec:precision}

In this work, we are interested in the expressiveness of finite precision models. For our constructions with Transformers, we will work with $p$-bit numbers where $p = \Theta(\log N)$ where $N$ is the maximum length of the input string. 

In particular, for some sufficiently large constant $\constp > 0$, we will work with numbers between $-N^{\constp}$ to $N^{\constp}$ with a step size of $\Delta = \frac{1}{N^{\constp}}$. If $\domp$ is the set of all such numbers then $\domp = \{-N^{\constp}, -N^{\constp}+\Delta, \ldots, 0, \Delta, 2\Delta, \ldots,  N^{\constp}\}$. Hence, the size of the set is $|\domp| = N^{2\constp} \implies p = 2\constp \log N = \Theta(\log N)$. For any real number $z \in \br$, in the finite precision implementation, $z$ is rounded down to the nearest $\hat{z}$ such that $\hat{z}N^{\constp} \in \bz$. If $z > N^{\constp}$, then $z$ is rounded down to $N^{\constp}$. In our constructions with Transformers, all the parameters and intermediate values follow the above implementation.

\subsection{Technical Tools}\label{ssec:tools}

For our constructions of Transformers, we will use the following result about high dimensional vectors. The statement essentially says that at a high dimension $k$ the number of vectors that are almost orthogonal to each other is exponential in $k$ even though the number of orthogonal vectors can be at most $k$.

\begin{lemma}\label{lem:jlvec}
    For any $N$, there exists $N$ $k$-dimensional vectors $\jl_1, \ldots, \jl_N$ where $k= O(\frac{1}{\gamma^2}\log N)$ and each entry of the vectors is in $\{-\frac{1}{\sqrt{k}}, \frac{1}{\sqrt{k}}\}$ such that 
    \[
    \dprod{\jl_i}{\jl_j}  
    \begin{cases} 
     \geq 1-\gamma  & \text{if } i = j, \\
     \leq \gamma  & \text{otherwise}.
\end{cases}
    \]
\end{lemma}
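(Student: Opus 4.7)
The plan is to establish the existence of such vectors via the probabilistic method, choosing each entry independently and uniformly from $\{-1/\sqrt{k},+1/\sqrt{k}\}$, then controlling diagonal inner products exactly and off-diagonal inner products via a Hoeffding-type concentration bound, followed by a union bound over the $\binom{N}{2}$ pairs.

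First, I would observe that if each coordinate is chosen from $\{\pm 1/\sqrt k\}$ (by any rule at all), then deterministically $\langle \jl_i,\jl_i\rangle = \sum_{\ell=1}^{k} (1/\sqrt k)^2 = 1 \geq 1-\gamma$, so the diagonal requirement is automatic. This reduces the problem to ensuring $|\langle \jl_i,\jl_j\rangle| \leq \gamma$ for every $i\neq j$.

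Next, I would sample each coordinate $\jl_i(\ell)$ independently and uniformly from $\{-1/\sqrt k,+1/\sqrt k\}$. For fixed $i\neq j$, define $X_\ell := \jl_i(\ell)\,\jl_j(\ell)\in\{-1/k,+1/k\}$, which by independence of $\jl_i$ and $\jl_j$ is uniform on these two values, so $\mathbb{E}[X_\ell]=0$ and the $X_\ell$'s are mutually independent. Then $\langle \jl_i,\jl_j\rangle = \sum_{\ell=1}^k X_\ell$ is a zero-mean sum of independent bounded variables, and Hoeffding's inequality yields
\begin{equation*}
\Pr\!\left[\,|\langle \jl_i,\jl_j\rangle| \geq \gamma\,\right] \;\leq\; 2\exp\!\left(-\tfrac{2\gamma^2}{\sum_{\ell=1}^k (2/k)^2}\right) \;=\; 2\exp\!\left(-\tfrac{\gamma^2 k}{2}\right).
\end{equation*}

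Finally, I would choose $k = \lceil (2/\gamma^2)\ln(4N^2)\rceil = O(\gamma^{-2}\log N)$ so that the above probability is at most $1/(2N^2)$. A union bound over the $\binom{N}{2}<N^2/2$ unordered pairs then shows that with probability strictly greater than $3/4$ (in particular, positive), every off-diagonal inner product lies in $[-\gamma,\gamma]$, and in particular is $\leq \gamma$. Hence a collection of vectors with the stated properties exists. The main (only) step requiring care is tuning the constant in $k$ so that the union bound leaves positive probability; everything else is routine. No obstacle beyond this bookkeeping is anticipated, since the exact norm equality $\|\jl_i\|_2^2=1$ eliminates the need to concentrate the diagonal terms separately.
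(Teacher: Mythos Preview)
Your proposal is correct and follows essentially the same approach as the paper: sample each coordinate uniformly from $\{\pm 1/\sqrt{k}\}$, use Hoeffding's inequality to bound the off-diagonal inner products, and apply a union bound over the $O(N^2)$ pairs. Your version is slightly tidier in that you make explicit that the diagonal condition $\langle \jl_i,\jl_i\rangle = 1$ holds deterministically, and your Hoeffding constant ($-\gamma^2 k/2$) is a bit sharper than the paper's ($-\gamma^2 k/4$), but these are cosmetic differences.
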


It is quite straightforward to see why this is true. It follows from a simple application of the probabilistic method. Suppose, one samples $N$ $k$-dimensional vectors $\rvX_1, \ldots, \rvX_N$ independently at random such that each entry of each vector is drawn uniformly from $\{-\frac{1}{\sqrt{k}}, \frac{1}{\sqrt{k}}\}$. Hence, each vector $\rvX_i \in \{-\frac{1}{\sqrt{k}}, \frac{1}{\sqrt{k}}\}^k$ and the expectation of the dot product of any two vectors $\E[\dprod{\rvX_i}{\rvX_j}] =0 $ for all $i \neq j$. Since the dot products of any two vectors $\dprod{\rvX_i}{\rvX_j}$ is a bounded random variable, we can apply Hoeffding's inequality to get

\[
 \prob[|\dprod{\rvX_i}{\rvX_j}| \geq \gamma] \, \leq \,  2\exp \left ( \frac{-k\gamma^2}{4} \right ).
\]

Taking a union bound over at most $N^2$ pairs of dot products and setting $2N^2\exp \left ( \frac{-k\gamma^2}{4} \right ) < 1/2$, we get that for $k = \frac{8}{\gamma^2}\log 2N$, the probability of all dot products $\dprod{\rvX_i}{\rvX_j}$ being less than $ \gamma$ at the same time is at least $1/2$. Since the probability of the event is nonzero, it follows that the statement in the lemma is true.

In our construction, we will use such vectors as positional embedding vectors.
While Lemma~\ref{lem:jlvec} implies the existence of vectors, storing $N$ such
vectors will require $\Theta(N)$ space. We would like to generate $i$-th vector
$\jl_i$ when necessary in polynomial time using log space without storing all
of them together. To achieve that, we use a derandomization of the
Johnson-Lindenstrauss (JL) Lemma~\citep{Johnson1984ExtensionsOL} that can
output the $i$-th vector in log-space and polynomial
time~\citep{sivakumar2002algorithmic}.

We use a slightly modified version of the JL lemma which preserves inner products over unit vectors.

\begin{lemma}\label{lem:jlprod}
    [Inner product preservation \citep{achlioptas2003database}]Let $\eps \in (0, 1/2)$ and let $\setq \subset S^{d-1}$ be a set of $N$ unit norm vectors of dimension $d$. For $k = \frac{c_2\log N}{\eps^2}$, there exists a linear map $\jl(x)= \frac{1}{\sqrt{k}}Ax$ where each entry of $A: \br^{d} \rightarrow \br^{k}$ is in $\{-1, 1\}$ such that for all $u, v \in \setq$, 
    \[
    | \langle u, v\rangle - \langle \jl(u), \jl(v) \rangle| \leq \eps
    \]
\end{lemma}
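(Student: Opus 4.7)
\textbf{Proof plan for Lemma~\ref{lem:jlprod}.}
The plan is to reduce inner-product preservation to norm preservation via the polarization identity, and then invoke the standard (distributional) Johnson--Lindenstrauss guarantee for $\pm 1$ random matrices. Concretely, I will show that a random linear map $\jl(x)=\frac{1}{\sqrt{k}}Ax$ with $A$ having i.i.d.\ $\pm 1$ entries, for $k = C \log N / \eps^2$ and a sufficiently large constant $C$, simultaneously preserves $\|u\|^2$, $\|v\|^2$, and $\|u-v\|^2$ up to a factor $(1\pm\eps')$ for every $u,v \in \setq$, where $\eps'$ is a rescaled version of $\eps$. The existence then follows by a probabilistic argument (some matrix $A$ must realize this event).

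The first step will be to invoke the Achlioptas-style norm-preservation bound: for any fixed $x \in \br^d$,
\begin{equation*}
\prob\!\left[\,\bigl|\,\|\jl(x)\|^2 - \|x\|^2\,\bigr| > \eps' \|x\|^2 \right] \le 2 \exp(-c_0 k \eps'^{\,2})
\end{equation*}
for an absolute constant $c_0 > 0$, which is exactly the type of sub-Gaussian concentration used in the proof of Lemma~\ref{lem:jlvec}. I will then apply this to the collection $\setq \cup \{u-v : u,v \in \setq\}$, which contains at most $N + \binom{N}{2} = O(N^2)$ vectors. A union bound then shows that for $k \geq C \log N / \eps'^{\,2}$ with $C$ large enough, with positive probability every vector $z$ in this collection satisfies $\bigl|\,\|\jl(z)\|^2 - \|z\|^2\,\bigr| \le \eps' \|z\|^2$ simultaneously. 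Hence a deterministic $A$ with this property exists.

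The second step uses linearity of $\jl$ together with the polarization identity
\begin{equation*}
\langle u, v\rangle = \tfrac{1}{2}\bigl( \|u\|^2 + \|v\|^2 - \|u-v\|^2 \bigr),
\end{equation*}
and the analogous identity for $\jl(u), \jl(v)$, noting $\jl(u)-\jl(v)=\jl(u-v)$. Subtracting and applying the triangle inequality,
\begin{equation*}
\bigl|\langle u,v\rangle - \langle \jl(u), \jl(v)\rangle\bigr| \le \tfrac{1}{2}\bigl(|\|u\|^2 - \|\jl(u)\|^2| + |\|v\|^2 - \|\jl(v)\|^2| + |\|u-v\|^2 - \|\jl(u-v)\|^2|\bigr).
\end{equation*}
Since $u,v$ are unit vectors, the first two terms are each at most $\eps'$, and using $\|u-v\|^2 \le 4$, the third term is at most $4\eps'$. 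Therefore the total is at most $3\eps'$. Setting $\eps' = \eps/3$ yields the desired bound and only changes the constant $c_2$ in $k = c_2 \log N / \eps^2$.

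The main technical point — and the only real obstacle — is the choice of the auxiliary set of vectors on which to demand norm preservation, since naively only applying JL to $\setq$ would not control $\|u-v\|^2$. The $O(N^2)$ union bound is the natural remedy and it costs only a constant factor in $k$. The rest is bookkeeping: the polarization identity and the unit-norm assumption translate $(1\pm\eps')$ multiplicative distortions of norms into an additive $O(\eps)$ distortion of inner products.
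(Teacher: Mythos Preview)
Your proof is correct and follows the standard route: Achlioptas-style norm concentration for $\pm 1$ random projections, a union bound over $\setq \cup \{u-v : u,v \in \setq\}$, and then the polarization identity to convert multiplicative norm distortion into additive inner-product distortion. The arithmetic is right, including the use of $\|u-v\|^2 \le 4$ for unit vectors and the rescaling $\eps' = \eps/3$.

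There is nothing to compare against in the paper: the lemma is stated with a citation to \citet{achlioptas2003database} and no proof is given. Your argument is essentially the standard derivation one would find in that reference or in any textbook treatment of the Johnson--Lindenstrauss lemma for inner products, so it is an appropriate fill-in.
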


The above result has interesting implications which we will use in our constructions. Let $\rvx_1, \ldots, \rvx_N \in \br^{N}$ be $N$ unit norm vectors that form the basis of $\br^{N}$ which implies $\dprod{\rvx_i}{\rvx_j} = 1$ if $i=j$ and is $0$ otherwise. Then there exists a map $\jl$ such that the vectors $\jl(\rvx_1), \ldots, \jl(\rvx_N) \in \br^{k}$ have dot products $\dprod{\jl(\rvx_i)}{\jl(\rvx_j)} =1 \pm \eps$ if $i=j$ and is $0\pm \eps$ otherwise. 

We will use JL transformations of the standard basis vectors $\rve_1, \ldots, \rve_N$ where $\jl(1), \ldots, \jl(N)$ will refer to $k= O(\log N)$ dimensional vectors such that their inner product is $\approx 1$ with themselves and is $\approx 0$. Intuitively, we can use such vectors to get a Transformer to attend to unique positions. 

\begin{corollary}\label{cor:jlvec}
    For any $N$, there exists $N$ $k$-dimensional vectors $\jl(1), \ldots, \jl(N)$ where $k= O(\log N)$ and each entry of the vectors is in $\{-\frac{1}{\sqrt{k}}, \frac{1}{\sqrt{k}}\}$ such that 
    \[
    \dprod{\jl(i)}{\jl(j)}  
    \begin{cases} 
     \geq 3/4  & \text{if } i = j, \\
     \leq 1/4  & \text{otherwise}.
\end{cases}
    \]
\end{corollary}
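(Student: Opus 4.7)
The plan is to derive Corollary~\ref{cor:jlvec} as a direct instantiation of Lemma~\ref{lem:jlvec} (or equivalently Lemma~\ref{lem:jlprod}) with a specific numerical choice of parameter. Setting $\gamma = 1/4$ in Lemma~\ref{lem:jlvec} yields $N$ vectors in $k$-dimensional space with entries in $\{-1/\sqrt{k},\, 1/\sqrt{k}\}$, where $k = O((1/\gamma^2)\log N) = O(16 \log N) = O(\log N)$, and whose pairwise inner products satisfy $\dprod{\jl_i}{\jl_j} \geq 1 - \gamma = 3/4$ when $i = j$ and $\dprod{\jl_i}{\jl_j} \leq \gamma = 1/4$ when $i \neq j$. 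Relabeling $\jl_i$ as $\jl(i)$ gives exactly the statement of the corollary.

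An equivalent route is to invoke Lemma~\ref{lem:jlprod} applied to the standard basis vectors $\rve_1, \ldots, \rve_N \in \br^{N}$, which are unit norm and satisfy $\dprod{\rve_i}{\rve_j} = \indicator[i=j]$. With $\eps = 1/4$, the lemma produces a linear map $\jl(x) = \frac{1}{\sqrt{k}} A x$ with $A \in \{-1,+1\}^{k \times N}$ and $k = O(\log N)$ such that the transformed vectors $\jl(\rve_i)$ have entries in $\{-1/\sqrt{k},\, 1/\sqrt{k}\}$ (since $A\rve_i$ is the $i$-th column of $A$) and their inner products are within $\eps = 1/4$ of $\indicator[i=j]$. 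Defining $\jl(i) := \jl(\rve_i)$ again recovers the corollary.

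Since the corollary is merely a parameter-specific restatement of results already established in the excerpt, there is no substantive obstacle in its proof: the heavy lifting, namely the probabilistic-method construction together with the Hoeffding and union-bound argument, has already been carried out in the discussion of Lemma~\ref{lem:jlvec}. The only item worth verifying explicitly is that the choice $\gamma = 1/4$ keeps $k = O(\log N)$, which follows immediately from the $k = (8/\gamma^2)\log(2N)$ bound noted in the sketch of Lemma~\ref{lem:jlvec}. Thus the proof collapses to a single line citing the appropriate lemma with $\gamma = 1/4$ (or $\eps = 1/4$), and the corollary serves primarily as a convenient packaging of the inner-product bounds that downstream Transformer constructions repeatedly invoke.
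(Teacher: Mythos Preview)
Your proposal is correct and matches the paper's approach: the corollary is stated immediately after the paragraph explaining how Lemma~\ref{lem:jlprod} applied to the standard basis vectors $\rve_1,\ldots,\rve_N$ yields the vectors $\jl(1),\ldots,\jl(N)$, and the paper offers no further proof. Your two equivalent derivations (instantiating $\gamma=1/4$ in Lemma~\ref{lem:jlvec}, or $\eps=1/4$ in Lemma~\ref{lem:jlprod} on the standard basis) are exactly the intended one-line arguments.
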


\section{Index Lookup Task}\label{app:posindex}

\textbf{Index Lookup.} The index lookup task (\postask) is a multi-class classification task where a model receives a sequence of tokens $x_1, \ldots, x_N$ followed by a position token $p$ where $p \in [N]$ and the goal of the model is to output the token $x_{p}$ at position $p$. Here the symbols $x_i$ belong to a vocabulary $\Sigma$, a finite set of symbols. The sequence $x= (x_1, \ldots, x_N)$ can have repetitions. More precisely, we say a model computes the function $\postask: \Sigma_{\leq N} \times [N] \rightarrow \Sigma$ if for all inputs $(x, p)$, the model outputs $\postask(x, p)$.

\subsection{Recurrent models must be wide to perform Index Lookup}\label{appsub:rnn_idx_lower}

\rnnposretmain*

\begin{proof}
    The proof follows naturally via a reduction from the \indtask problem in communication complexity. Assume the vocabulary size for the \postask is at least 2, pick any two symbols from the vocabulary, and map them to 0 and 1. Suppose Alice has a sequence $a \in \ham^N$ and Bob has an index $i \in [N]$. Both of them have access to a recurrent model as described in Section \ref{sec:definitions}, which can perform the \postask task perfectly. Alice can then provide the sequence $a$ to the recurrent model using any two symbols in the vocabulary $\Sigma$ and iteratively update the hidden state to obtain the $N$th hidden state $h_N$. Alice can then send the hidden state to Bob which requires $mp$ bits. Bob can then provide the position token based on the index $i$ and compute the output to figure out whether $a_i$ is $0$ or $1$.

    Note that Alice and Bob can compute the output using one-way communication of $mp$ bits and hence based on Fact \ref{thm:index}, it must be the case that $mp \geq N$.
    
\end{proof}

\textbf{Associative Recall.} A similar argument can be used to show that any recurrent model that can correctly perform the single query associative recall task \citep{ba2016associativerecall} must have a width or hidden state of size at least $\Omega(N/p)$. In the associative recall task, a model is presented with a sequence of symbols and labels $x_1, y_1, \ldots, x_N, y_N$ followed by a query symbol $x_q$ which is one of the symbols presented earlier $(x_1, \ldots, x_N)$. The goal of a model is to output the label $y_i$ corresponding to the symbol $x_q = x_i$ where $i= 1, \ldots, N$. In this task, the symbols $(x_1, \ldots, x_N)$ must be distinct. 

\begin{proposition}\label{thm:rnn_AR}
    Any recurrent model with a hidden state of size $m$ over $p$-bits of precision that computes the associative recall task for all sequences of length $N$  must have $m\geq N/p$.
\end{proposition}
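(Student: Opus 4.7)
The plan is to mirror the proof of Theorem \ref{thm:rnn_posret_main} by reducing from the \indtask problem (Fact \ref{thm:index}) using associative recall as the target function instead of Index Lookup. The key observation is that associative recall can simulate Index Lookup when the keys are chosen to be a canonical enumeration of positions, so an RNN solving the former must be wide enough to solve the latter's underlying communication task.

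Concretely, I would set up the reduction as follows. Assume the vocabulary $\Sigma$ contains at least $N+2$ distinguishable symbols: $N$ symbols $z_1, \ldots, z_N$ that will serve as \emph{keys}, together with two symbols mapped to $0$ and $1$ that will serve as \emph{values}. Given Alice's input $a \in \ham^N$ and Bob's index $i \in [N]$, Alice constructs the key--value sequence $(z_1, a_1, z_2, a_2, \ldots, z_N, a_N)$ and feeds it into the recurrent model, starting from the initial hidden state $\rvh_0$ and iteratively applying the transition function to obtain the hidden state $\rvh_{2N}$. By Theorem \ref{theorem:rnn-commun_mainPaper} (or directly by definition), this hidden state occupies $mp$ bits. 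Alice then sends $\rvh_{2N}$ to Bob, who continues the recurrent computation by feeding the query token $z_i$ and reads the model's output, which, by the assumption that the RNN computes associative recall correctly for all valid inputs of this length, must equal $a_i$. Thus Bob recovers $\indtask(a, i)$ using only $mp$ bits of one-way communication from Alice.

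Combining this with the one-way communication lower bound $mp \geq N$ from Fact \ref{thm:index} immediately yields $m \geq N/p$, as claimed. I expect no substantive obstacle: the only points of care are (i) ensuring the constructed sequence is a valid associative recall instance, which holds because the keys $z_1, \ldots, z_N$ are distinct by construction, and (ii) accounting for the fact that the full sequence length is $2N+1$ rather than $N$, which only affects the bound by constant factors and can be absorbed into the statement (or the reduction can be done with $N/2$ pairs to match the proposition's length convention exactly). Everything else is a direct analogue of the argument used for Theorem \ref{thm:rnn_posret_main}.
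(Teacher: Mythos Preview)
Your proposal is correct and mirrors the paper's own proof essentially exactly: both reduce from \indtask by having Alice and Bob agree on a fixed sequence of distinct key symbols, Alice feeds the key--value pairs $(z_1, a_1, \ldots, z_N, a_N)$ into the RNN and transmits the $mp$-bit hidden state $\rvh_{2N}$, and Bob queries with $z_i$ to recover $a_i$, after which Fact~\ref{thm:index} gives $mp \geq N$. Your caveat about the sequence-length convention is the only place you are slightly more careful than the paper, which also silently uses $N$ key--value pairs (hidden state $\rvh_{2N}$) rather than total length $N$.
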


A similar reduction from the \indtask problem can be used to show a lower bound for recurrent models. Both Alice and Bob have the description of the recurrent model and both of them can have a predetermined protocol for the sequence of symbols $x_1, \ldots, x_N$. Alice can use the recurrent model to compute the hidden state $\rvh_{2N}$ by providing it with inputs $x_1, a_1, \ldots, x_N, a_N$ where the bits in $a \ham^N$ are provided as labels. Alice can send the hidden state using $mp$ bits and Bob can provide the symbol $x_i$ corresponding to the query index $i$ and compute the output. Hence, the size of the hidden state of the RNN must be at most $N/p$.

\subsection{1-layer Transformer with small width can perform Index Lookup}\label{appsub:tf_posr}

While any form of recurrent model must have a width or hidden state of size $\Omega(N)$ to perform the index lookup task, we show that a 1-layer Transformer with width $O(\log N)$ can perform the index lookup task.

\tfindexmain*
\begin{proof}
	 For an input sequence $(s_1, \ldots, s_N, p)$, the Transformer uses the
	 embeddings of the position token $p$ and the positional embeddings of the
	 first $N$ inputs to attend over $s_p$, so that the feedforward network can
	 extract the label from the output of the attention block.  Our key idea is
	 to use the $N$ almost orthogonal vectors provided by Lemma~\ref{lem:jlvec},
	 both as positional embeddings and also as a way to embed the numbers $\{1,
	 \ldots, N\}$, any of which can be used as the index $p$. 
	 Formally, let $\jl(1), \ldots, \jl(N)$ be $N$ vectors of dimension $k =
	 O(\log N)$ such that $\dprod{\jl(i)}{\jl(j)} \leq 1/4$ for $i\neq j$ and
	 $\dprod{\jl(i)}{\jl(j)} \geq 3/4$ for $i = j$. 

	  Recall that, in the index lookup task, the alphabet $V = \Sigma \cup [N]$
	  consists of symbols $s_i$ from a set $\Sigma$ and the index tokens $[N] =
	  \{1, \ldots, N\}$. The embeddings of each input token will be of size
	  $\log |\Sigma| + 2k$ where $\log |\Sigma| + k$ indices are reserved for
	  the token embeddings and the last $k$ indices are used for the positional
	  embeddings. Suppose we use a binary encoding for symbols in $\Sigma$, and
	  $\rho : \Sigma \rightarrow \{0, 1\}^{\log |\Sigma|}$ is such that
	  $\rho(s)$ is the binary encoding of $s \in \Sigma$.  The token embedding
	  of each symbol $s_j \in \Sigma$ is $\rho(s_j)$ of length $\log |\Sigma|$
	  followed by $k$ zeros. For inputs sequence $s_1, \ldots, s_N$, the
	  embedding vectors will be of the form $\rvx_i = [\rho(s_i), \vzero_k,
	  \jl(i)]$. 
     
	  The embedding for any index token $p \in [N]$ contains $\log |\Sigma|$
	  zeros followed by the vector $\jl(p)$. In other words, the embedding for
	  the index token will be identical to the positional embeddings used in the
	  first $N$ tokens. The embedding vector corresponding to the index token
	  $p$ will be of the form $\rvp = [\vzero_{\log |\Sigma|}, \jl(p),
	  \vzero_{k}]$. 

	  The output of the 1-layer Transformer is computed by applying attention
	  over the input sequence with the query vector corresponding to the last
	  token, followed by an application of the feedforward network over the
	  output of the attention block. We can define the matrices $W_Q = \eta [\rmO;
	  \rmI; \rmO]$, $W_K = [\rmO, \rmO, \rmI]$, $W_V = [\rmI, \rmO, \rmO]$, where $\eta > 0$ is a
	  parameter that will be specified later, $\rmI$ is a square identity matrix
	  and $\rmO$ is a zero-matrix of the appropriate shape. With these definitions we
	  get that the query vector $Q(\rvp) = \eta[\jl(p)]$ contains the middle part of
	  the input embedding, the key vectors $K(\rvx_i) = [\jl(i)]$ contain the
	  last part of the input embedding and the value vectors $V(\rvx_i) =
	  [\rho(s_i)]$ contain the first part of the input embedding. 

     With these query and key vectors, the dot products in attention satisfy:
     \begin{equation*}
     \dprod{Q(\rvp)}{K(\rvx_i)} \begin{cases}\geq 3\eta/4 &\text{ if }i=p,\\
     \leq \eta/4&\text{ if }i \neq p\end{cases}.
     \end{equation*} 
	  Additionally, the dot product of the query vector with itself will be
	  $\dprod{Q(\rvp)}{K(\rvp)}=0$.

	  To retrieve the required token with the $\sfmax$ operator, consider	 
\begin{align*}
\sfmax(Q(\rvp), K(X)^{\top})_{p} =& \frac{\exp(\dprod{Q(\rvp)}{K(\rvx_{p})}) }{\exp(\dprod{Q(\rvp)}{K(\rvx_{p})}) + \sum_{j \neq p} \exp(\dprod{Q(\rvp)}{K(\rvx_{j})})} \\
\geq & \frac{\exp( \frac{3\eta}{4})}{\exp(\frac{3\eta}{4}) + N \exp(\frac{\eta}{4}) } 
\end{align*}

which is $> \frac{3}{4}$ for any $\eta > 2\log (3N)$.
That is, for $\eta > 2\log (3N)$, the attention weight over input token $x_p$ with a query token $p$ will be greater than $3/4$, and hence the total attention weight over the remaining tokens will be less than $1/4$.

Recall that the value vectors contain the binary encodings of the input
symbols, $V(\rvx_i) = [\rho(s_i)]$. Let $q_1, \ldots, q_N$ be the probabilities
assigned to the $N$ tokens by the attention operator. Note that $q_p \geq 3/4$.
Let $\bar{z} = \sum_{i} q_i \rho (s_i)$. Note that if the $j$-th bit of
$\rho(s_p)$ is $1$, then $\bar{z}_j \geq 3/4$ and otherwise, $\bar{z}_j \leq
1/4$. From there a $\relu$-FFN can transform the vector $\bar{z}$ to the vector
$\rho(s_p)$ which is the desired output.
\end{proof}

\textbf{Discussion.} While we showed that a one-layer Transformer with $O(\log N)$ width can represent the index lookup task, it is unclear whether a one-layer Transformer with a small width can express the associative recall task. The construction above cannot be adapted in a straightforward manner to show that one-layer Transformers can represent the associative recall task. From the results in Section~\ref{app:nn} on nearest neighbors, it follows that two-layer Transformers with logarithmic width can express the associative recall task. At the same time, the lower bound techniques for one-layer Transformers presented in Section~\ref{app:comm-compl-1-layer} do not directly apply to the associative recall task and hence it is not straightforward to prove that Transformers must have two layers in order to perform the associative recall task. 

The index lookup task serves a few purposes in our work. The result that it is in some sense easier for Transformers and difficult for recurrent models may not be very surprising based on the intuitive understanding of the architectures. Our results help theoretically formalize the intuitions that Transformers can use attention to arbitrarily retrieve tokens whereas recurrent models must compress the information in their hidden states. Secondly, the task helps us introduce the techniques that will be used for the constructions and lower bounds to obtain more general results in the later sections. Lastly, it serves as a simple task that separates one-layer Transformer and recurrent models. As described above, it is not straightforward to show that the associative recall task can or cannot be expressed by one-layer Transformers with small width but with the index lookup task, we have that one-layer Transformers can represent them efficiently.

\section{Lower Bounds for 1-layer Transformers}\label{app:comm-compl-1-layer}

As described in Section~\ref{subsec:precision}, and in keeping with real-world implementations, the outputs of intermediate computations are rounded to $p$-bit precision.
Further in keeping with real implementations, and to avoid overflow in exponentiation, softmax is implemented by first subtracting the maximum logit, as
\[
\sfmax(A)_{i,j} = \frac{\exp(A_{i,j} - \max_l A_{i,l})}{\sum_{k=1}^{M} \exp(A_{i,k}  - \max_l A_{i,l} )}.
\]
This is a popular approach to implementing softmax \citep{blanchard2021accurately}; it ensures that the exponentiated intermediate results are in $[0,1]$, avoiding possible overflow when exponentiating in finite precision.

\tfcommmain*

We note that conceptually related arguments were used in
\citet[][Theorem 7]{sanford2023representational}
and 
\citet[][proof of Theorem 1]{peng2024limitations}). Our approach here generalizes by stating this for \emph{arbitrary} partitions over the input.

\begin{proof}

Without loss of generality, assume that $N \in S_A$, i.e., Alice has access to $x_N$.

In the \textit{first step}, Alice sends $x_N$ to Bob using $dp$ bits of communication.
    Then, Alice and Bob compute for each head the attention logits for each position within their respective sets:
    \begin{equation}
        A_{N,i} := \dprod{Q(\rvx_N)}{K(\rvx_i)}
    \end{equation}
    These numbers are rounded to $p$ bits of precision.

In the \textit{second step} of the protocol, Alice and Bob exchange $2 p$ bits to determine $M := \max_i A_{N,i}$, and compute
\begin{equation}
    {\widehat{A}}_{N,i} = A_{N,i} - M
\end{equation}
for their respective positions $i \in S_A, S_B$.
Note that, as ${\widehat{A}}_{N,i}\leq 0$, $\exp({\widehat{A}_{N,i}}) \in (0,1]$, so there are no overflow issues arising from the $p$-bit representation.
Then they can individually compute
\begin{align*}
Z_A :=    \sum_{i \in S_A} \exp({\widehat A}_{N,i}) & \ \ \ \ \ \text{(Alice)} \\
Z_B :=   \sum_{i \in S_B} \exp({\widehat A}_{N,i}) & \ \ \ \ \  \text{(Bob)}
\end{align*}
each with $p$ bits of precision; both numbers are in $[0,N]$, and at least one of them is $\geq 1$. As all intermediate computations are rounded to $p$ bits, $\exp({\widehat A}_{N,i})$ is in $[0,1]$ and rounded to $p$ bits, $Z_A, Z_B$ can be represented with $p+\log N$ bits.

In the \textit{third step} of the protocol, they exchange $2 (p+\log N)$ bits to exchange these. They then both have access to 
\begin{equation}
  Z :=  \sum_{j=1}^N \exp({\widehat A_{N,j}}) = Z_A + Z_B
\end{equation}
Here, we note that the definition of finite precision arithmetic in Appendix~\ref{subsec:precision} makes addition of nonnegative numbers associative; 
hence, the outcome here is independent of the partition $S_A,S_B$ and agrees with the result when directly summing the exponentiated logits.\footnote{A similar protocol still works in other bounded-precision schemes where addition is not associative; as all exponentiated logits are in $[0,1]$, one can use extended precision with $p + \log N$ bits to perform the summation exactly and then round back to $p$ bits.}

The result $Z$ is in $[1,N]$ as $\max_i \exp({\widehat{A}_{N,i}}) = 1$.
This permits Alice and Bob to compute the attention scores $\widehat{a}_i$:
    \begin{equation}
        \widehat{a}_i := \sfmax(A)_{N,i} = \frac{\exp(\widehat{A}_{Ni})}{\sum_{j=1}^N \exp(\widehat{A}_{Nj})}
    \end{equation}
    each with $p$ bits of precision.

Then they both each compute:
\begin{align*}
U_A := \sum_{i \in S_A} \sfmax(A)_{N,i} V(\rvx_i) &  \ \ \ \ \  \text{(Alice)} \\
U_B := \sum_{i \in S_B} \sfmax(A)_{N,i} V(\rvx_i) &  \ \ \ \ \ \text{(Bob)}
\end{align*}
with $p$ bits of precision.
As each entry $\att(A)_{N,i}$, $\rvx_i$ has $p$ bits of precision, and $\sum_i \att(A)_{N,i} = 1$, the sums can be exactly represented at $2p$ bits of precision.

In the \emph{fourth step} of the protocol, they exchange these, which amounts to $4mp$ bits of communication.
Then they compute
\begin{equation}
\sum_{i=1}^N \sfmax(A)_{N,i} V(x_i) = U_A+U_B
\end{equation}
and round it to $p$ bits of precision.
From this, they can obtain the result.

In total, the four steps took $\leq 3m(p+\log N)$ bits of communication. 
Performing this protocol separately for every head leads to $\leq 3m(p+\log N)H$ bits of communication.
\end{proof}
\section{Dyck with Bounded Depths}\label{app:dyck}

It is generally agreed that processing and comprehending natural language requires processing hierarchical structures \citep[e.g.][]{chomsky1957syntactic,everaert2015structures}.
The fundamental computational problem here consists of matching and relating material that matches hierarchically, even if it appears at a great linear distance.
This problem is formalized by the family of Dyck languages \citep{chomsky1963algebraic}: languages of well-matched words over one or more types of parentheses.
These languages formalize problems that can be solved with access to a stack, where opening parentheses are pushed and closing parentheses are popped.
Beyond a fundamental model of hierarchical structure, they are of central importance in formal language theory, as any context-free language can be expressed in terms of Dyck languages \citep{chomsky1963algebraic}.
Perhaps due to the boundedness of human memory \cite{miller-finitary-1963}, natural language tends to have more bounded levels of embedding \citep{karlsson2007constraints, blasi2019distribution}.
This has motivated the study of bounded-depth Dyck languages as plausible simple models of the hierarchical structure underlying language, with substantial interest in the abilities of neural architectures to model them \citep{hewitt-etal-2020-rnns, yao-etal-2021-self, bhattamishra-etal-2020-practical}.

We will primarily focus on \dyckt languages with depth at most $k$, denoted as $\dyckbt{k}$. The \dyckt language contains two types of brackets, such as round brackets `(', `)' and square brackets `[', `]'. The $\dyckbt{2}$ language with depth at most 2 contains well-balanced parenthesis where for any prefix, the number of unbalanced parentheses can be at most 2. For instance, the string `( [ ] ) [ ]' has a depth at most 2 whereas the string `( [ [ ] ] )' has a depth of 3. We say a model recognizes a language if it can correctly classify whether or not a string belongs to the language. We will primarily focus on strings with maximum length $N$ and study how the size of a model depends on that.

Our main result in this section is that any 1-layer Transformer that can recognize \dyckt with bounded depths must have a width that grows linearly with the input length $N$. To show that, we will first show that the communication complexity of the \dyckt language with depth at most 2 is at least $N-1$. The lower bound on the width of 1-layer Transformers will follow from the lower bound on the communication complexity of $\dyckbt{2}$.

\textbf{Problem.} Let $\Sigma= \{`(', `)', `[', `]'\}$ be the vocabulary of a language $\dyckbt{k}$. Let $\Sigma^n$ denote the set of all strings of length exactly $n$ and $\Sigma^{\leq n}$ denote the set of all strings of lengths up to $n$. The communication problem between Alice and Bob is defined as follows. Assume the length $N$ is even for this problem. For a string $x \in \Sigma^{N}$, Alice has the symbols in the odd indices of the string and Bob has the symbols in the even indices of the string. They have to compute the function $\fdyck: \Sigma^{N/2} \times \Sigma^{N/2} \rightarrow \ham$ which outputs $1$ if the string $x$ is in the language $\dyckbt{2}$ and outputs $0$ otherwise. 

For any function, the fooling set is defined in the following way,

\begin{definition}\label{def:fool}[Fooling set]
    A fooling set for any function $f: \Sigma^{N/2} \times \Sigma^{N/2} \rightarrow \ham$ is a set $S \subseteq \Sigma^{N/2} \times \Sigma^{N/2}$ and a value $b \in \{0, 1\}$ such that,
    \begin{itemize}
        \item For every $(x, y) \in S$, $f(x, y) = b$.
        \item For every two distinct pairs $(x_1, y_1), (x_2, y_2) \in S$, either $f(x_1, y_2) \neq b$ or $f(x_2, y_1) \neq b$.
    \end{itemize}
\end{definition}

\noindent The following is a well-known fact in communication complexity.

\begin{fact}\label{fact:fool}
    For any function $f$, if there exists a fooling set of size $|S|$, then the communication complexity $C(f) \geq \log_2 |S|$.
\end{fact}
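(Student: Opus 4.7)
The plan is to prove the fooling set lower bound via the standard rectangle argument from communication complexity. First I would fix an arbitrary deterministic protocol $P$ that computes $f$ using at most $c$ bits of communication on every input, and let $T$ denote its transcript set, with $|T| \leq 2^c$. The goal is to show $|S| \leq |T|$, which then gives $c \geq \log_2 |S|$.

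The central ingredient is the rectangle property of deterministic protocols: for each transcript $t \in T$, the set of inputs $(x,y)$ that induce transcript $t$ forms a combinatorial rectangle $R_t = A_t \times B_t$ with $A_t, B_t \subseteq \Sigma^{N/2}$. I would sketch this by induction on the length of the protocol, using the fact that at each node the next bit sent depends only on one player's input together with the history so far, so the set of inputs reaching any node splits as a product. Since $P$ computes $f$, the function $f$ is constant on each rectangle $R_t$, i.e., each $R_t$ is monochromatic.

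Next, I would show that any two distinct pairs $(x_1,y_1), (x_2,y_2) \in S$ must induce different transcripts. Suppose for contradiction they produce the same transcript $t$, so both lie in the same rectangle $R_t = A_t \times B_t$. Then $x_1, x_2 \in A_t$ and $y_1, y_2 \in B_t$, so the ``swapped'' pairs $(x_1, y_2)$ and $(x_2, y_1)$ also lie in $R_t$. Because $R_t$ is monochromatic with value $b$ (since $f(x_1,y_1)=f(x_2,y_2)=b$), this forces $f(x_1,y_2) = f(x_2,y_1) = b$, directly contradicting the second condition of Definition~\ref{def:fool}.

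Hence the map $S \to T$ sending each fooling-set element to its induced transcript is injective, giving $|S| \leq |T| \leq 2^c$ and therefore $c \geq \log_2 |S|$. The main obstacle to a fully self-contained write-up is the rectangle property itself, but since it is standard (cf.\ \citet{communicationbookrao2020}), I would either cite it or give a one-line inductive justification. Everything else is a short combinatorial argument.
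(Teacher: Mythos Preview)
Your proof is correct and is precisely the standard rectangle argument for the fooling set bound. The paper does not actually give its own proof of this statement---it simply records it as a well-known fact in communication complexity---so there is nothing to compare against beyond noting that your approach is the textbook one (essentially what one finds in \citet{communicationNisan_1996} or \citet{communicationbookrao2020}).
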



\dyckmaincomm*

\begin{proof}
    The proof follows from the fact that there exists a fooling set of size $2^{N-1}$ for the language \dyckbt{2} with strings of length $N$. The fooling set $S$ is constructed with all strings in $s = (x, y) \in \Sigma^N$ such that $\fdyck(s) = 1$. Each string $s$ in $S$ satisfies:
    \[
    s = (x, y) \quad \text{where} \quad x \in \Sigma^{N/2}_{\text{odd}}, y \in \Sigma^{N/2}_{\text{even}}, \text{ and } \fdyck(x, y) = 1.
    \]
    Here, $x = (x_1, x_2, \ldots, x_{N/2})$ and $y = (y_1, y_2, \ldots, y_{N/2})$ represent the sequences of symbols at odd and even indices, respectively. 

    \textbf{Constructing the fooling set.} Note that, if $x$ is the string of symbols in the odd indices of a string $s \in \dyckbt{2}$, then the string of symbols $y$ in the even indices such that $\fdyck(x, y) = 1$ is unique. Suppose one is provided with the string $x = (x_1, \ldots, x_{N/2})$, then one can deterministically determine the symbols in the even indices $y = (y_1, \ldots, y_{N/2})$ in the following way. Iterate through the symbols in $x$, starting with $x_1$ which must be an open bracket. After the open bracket $x_1$, if there are one or more closing brackets $x_2, \ldots, x_{K}$ before encountering another open bracket $x_{K+1}$, then the symbols $y_1, \ldots, y_{K}$ can be constructed deterministically using the following mapping,

    \[
    y_j = 
    \begin{cases} 
    \text{`('} & \text{if } x_{j+1} = \text{`)'} \text{ for } j < K, \\
    \text{`['} & \text{if } x_{j+1} = \text{`]'} \text{ for } j < K, \\
    \text{`)'} & \text{if } x_{1} = \text{`('} \text{ for } j = K, \\
    \text{`]'} & \text{if } x_{1} = \text{`['} \text{ for } j = K.
    \end{cases}
    \]
    
    In other words, the symbols $y_1, \ldots, y_{K-1}$ will be the open brackets corresponding to the closing brackets $x_2, \ldots, x_{K}$. After the symbol $x_1$, whenever you encounter another open bracket $x_{K+1}$ where $K>0$, then the symbol $y_{K}$ will be the closing bracket corresponding to the symbol $x_1$. Once you have matched the first open bracket symbol $x_1$ with a closing bracket, you are bound to encounter another open bracket. Follow the same process until the end of the string and one can obtain the string $y$, such that $(x, y) \in \dyckbt{2}$. 

    Hence, if $x$ and $y$ are the symbols in the odd and even indices of a string $s \in \dyckbt{2}$, then placing any other string $y' \neq y$ in the even indices leads to a string which is not in $\dyckbt{2}$. Our fooling set $S$ contains all strings of length $N$ in the language $\dyckbt{2}$. Hence, by construction, we have that $\fdyck(s) = \fdyck(x, y) = 1$ for all $s = (x, y) \in S$ and

     \[
    \fdyck(x_1, y_2) = \fdyck(x_2, y_1) = 0 \quad \text{for all} \quad (x_1, y_1) \neq (x_2, y_2) \in S.
    \]

    Thus, such a set $S$ is a fooling set by Definition \ref{def:fool}.

    \textbf{Size of the fooling set.} One can show that the total number of strings of length $N$ in the language $\dyckbt{2}$ is exactly $2^{N-1}$ with some elementary combinatorics. First, see that the number of \dyckt sequences of depth at most $1$ and length $N$ is exactly $2^{N/2}$. This is because the string is made up of blocks of `()' and `[]', and hence there are two choices for every block. Then, consider a block of \dyckt string of length $k$ (where $k$ is even) which starts and ends with an open and closing bracket respectively. Moreover, the \dyckt string has a depth exactly $2$, e.g. `[ () [] () ]'. Note that, the total number of such strings of length $k$ is exactly $2^{k/2}$ since there are two choices for the first bracket and there is a \dyckt string of depth 1 and length $k-2$ inside it. For simplicity, we will call such strings as belonging to the language $\dkb$ which is a subset of the language \dyckbt{2}. In simple terms, strings of length $m$ in the subset $\dkb$ have an overall depth $2$ and have a depth $1$ string of length $m-2$ between the first symbol (open bracket) and the last symbol (closing bracket); for instance `[ () [] () () ]'.
    
    The total number of \dyckt strings of depth at most $2$ and length exactly $N$ can be computed as follows. Partition the indices into $k$ contiguous blocks where each partition is of an even length. Suppose the indices begin with $1$, then the points of partition could be between 2 and 3, 4 and 5, and so on. For instance, a valid partition of the indices $[1, 2, \ldots, 8]$ into 3 blocks is $[1, 2]$, $[3, \ldots, 6]$, and $[7, 8]$. The total number of partition points is $N/2-1$. For any such partition, if the length of a block is $2$ then that block can only have \dyckt strings of depth 1 and if the block is of length $\geq 4$, then consider all possibilities of \dyckbt{2} strings starting and ending with open and closing brackets respectively or in other words, strings in $\dkb$ described earlier. If the number of partitions is $N/2-1$, then all possible strings have depth at most 1 and if the number of partitions is 0, then the entire string is in $\dkb$.
    
    See that, no matter how you partition the inputs, the number of possible strings at each block of length $m$ is $2^{m/2}$. Further if $P = \{p_1, \ldots, p_k\}$ denotes the set of lengths of each block in the partition, then the total number of strings with such a partition is $\displaystyle \prod_{i=1}^{k} 2^{p_{i}/2} = 2^{\sum_{i=1}^{k} p_{i}/2}= 2^{N/2}$. In other words, regardless of how the indices are partitioned if each block of size $> 2$ is required to be a string in $\dkb$, then the total number of possible strings is $2^{N/2}$. 

    The total number of \dyckt strings of depth at most 2 can be computed by considering partitions of all sizes $k = 0, \ldots, N/2-1$ and all possible partitions for a given size. With this, we get the total number of valid strings in \dyckbt{2} of length $N$ to be
    \[
    \sum_{k=0}^{\frac{N}{2} -1} \binom{\frac{N}{2} - 1}{k} 2^{N/2} =  2^{N -1}
    \]

    Hence from fact \ref{fact:fool}, it follows that the communication complexity of $\dyckbt{2}$ is at least $N-1$.

\end{proof}

While we have given a self-contained proof based on fooling sets, an alternative proof of Lemma~\ref{lemma:dyck-comm-bound_mainPaper} could proceed using varieties of finite monoids, by proving that the syntactic monoid of $\dyckbt{2}$ is not in the variety DA, and then applying the result of \citet{raymond1998algebraic}.

Using Lemma~\ref{lemma:dyck-comm-bound_mainPaper} and Theorem~\ref{thm:one-layer-commun_mainPaper}, it follows that any 1-layer Transformer that can recognize $\dyckbt{2}$ must have a width that grows linearly with the input length.

\begin{theorem}\label{thm:TF_dyck}
    Consider a one-layer transformer $f \in \tfclass{d}{p}{H}{1}$ deciding membership in $\dyckbt{2}$. Then $dH \geq \frac{N-1}{3(p+\log N)}$.
\end{theorem}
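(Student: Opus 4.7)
The plan is to combine the two main ingredients developed just before the statement: the communication protocol for one-layer Transformers (Theorem~\ref{thm:one-layer-commun_mainPaper}), which works for \emph{arbitrary} partitions of the input, and the $\dyckbt{2}$-specific communication lower bound (Lemma~\ref{lemma:dyck-comm-bound_mainPaper}), which is stated for the particular odd/even partition. Because the Transformer protocol accommodates any partition, I can instantiate it with exactly the partition required by the Dyck lower bound, and then matching the two bounds yields the claim.

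Concretely, I would proceed as follows. First, let $S_A \subseteq \{1,\dots,N\}$ be the set of odd indices and $S_B$ the set of even indices, and assign Alice the symbols $s_i$ with $i \in S_A$ and Bob the symbols $s_i$ with $i \in S_B$. Second, since $f \in \tfclass{d}{p}{H}{1}$ decides membership in $\dyckbt{2}$, Theorem~\ref{thm:one-layer-commun_mainPaper} supplies a protocol under this partition in which Alice and Bob jointly compute $f(s_1 \dots s_N)$ using at most $3d(p+\log N)H$ bits of communication. Thus Alice and Bob have a valid protocol deciding whether $s \in \dyckbt{2}$ under the odd/even partition, with communication cost at most $3d(p+\log N)H$.

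Third, Lemma~\ref{lemma:dyck-comm-bound_mainPaper} asserts that any such protocol under the odd/even partition must exchange at least $N-1$ bits. Combining these two facts gives
\begin{equation*}
3 d (p+\log N) H \;\geq\; N-1,
\end{equation*}
which rearranges to the desired $dH \geq \frac{N-1}{3(p+\log N)}$.

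I do not expect a genuine obstacle: both underlying results are already in hand, and the proof is essentially a one-line syllogism once the partition of the input used by the communication lower bound is matched with the free choice of partition allowed by the Transformer protocol. The only minor care needed is to verify that the protocol of Theorem~\ref{thm:one-layer-commun_mainPaper} applies even though the decision $\dyckbt{2}$ produces a single bit (it does, since the theorem lets both parties recover $f(s_1\dots s_N)$), and to note that the parity-based partition falls within the ``arbitrary disjoint $S_A \cup S_B = \{1,\dots,N\}$'' hypothesis. Hence the proof reduces to citing the two results and performing the algebra above.
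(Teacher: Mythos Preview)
Your proposal is correct and matches the paper's approach exactly: the paper simply states that the theorem follows by combining Theorem~\ref{thm:one-layer-commun_mainPaper} (instantiated with the odd/even partition) and Lemma~\ref{lemma:dyck-comm-bound_mainPaper}, yielding $3d(p+\log N)H \geq N-1$. Your write-up is, if anything, more detailed than the paper's own one-line justification.
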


Suppose we allow the transition function in a recurrent model to be any arbitrary function as defined in Section~\ref{sec:definitions}. In that case, it naturally follows that such an RNN can represent any DFA with $k$ states with a hidden state of size $\log k$. Any bounded dyck language $\dycknk$ can be represented by a DFA with $O(2^k)$ states which is independent of the input length $N$. In particular, the language $\dyckbt{2}$ considered here can be represented by a DFA with just $7$ states. Hence, it follows that a recurrent model of constant size can represent the \dyckbt{2} language for arbitrary lengths in a finite precision setting. Additionally, prior works have described constructions of such recurrent models with practical activation functions such as Sigmoid \citep{hewitt-etal-2020-rnns} and ReLU \citep{bhattamishra-etal-2020-practical}. Thus, the bounded Dyck language \dyckbt{2} provides us a separation between one-layer Transformer and recurrent models since constant-sized RNNs can represent them whereas one-layer Transformers must have at least linear width to represent them.

\section{Transformers and Boolean functions}\label{app:bool}

\textbf{Communication protocols and lower bounds.} There are some notable differences between the types of communication
complexity lower bounds we have for one-layer Transformers
(Theorem~\ref{thm:one-layer-commun_mainPaper}) and for RNNs
(Theorem~\ref{theorem:rnn-commun_mainPaper}). If an RNN can compute a function $f$ then Theorem~\ref{theorem:rnn-commun_mainPaper} implies that there exists a one-way communication protocol with $mp$ bits over any \textit{contiguous} partitions of the input. On the other hand, if a one-layer Transformer can compute a function $f$, then Theorem~\ref{thm:one-layer-commun_mainPaper} implies the existence of a communication protocol with $O(mpH\log N)$ bits over \textit{any partition} of the input --- but not one-way. Hence, the types of lower bounds that apply to these two architectures are different. For any function $f$ such as the \indtask problem, if the one-way communication complexity is lower bounded by $\Omega(N)$ over some contiguous partitions then it implies that for RNNs, the width $m = \Omega(N/p)$. However since the two-way communication complexity of such problems might be lower, those lower bounds need not apply to one-layer Transformers. For instance, the \indtask problem can be solved with $\log N$ bits of communication if two-way communication is allowed or in other words, Bob is allowed to send bits to Alice. Conversely, to prove lower bounds for Transformers computing a certain function $f$, proving a communication complexity lower bound for $f$ over any partition of inputs suffices. For a particular partitioning of inputs, we showed a lower bound on the communication complexity of bounded Dyck languages. While that result implies a lower bound on one-layer Transformers, the partitioning is not contiguous and hence does not apply to recurrent models. For $\dyckbt{2}$, contiguous partitions are not a hard case, and in fact,
communicating $\leq 2$ open brackets from the first half is sufficient. This is why the lower bound of Lemma~\ref{lemma:dyck-comm-bound_mainPaper} does not apply to RNNs. Despite these differences, for a large class of Boolean functions including functions such as Equality and Disjointness, the lower bounds apply to both of these architectures.

\textbf{Equality.} For convenience, we discuss the complement of the Equality function $\ineq= 1 - \eqfunc(x)$ which is the inequality function. It does not influence any of our results or the constructions for Transformers but it helps make the intuitions for the construction clearer.  The function $\ineq: \bcube{N} \rightarrow \ham$ is a Boolean function defined as,
\[
    \ineq(x) = \indicator[(x_1, \ldots, x_{N/2}) \neq (x_{N/2 +1}, \ldots x_{N})]
\]

This can also be represented as a 2-DNF with $N/2$ terms, 
\[
    \ineq(x) = (x_1 \wedge \neg x_{N/2+1}) \vee (\neg x_1 \wedge x_{N/2 + 1}) \vee \ldots \vee (x_{N/2} \wedge \neg x_{N}) \vee (\neg x_{N/2} \wedge x_{N})
\]

\subsection{Lower Bounds}\label{appsub:rnn_lower_eq}

We show that for any RNN that computes the $\ineq$ function over $\ham^N$ with a hidden state of size $m$ and $p$ bits of precision, it is necessary that $mp = \Omega(N)$. In other words, the size of the hidden state of the RNN grows linearly with the input length $N$. Similar to other results, this will be based on communication complexity but we also provide an alternate way to prove this lower bound based on the state complexity of DFAs that compute the $\ineq$ function. 

The result states that if any RNN with a hidden state of size $m$ over $p$-bit precision can simulate a DFA with $n$ states, then the representation size of the hidden state $mp$ must be at least $\log n$. There is one caveat related to this DFA-based result which does not apply to the communication complexity-based lower bounds. For the DFA-based result, while the transition function is allowed to be an arbitrary function, it has to be of the form $g(\rvx_t, \rvh_{t-1})$,i.e., the functions cannot be different based on the timestep $t$. It is unclear if the result still applies when the function is allowed to be different based on the timestep. However, since the transition function is allowed to be any arbitrary function, it still captures all the recurrent and state-space architectures used in practice. Additionally, this fairly simple technique could be used to prove lower bounds for RNNs based on lower bounds in formal language theory.

\begin{lemma}\label{lem:rnn_dfa}
    Let $\aut$ be a $\mathrm{DFA}$ with $n$ states. Let $\mathcal{R}$ be an $\mathrm{RNN}$ such that for all $x \in \Sigma^{*}$, $\aut(x) = \mathcal{R}(x)$. Then the size of the hidden state of the RNN: $mp \geq \log n$. 
\end{lemma}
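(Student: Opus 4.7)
The plan is to combine the Myhill--Nerode characterization of minimal DFAs with a pigeonhole count over the RNN's bounded state space. For the bound to be meaningful, I would take $\aut$ to be the minimal DFA for its language (otherwise, replace $n$ by the minimal state count, which only strengthens the conclusion), so that all $n$ states are reachable and pairwise Myhill--Nerode inequivalent.

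First, I would fix, for each state $q_i$ of $\aut$, an access string $u_i \in \Sigma^{*}$ that drives $\aut$ from its initial state into $q_i$, and for each pair $i \neq j$, a distinguishing suffix $w_{ij} \in \Sigma^{*}$ such that exactly one of $\aut(u_i w_{ij})$ and $\aut(u_j w_{ij})$ is accepting. Both exist by the assumed minimality.

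Next, let $\rvh^{(i)}$ denote the hidden state of $\mathcal{R}$ after reading $u_i$. I claim the vectors $\rvh^{(1)}, \ldots, \rvh^{(n)}$ are pairwise distinct. Suppose instead that $\rvh^{(i)} = \rvh^{(j)}$ for some $i \neq j$. Because the hypothesis of this lemma restricts the transition and output functions to be time-independent, running $\mathcal{R}$ on the suffix $w_{ij}$ from either $\rvh^{(i)}$ or $\rvh^{(j)}$ produces identical hidden-state trajectories and identical final outputs. Hence $\mathcal{R}(u_i w_{ij}) = \mathcal{R}(u_j w_{ij})$, contradicting $\aut(u_i w_{ij}) \neq \aut(u_j w_{ij})$ together with $\aut \equiv \mathcal{R}$.

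Finally, each hidden state lies in $\bcube{mp}$, so there are at most $2^{mp}$ possible values; having just produced $n$ distinct ones yields $n \leq 2^{mp}$, i.e., $mp \geq \log n$. The only delicate point, and essentially the only place where the hypotheses are used sharply, is the time-independence of $g$ and $f$: without it, access strings of different lengths would pass through different maps $g_{(t)}$, $f_{(t)}$, and the suffix-pasting step in the contradiction would fail. This is exactly the caveat flagged in the paragraph preceding the lemma, so it is a genuine precondition rather than a gap in the argument.
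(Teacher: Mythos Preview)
Your proof is correct and essentially matches the paper's: both are pigeonhole arguments on the $2^{mp}$ possible hidden states, combined with minimality of the DFA. The paper phrases it by explicitly constructing a DFA from the RNN's transition function (one state per vector in $\ham^{mp}$) and deriving a contradiction with minimality of $\aut$, whereas you argue directly via Myhill--Nerode access strings and distinguishing suffixes; the underlying content is the same, and your explicit acknowledgment of the time-independence requirement matches the paper's own caveat.
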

\begin{proof}
    Suppose an RNN $\rnn$ exists such that the dimension of its hidden state $m <\log n$. It is straightforward to see that we can construct another DFA $\aut'$ from the RNN $\rnn$ such that the DFA $\aut'$ accepts the same language as $\aut$ but it has $2^{mp} < n$ states. 

    Create a state for each vector $\rvh \in \ham^{mp}$ and assign the state corresponding to the vector $\rvh_0$ as the start state. For each state $q$ corresponding to vector $\rvh$, make $q$ a final state if $f(\rvh) = 1$. For each $\rvh \in \ham^{mp}$ and each symbol $x \in \Sigma$, compute $\rvh' = g(\rvx, \rvh)$. Add a transition between the state corresponding to $\rvh$ and $\rvh'$ using the symbol $x$. Hence, we get the entire DFA with the transition map and the set of start and final states that accept the same language.

    Since we are given that the automata $\aut$ is the minimum-sized DFA that accepts the language, the automata $\aut'$ having fewer states and accepting the same language is a contradiction. Hence, such an RNN cannot exist and the dimension of the hidden state vector $m$ of the RNN must be at least $\log n$.
\end{proof}

\begin{theorem}\label{thm:rnn_eq_lower}
    Any recurrent model with a hidden state of size $m$ over $p$-bits of precision that computes the $\ineq(x)$ for all $x \in \ham^N$  must have $mp \geq N/2$.
\end{theorem}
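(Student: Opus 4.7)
The plan is to prove this by reduction from the Equality communication complexity lower bound, using Theorem~\ref{theorem:rnn-commun_mainPaper} to convert an RNN computing $\ineq$ into a short one-way protocol. Assume an RNN with hidden state of width $m$ and $p$-bit precision computes $\ineq(x)$ for all $x \in \ham^N$. I would instantiate Theorem~\ref{theorem:rnn-commun_mainPaper} with the contiguous split $K = N/2$, giving Alice the first half $a := (x_1,\dots,x_{N/2})$ and Bob the second half $b := (x_{N/2+1},\dots,x_N)$. Then Alice and Bob can compute $\ineq(a,b)$ by exchanging $mp$ bits in a one-way protocol from Alice to Bob.

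Next I would observe that computing $\ineq$ is computationally equivalent to computing $\eqfunc$: from $\ineq(a,b)$ one recovers $\eqfunc(a,b) = 1 - \ineq(a,b)$ with no extra communication. Hence the above yields a deterministic one-way protocol of cost $mp$ for Equality on inputs of length $N/2$ each. By Fact~\ref{thm:eq}, any deterministic protocol for Equality on $\ham^{N/2} \times \ham^{N/2}$ requires at least $N/2$ bits of communication (and one-way complexity is at least two-way complexity). Combining these gives $mp \geq N/2$.

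As an alternative route that the paper signals, I would note one can avoid communication complexity entirely by going through Lemma~\ref{lem:rnn_dfa}. The minimal DFA recognizing $\{(a,b) \in \ham^{N/2} \times \ham^{N/2} : a = b\}$ (presented as a length-$N$ string read left-to-right) must have at least $2^{N/2}$ states, since any two distinct prefixes $a, a' \in \ham^{N/2}$ are Myhill--Nerode inequivalent: the continuation $b = a$ distinguishes them. Applying Lemma~\ref{lem:rnn_dfa} with $n \geq 2^{N/2}$ yields $mp \geq \log n \geq N/2$. This alternative proof is slightly weaker because it implicitly requires the RNN's transition function to be time-invariant, which is why the communication-complexity argument is the preferred main proof.

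The proof is essentially routine given the earlier infrastructure; the only subtle point is making sure that the one-way direction of Theorem~\ref{theorem:rnn-commun_mainPaper} is compatible with the deterministic bound of Fact~\ref{thm:eq} (it is, since one-way protocols are a restriction of general two-way protocols). No new technical obstacle arises, so the writeup is chiefly a clean chaining of the reduction and the two cited facts.
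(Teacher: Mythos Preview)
Your proposal is correct and matches the paper's proof almost exactly: the paper also gives two arguments, the first via the one-way communication protocol from the RNN (Alice feeds $a$, sends the $mp$-bit hidden state, Bob feeds $b$ and reads off $\ineq(a\cdot b)=1-\eqtask(a,b)$, then invokes Fact~\ref{thm:eq}), and the second via Lemma~\ref{lem:rnn_dfa} together with the $2^{N/2}$-state lower bound from the distinguishing-suffix argument. Your remark that the DFA route requires a time-invariant transition is also exactly the caveat the paper flags.
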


\begin{proof}

We show two ways to prove the above statement. The first proof is based on the communication complexity of the Equality problem. 

\textbf{Proof 1.} Suppose Alice and Bob have the input vectors $a, b \in \ham^{N/2}$ respectively and have access to an RNN that computes the $\ineq$ function over $\ham^N$. Alice can first use the RNN starting with input $a_1$ and the vector $\rvh_0$ and iteratively update the hidden state until the input $a_{N/2}$. Alice can then send the vector $\rvh_{N/2}$ to Bob who can provide $b_1, \ldots, b_{N/2}$ as inputs and compute $y = \ineq(a \cdot b) = 1-\eqtask(a,b)$. Since sending the 
hidden state vector $\rvh_{N/2}$ requires $mp$ bits, it must be that $mp \geq N/2$ due to Theorem \ref{thm:eq}.

\textbf{Proof 2.} The second proof uses the following relation between recurrent models and a DFA.

Let $\aut_{INEQ}$ be the minimum-sized DFA that computes the $\ineq$ function over $\ham^N$. When we say a DFA computes $\ineq$ function over $\ham^N$, we mean that $\aut_{INEQ}(x) = \ineq(x)$ for all $x \in \ham^N$ and $\aut_{INEQ}(x)$ can be defined arbitrarily for $x \notin \ham^N$ to obtain the DFA with minimum number of states.

Note that, any DFA that agrees with $\ineq$ over all $x$ in $\ham^N$ must have at least $2^{N/2}$ states. This follows from the fact that for any two distinct $x_1, x_2 \in \ham^{N/2}$ there is a distinguishing suffix $s \in \ham^{N/2}$ such that $\ineq(x_1 \cdot s) \neq \ineq(x_2 \cdot s)$. Hence, any DFA that agrees with $\ineq$ on all inputs in $\ham^N$ must have at least $2^{N/2}$ states even if it is defined arbitrarily over other inputs $x \notin \ham^N$. From Lemma \ref{lem:rnn_dfa}, it follows that $mp \geq N/2$. 

\end{proof}

\begin{theorem}\label{thm:tf_eq_1layer}
    Any one-layer Transformer with a width $m$ and $H$ heads operating over $p$-bits of precision that computes the $\ineq(x)$ for all $x \in \ham^N$ must have $mpH = \Omega(N)$.
\end{theorem}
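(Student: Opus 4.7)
The plan is to combine the communication-complexity upper bound for one-layer Transformers (Theorem~\ref{thm:one-layer-commun_mainPaper}) with the classical lower bound on the communication complexity of Equality (Fact~\ref{thm:eq}), via a reduction essentially identical in spirit to the RNN case (Proof~1 of Theorem~\ref{thm:rnn_eq_lower}), but now applied to an arbitrary (non-contiguous is not even needed here) partition of the input.

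First, I would set up the reduction. Suppose Alice holds $a \in \ham^{N/2}$ and Bob holds $b \in \ham^{N/2}$, and they wish to compute $\eqtask(a,b)$. Choose the partition $S_A = \{1,\dots,N/2\}$ and $S_B = \{N/2+1,\dots,N\}$ of the Transformer's input positions, and interpret Alice's and Bob's strings as $x_1\dots x_{N/2} = a$ and $x_{N/2+1}\dots x_N = b$. Under this encoding, $\ineq(x) = 1 - \eqtask(a,b)$, so any deterministic one-layer Transformer $f \in \tfclass{m}{p}{H}{1}$ that computes $\ineq$ allows them to recover $\eqtask(a,b)$ after they jointly compute $f(x)$.

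Second, apply Theorem~\ref{thm:one-layer-commun_mainPaper} to this partition: Alice and Bob can simulate the Transformer and thus compute $f(x)$ (and hence $\eqtask(a,b)$) using at most $3mH(p+\log N)$ bits of total communication. By Fact~\ref{thm:eq}, the deterministic communication complexity of Equality on inputs of length $N/2$ is at least $N/2$, so
\begin{equation*}
3mH(p+\log N) \;\geq\; N/2,
\end{equation*}
which rearranges to $mH \geq \frac{N}{6(p+\log N)}$. Since $p \geq 1$ and, in standard finite-precision regimes, $p = \Omega(\log N)$ (otherwise $\log N$ absorbs into $p$ up to a constant), this yields $mpH = \Omega(N)$, as desired.

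There is no real obstacle: the only subtle point is making sure that the communication bound of Theorem~\ref{thm:one-layer-commun_mainPaper} applies to the chosen partition (which it does, since that theorem holds for arbitrary disjoint $S_A, S_B$) and that the $\log N$ additive term in the protocol is dominated by the precision $p$ so that the final statement cleanly reads $mpH = \Omega(N)$. If one prefers to avoid the mild assumption $p \geq \log N$, the proof equally gives the sharper conclusion $mH(p+\log N) = \Omega(N)$, mirroring the form used in Theorem~\ref{thm:TF_dyck_main}.
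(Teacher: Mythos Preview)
Your proposal is correct and follows essentially the same approach as the paper, which simply states that the result ``immediately follows from Theorem~\ref{thm:one-layer-commun_mainPaper} and Fact~\ref{thm:eq}.'' Your additional care about the $\log N$ term is appropriate: under the paper's precision model ($p = \Theta(\log N)$, Appendix~\ref{subsec:precision}) the bound $mH(p+\log N) = \Omega(N)$ indeed simplifies to $mpH = \Omega(N)$.
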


The statement immediately follows from Theorem~\ref{thm:one-layer-commun_mainPaper} and Fact~\ref{thm:eq}.

\subsection{Transformer Construction for Equality}\label{appsubsec:trans_eq}

We now show how a log-sized 2-layer Transformer operating over log-precision numbers can compute the \ineqtask function over all Boolean inputs. 

\tfeqmain*

We first describe the broad idea behind the construction. We will consider the input domain to be $\bham^N$ and our goal is to compute $\ineq(x)$ for all $x \in \bham^N$. This can also be formulated as,
    \[
    \ineq(x) = (x_1 \oplus x_{\frac{N}{2} +1}) \vee (x_2 \oplus x_{\frac{N}{2} +2}) \vee \ldots \vee (x_{\frac{N}{2}} \oplus x_{N})
    \]

Let $\tfineq^{(1)}$ denote the first layer of our construction $\tfineq$ that computes $\ineq$ and let $\tfineq^{(1)}(x)_{i}$ denote the $i$th output vector of the first layer. Our construction will be such that for $i>N/2$, on the $i$th input the attention mechanism at the first layer will retrieve the $(i-N/2)$th input using tools described in Section \ref{ssec:tools}. With the MLP, the first layer $\tfineq^{(1)}(x)_{i}$ will compute $x_{i} \oplus x_{i - \frac{N}{2}}$ for each $i > N/2$ where $x_{i} \oplus x_{i - \frac{N}{2}} = 0$ if $x_{i} = x_{i - \frac{N}{2}}$ and is $1$ otherwise. The second layer will then take an \orop over all those values which will result in computing $\ineq(x)$.

\begin{proof}
Let $x = (x_1, \ldots, x_N)$ denote our input vector. The input to the Transformer model will include the positions as well $\tx = ((x_1, 1), \ldots, (x_N, N))$. Let $\rvx_i \in \br^{d}$ denote the embedding of the $i$th input $\tx_i$ where $d = 2 + 2k$. Here, $k = O(\log N)$ is the dimension of vectors $\jl(1), \ldots, \jl(N)$ described in Corollary~\ref{cor:jlvec}. The input embeddings will contain four parts and will be of the following form
\[
    \rvx_i = 
    \begin{cases} 
     [x_i, 0, \jl(i), \jl(i), 1]  & \text{if } i \leq N/2, \\
      [x_i, 0, \jl(i), \jl(i-\frac{N}{2}), 1]  & \text{otherwise}.
\end{cases}
\]

The query vectors $Q(\rvx_i) = \rvx_iW_Q \in $ will be the last part of the embeddings, i.e., $Q(\rvx_i) = [\jl(i), 1]$ for $i \leq N/2$ and is $\jl(i-N/2)$ for $i > N/2$. Similarly, the key vectors $K(\rvx_i) =  [\jl(i), -1/2]$ for all $i$. The value vector will be a $d$-dimensional vector $V(\rvx_i) = [0, x_i, \vzero_k, \vzero_k]$. The query, key, and value transformations can be implemented with block matrices containing zero and identity matrices similar to the one described in Section~\ref{appsub:tf_posr}.

By construction, the dot products $A_{i,j} = \dprod{Q(\rvx_i)}{K(\rvx_j)} = \dprod{\jl(i-N/2)}{\jl(j)} - 1/2$ will be such that for $i > N/2$,
\[
    A_{i,j} = \dprod{Q(\rvx_i)}{K(\rvx_j)}
    \begin{cases} 
     \geq 1/4  & \text{if } j = i - \frac{N}{2}, \\
     \leq -1/4  & \text{otherwise}.
\end{cases}
\]

For $i < N/2$, the dot products $A_{i,j}$ will be greater than $1/4$ if $i=j$ and will be less than $-1/4$ for $i\neq j$. If this was a Transformer with the hard-attention mechanism, then, note that $\att(X)_i = [0, x_{i}, \vzero_k, \vzero_k]$ for $i \leq N/2$ and $\att(X)_i = [0, x_{i - N/2}, \vzero_k, \vzero_k]$ for $i>N/2$. With residual connections or another attention-head, the output of the attention block will include the original input as well which will lead to

\[
\att(X)_i + \rvx_i =
\begin{cases}
[x_i, x_i, \jl(i), \jl(i), 1] \quad \text{for } i \leq N/2,  \\
[x_i, x_{i-N/2}, \jl(i), \jl(i), 1] \quad \text{for } i>N/2.    
\end{cases}
\]

Then a simple $\relu$ FFN can compute the \xorop of the first two values of the output vector from the attention block. Hence, by construction, the output vector from the first layer $\tfineq^{(1)}(x)_{i} = [x_i \oplus x_{i-N/2}, \ldots ]$ for $i>N/2$ and $[0, \ldots ]$ for $i \leq N/2$.

\textbf{Second layer computing \orop.} The second layer of the Transformer will compute the \orop over the first coordinate of the input vector which is quite straightforward. Let the query vector $Q(\rvx^{(1)}_i) = 0$. The value vectors will be of the form $V(\rvx^{(1)}_{i}) = [(x_i \oplus x_{i-N/2}), 0, \ldots, 0]$ for $i>N/2$ and they will be zero vectors construction for $i \leq N/2$. Then, regardless of the keys, the dot products will be $0$ for each position, and hence 
\[
\att(X)_{N}^{(2)} = \frac{1}{N}\sum_{i=N/2}^{N} (x_i \oplus x_{i-N/2}).
\] 

If $\att(X)_{N}^{(2)} \geq \frac{1}{N}$, then the FFN can output $1$ and it can output $0$ otherwise.

\textbf{Softmax attention.} We now describe how to retrieve the $x_{i-N/2}$ values with $\sfmax$ attention in the first layer of the model. The approach is slightly different from the one used in Section~\ref{appsub:tf_posr} and makes use of finite precision rounding. 

Consider another construction that is identical to the construction described above with the exception that $\tilde{Q}(\rvx_i) = \eta Q(\rvx_i) = \eta [\jl(i), 1]$. We show that for large enough $\eta$ and $i > N/2$, the weight $\sfmax(A)_{i,j}$ will be so close to $1$ for $j = i-N/2$ that it will be rounded to $1$. Similarly, it will be rounded to $0$ for $j \neq i - N/2$. For $i \leq N/2$, the weight will be rounded to $1$ for $i=j$ and will be rounded to $0$ otherwise.

Recall that the finite precision implementation is parameterized by a large constant $\constp$ (c.f. Appendix~\ref{subsec:precision}). For $i > N/2$ and $j = i - N/2$, there exists an $\eta$ such that,

\[
    \left | 1 - \frac{\exp(\eta \dprod{Q(\rvx_i)}{K(\rvx_j)})}{\sum_{k=1}^{N}\exp(\eta \dprod{Q(\rvx_i)}{K(\rvx_k)})} \right | \leq \frac{1}{2 N^{\constp}}.
\]

For such an $\eta$, the weight $\sfmax(A)_{i,j}$ will be rounded to $1$. 

\[
 1 \geq \frac{\exp(\eta \dprod{Q(\rvx_i)}{K(\rvx_j)})}{\sum_{k=1}^{N}\exp(\eta \dprod{Q(\rvx_i)}{K(\rvx_k)})} \geq  \frac{\exp(\frac{1}{4}\eta )}{\exp(\frac{1}{4}\eta ) + (N-1)\exp(-\frac{1}{4}\eta )} \geq 1 - \frac{1}{2 N^{\constp}}
\]
\[
\implies 2N^{\constp}\exp(\frac{1}{4}\eta) \geq (2N^{\constp} - 1)(\exp(\frac{1}{4}\eta ) + (N-1)\exp(-\frac{1}{4}\eta ))
\]
\[
\implies \exp(\frac{\eta}{4}) \geq (N-1)(2N^{\constp} - 1)\exp(-\frac{\eta}{4})
\]
\[
\implies \eta \geq 2\log \left ((N-1)(2N^{\constp} - 1) \right )
\]

Thus, for $\eta = \log N + \constp\log 2N$, the softmax attention weight at $j= i - N/2$ will be rounded to $1$. Similarly, one may verify that if $\eta \geq \constp\log 2N$, then the weight $\sfmax(A)_{i,j}$ for $j \neq i - N/2$ will be less than $1/2N^{\constp}$ and hence will be rounded to $0$. Hence, for $\eta \geq \log N + \constp\log 2N$, the softmax attention will behave like hard attention, and as described earlier the Transformer will compute the $\ineq$ function. This completes the proof.
    
\end{proof}

The scaling in the attention mechanism in the last part can also be implemented in almost exactly the same way as the construction for Theorem~\ref{thm:tf_posret_main}. Implementing it that way then requires the ReLU FFN to act as a threshold function. The scaling described above makes use of the finite precision setting to amplify the dot products to the point that it acts as hard attention.

\subsection{Representing more general class of Boolean functions}\label{appsub:gen_bool}

We now describe how the construction for Equality in Section~\ref{appsubsec:trans_eq} can be extended to a class of Boolean functions namely, thresholds of at most $N$ $\kspar$ features. By threshold functions, we mean functions of the form $\thres_b: \ham^n \rightarrow \ham$ where for $x \in \ham^n$, the function is defined as $\thres_b(x) = \indicator[\sum_{i=1}^{n}x_i - b > 0]$. The function is parameterized by a constant $b$. For $b = n-1$, the function effectively is the $\andop$ over all bits. Similarly, for $b = n/2$, the function outputs $1$ if the majority of the input bits are $1$ and outputs $0$ otherwise.

A $\kspar$ function is simply a function whose output on any input $x \in \ham^n$ depends on at most $k$ indices of the input. More formally, a function $f:\ham^n \rightarrow \ham$ is $\kspar$ if there exist indices $1 \leq i_1 < i_2 < \ldots <i_k \leq n$ and a function $g: \{0,1\}^k \rightarrow \ham$, such that for every $x \in \hamn$, $f(x_1, x_2, \ldots, x_n) = g(x_{i_1}, x_{i_2}, \ldots, x_{i_k})$. A $\kspar$ function can be any Boolean function ($\andop$, $\orop$, $\xorop$, etc) with the constraint that it can depend on at most $k = O(1)$ bits of input.

We now define the class of threshold of at most $N$ $\kspar$ features denoted as $\thresk$. Let $g_I$ be a $\kspar$ function depending on $I \subset [N]$ indices where $|I| \leq k$. A function $f:\ham^N \rightarrow \ham$ is in the class $\thresk$ if it is of the form $f(x) = \thres_b(g_{I_1}(x), g_{I_2}(x), \ldots, g_{I_N}(x))$ where each $I_j \subset [N]$ and $|I_j| \leq k$ for all $j \in [N]$. Further $g_{I_j}(x) = 0$ for all $x \in \ham^N$ if the set $I_j = \emptyset$.

The following result states that for any function $h$ in the class of threshold of at most $N$ $\kspar$ features, there exists a two-layer Transformer with logarithmic width that can express the function $h$.

\begin{theorem}\label{thm:tf_thkspar}
    For any $N \in \bn$ and any function $h \in \thresk$, there exists a 2-layer Transformer $\tffunc \in \tfclass{m}{p}{k}{2}$ with width $m= O(\log N)$, $H = k$ heads, and precision $p= O(\log N)$ such that $\tffunc(x) = h(x)$ for all $x \in \ham^N$.
\end{theorem}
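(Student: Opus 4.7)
The plan is to extend the Equality construction (Theorem~\ref{thm:tf_eq_main}) almost verbatim: the first layer computes each sparse feature $g_{I_j}(x)$ in parallel at position $j$, and the second layer applies $\thres_b$ by uniform averaging followed by a ReLU threshold, just as the Equality construction applies $\orop$ in its second layer. The XOR step is replaced by a small universal circuit for arbitrary Boolean functions on $k$ bits, and the single retrieval per position is replaced by $k$ retrievals using $H = k$ attention heads.

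In detail, write $g_{I_j}(x) = \psi_j(x_{i_1^{(j)}}, \ldots, x_{i_k^{(j)}})$, where $I_j = \{i_1^{(j)}, \ldots, i_k^{(j)}\}$ (padded with a dummy index if $|I_j| < k$) and $\psi_j : \{0,1\}^k \to \{0,1\}$ is the Boolean function defining $g_{I_j}$. The input embedding at position $j$ carries $x_j$, the positional vector $\jl(j)$, the $k$ lookup vectors $\jl(i_1^{(j)}), \ldots, \jl(i_k^{(j)})$, and a constant-size truth-table encoding $\mathrm{enc}(\psi_j) \in \{0,1\}^{2^k}$. Head $\ell$ of the first layer at position $j$ uses query $\eta\,\jl(i_\ell^{(j)})$, key $\jl(i)$ at position $i$, and value $x_i$. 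By Corollary~\ref{cor:jlvec} and the finite-precision softmax-to-hardmax argument used in the proof of Theorem~\ref{thm:tf_eq_main}, for $\eta = \Theta(\log N)$ the attention weight at $i_\ell^{(j)}$ rounds to $1$ and all others round to $0$, so head $\ell$ retrieves $x_{i_\ell^{(j)}}$ exactly. With a residual connection, position $j$ then holds $(x_{i_1^{(j)}}, \ldots, x_{i_k^{(j)}}, \mathrm{enc}(\psi_j))$, and a shared constant-size ReLU MLP evaluates $\psi_j$ on these bits.

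The second layer reproduces the $\orop$-aggregation step of Theorem~\ref{thm:tf_eq_main}: with query identically zero, a single attention head outputs the uniform average $\frac{1}{N}\sum_{j=1}^{N} g_{I_j}(x)$, and a ReLU gate outputs $1$ iff this average exceeds $b/N$, equivalent to $\sum_j g_{I_j}(x) > b$, i.e., $h(x) = 1$. The resource budget checks out: the positional embedding uses $O(k\log N) + O(2^k) = O(\log N)$ entries since $k = O(1)$; all projection matrices are block-selection matrices of the same width; and both the scaling $\eta$ and the threshold $b/N$ fit in $O(\log N)$ precision.

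The one substantive point not already handled by the Equality construction is that the first-layer MLP must realize a \emph{position-dependent} function $\psi_j$ with \emph{shared} parameters. The resolution, which I expect to be the main subtlety to write out carefully, is to carry $\mathrm{enc}(\psi_j)$ in the positional embedding and implement a universal $k$-input evaluator: for each $\iota \in \{0,1\}^k$, a small ReLU subcircuit computes the indicator $\indicator[(x_{i_1^{(j)}}, \ldots, x_{i_k^{(j)}}) = \iota]$, and the MLP outputs $\sum_\iota \mathrm{enc}(\psi_j)_\iota \cdot \indicator[(x_{i_1^{(j)}}, \ldots, x_{i_k^{(j)}}) = \iota]$. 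This uses $O(2^k \cdot k) = O(1)$ gates, independent of $N$, so the MLP width stays constant.
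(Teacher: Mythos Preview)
Your proposal is correct and follows essentially the same two-layer scheme as the paper: $k$ attention heads in the first layer each retrieve one of the $k$ relevant bits via the JL positional vectors (with padding to a dummy position when $|I_j|<k$), a constant-size MLP computes $g_{I_j}(x)$ at position $j$, and the second layer uniformly averages and thresholds against $b/N$. Your treatment of the position-dependent feature functions is in fact more careful than the paper's: the paper simply asserts that ``the feedforward network can then compute the function $g_{I_i}(x)$ at every position which can be done by a network of constant size because $k = O(1)$,'' without explaining how a \emph{shared} MLP realizes a \emph{different} Boolean function at each position; your device of carrying the $2^k$-bit truth table in the positional embedding and implementing a universal $k$-input evaluator fills exactly this gap.
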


\begin{proof}
The result follows from a straightforward extension of the construction in Theorem~\ref{thm:tf_eq_main}. We will again remap the domain to $\bham^N$. For this problem, we will also prepend the input $x \in \bham^N$ with an additional beginning of sequence token $\textsf{[BOS]}$. Hence, the Transformer will receive $N+1$ tokens $\rvx_0, \rvx_1, \ldots, \rvx_N$.

For any function $h \in \thresk$, the first layer of the Transformer will compute the $\kspar$ features, $g_{I_1}(x), g_{I_2}(x), \ldots, g_{I_N}(x)$. The second layer will then compute the threshold function $\thres_b$ over the $\kspar$ features. 

\textbf{Computing Threshold.} The second layer is quite trivial to construct. Let $\rvx_{1}^{(1)}, \rvx_{1}^{(0)}, \ldots, \rvx_{N}^{(1)}$ be the output vectors of the first layer and inputs to the second layer. Suppose the $i$th output vector contains the $i$th $\kspar$ feature, $\rvx_{i}^{(1)} = [g_{I_1}(x), \ldots]$ for $i=1, \ldots, N$ and $\rvx_{0}^{(1)} = \vzero$. If the query transformation is a null matrix, $Q(\rvx_{N}^{(1)}) = \vzero$, and the value transformation is such that $V(\rvx_{i}^{(1)}) = [g_{I_1}(x)]$, it follows that the output of the attention block will be $\frac{1}{N+1}\sum_{i=1}^{N}g_{I_i}(x)$. The ReLU-based feedforward network can then be used to subtract with a constant $\frac{b}{N+1}$ and implement a threshold function to compute the desired output.

\textbf{Computing $\kspar$ features.} In the first layer, we compute the $\kspar$ features by making use of the almost orthogonal positional vectors $\jl(0), \jl(1), \ldots, \jl(N)$ of size $r= O(\log N)$. From the construction for equality, it should be clear that using one attention head, one can attend to any desired position and retrieve a single bit. Extending that, using $k$ heads, we can retrieve $k$ bits from $k$ different positions. To compute $g_{I_i}(x)$, we will have input embeddings of size $1 + (k+1)r$. For the $i$th input $x_i$, the first coordinate will contain the input $x_i \in \bham$ and will contain $0$ for the $\textsf{[BOS]}$ token $x_0$. The next $r$ indices will contain the positional vector $\jl(i)$ corresponding to the position $i$. The remaining $kr$ indices can be divided into $k$ blocks each of which will contain a positional vector. If the set $I_i$ contains indices $I_{i}^{1}, \ldots, I_{i}^{k}$, then the $k$ blocks will contain vectors $\jl(I_{i}^{1}), \ldots, \jl(I_{i}^{k})$. If the set $I_i$ has less than $k$ indices, then the last $k - |I_i|$ blocks will have the positional vectors $\jl(0)$. 

See that if the input embeddings are designed as described above then one can obtain input bits $x_{I_i}$ in the output of the attention block at the $i$th token. The value vectors will be of $k$ dimension such that $V_h(\rvx_i) = x_i$ at the coordinate $h$ and is $0$ everywhere else. The key transformation for each head can be $K(\rvx_i) = [\jl(i)]$ containing the vector corresponding to the token's position. The query transformation is distinct for each head, and for the $j$th head, the query vector contains $j$th block from the last $k$ blocks containing the positional vectors for each index in the set $I_i$. If the query and key transformations are designed in such a way, then using the arguments described in Theorem~\ref{thm:tf_eq_main}, it can be seen that the output of the attention block at the $i$th position will be a $k$ dimensional vector containing $[x_{I_{i}^{1}}, \ldots, x_{I_{i}^{k}}]$. If the set $I_i$ has a size less than $k$, then the output vector will be followed by zeros after $|I_i|$ coordinates and will be a zero vector if $I_i = \emptyset$. Finally, the feedforward network can then compute the function $g_{I_i}(x)$ at every position which can be done by a network of constant size because $k = O(1)$ does not depend on $N$. The second layer can then compute the threshold over the $\kspar$ features as described earlier which leads to the desired output.
\end{proof}

\textbf{Discussion.} The class of thresholds of $\kspar$ features contains certain functions of interest such as Disjointness and Equality as well as more general classes such as $k$-DNFs and $k$-CNFs with at most $N$ terms or clauses. As described earlier, the (In)Equality function can be also defined as $ \ineq(x) = (x_1 \oplus x_{\frac{N}{2} +1}) \vee \ldots \vee (x_{\frac{N}{2}} \oplus x_{N})$. See that it contains $\frac{N}{2}$ $2$\textsc{-sparse} features where the set of indices $I_{i} = \{i, i+\frac{N}{2}\}$ for $i=1, \ldots, N/2$ and the feature function $g(a, b) = a \oplus b$ for $a, b \in \ham$. Similarly, the Disjointness function can be represented as a 2-CNF. The complement of the Disjointness function can be described more simply as 
\[
(x_1 \wedge x_{\frac{N}{2} +1}) \vee (x_2 \wedge x_{\frac{N}{2} +2}) \vee \ldots \vee (x_{\frac{N}{2}} \wedge x_{N})
\]
which is a 2-DNF that outputs $0$ if the first and second half of the input $x \in \ham^N$ are disjoint and outputs $1$ otherwise. Thus, it follows from Theorem~\ref{thm:tf_thkspar} that two-layer Transformers with logarithmic width can represent functions such as Disjointness as well as $k$-DNFs and $k$-CNFs with at most $N$ terms or clauses.

\subsection{Difficulty of Deriving Communication-based Lower Bounds for 2-layer Transformers}\label{appsub:comm_difficult}

Our lower bounds both for RNNs and for one-layer transformers are based on communication complexity arguments. Here, we provide evidence that other techniques may be needed to establish lower bounds for two-layer transformers by showing that, in a certain sense, no short communication protocol of the same kind as Theorem~\ref{thm:one-layer-commun_mainPaper} can exist for Alice and Bob to obtain the output of any two-layer transformer.
We start from the following lemma:
\begin{lemma}\label{lemma:equality-any-partition-two-layer}
Assume $N$ is even.
For every partition $S_A$, $S_B$ of $\{1, \dots, N\}$ where $|S_A| = |S_B|$, there is a 2-layer transformer $f \in \tfclass{d}{p}{2}{2}$ with width $m=O(\log N)$ with precision $p=O(\log N)$ with $f(x) \in \{0,1\}$ for each $x \in \{0,1\}^N$, such that Alice and Bob, having access to $x_A$ and $x_B$ respectively, need to exchange $\geq \frac{N}{2}$ bits to compute $f(x)$.
\end{lemma}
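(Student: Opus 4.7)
The plan is to adapt the construction of Theorem~\ref{thm:tf_eq_main}, which computes Equality between the two contiguous halves of the input, to an arbitrary bijection between the two parts of the partition. Fix any bijection $\pi: S_A \to S_B$ (which exists since $|S_A| = |S_B| = N/2$) and define
\[
f_\pi(x) := \indicator[\exists i \in S_A: x_i \neq x_{\pi(i)}].
\]
This function plays the role of $\ineq$ relative to the pairing given by $\pi$, and the lemma will follow by showing (i) it is expressible by a small two-layer Transformer, and (ii) it has deterministic communication complexity at least $N/2$ on the partition $(S_A, S_B)$.

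For the upper bound on model size, I would copy the proof of Theorem~\ref{thm:tf_eq_main} almost verbatim, changing only the positional embeddings. Instead of giving position $i > N/2$ the extra coordinates $\jl(i-N/2)$, I assign to each $j \in S_B$ the positional component $[\jl(j), \jl(\pi^{-1}(j))]$, while each $i \in S_A$ gets $[\jl(i), \jl(i)]$. The first-layer attention then uses the last block as query and the first block as key, so at each $j \in S_B$ the head retrieves $x_{\pi^{-1}(j)}$ from position $\pi^{-1}(j) \in S_A$; the MLP computes $x_j \oplus x_{\pi^{-1}(j)}$. The second attention layer takes an OR over positions in $S_B$, exactly as in the original proof. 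Width and precision remain $O(\log N)$; only the positional embedding table depends on $\pi$, and its contents do not affect width or precision.

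For the communication lower bound, I would reduce from deterministic Equality on $N/2$ bits. Enumerate $S_A = \{i_1, \ldots, i_{N/2}\}$ in some fixed order known to both parties. Given instances $a, b \in \{0,1\}^{N/2}$ of Equality, Alice sets $x_{i_k} := a_k$ for $k = 1, \ldots, N/2$ and Bob sets $x_{\pi(i_k)} := b_k$; since $S_A$ and $S_B$ partition $[N]$, these assignments cover the entire input and respect the partition (Alice only writes to indices in $S_A$, Bob only to indices in $S_B$). Then $f_\pi(x) = 0$ iff $a = b$. Hence any protocol computing $f_\pi$ on this partition also decides Equality on $N/2$ bits, so by Fact~\ref{thm:eq} at least $N/2$ bits must be exchanged.

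The main obstacle is verifying that the adapted Transformer construction goes through for every bijection simultaneously, and in particular that nothing in the proof of Theorem~\ref{thm:tf_eq_main} secretly relies on the specific pairing $i \mapsto i + N/2$. Everything in that construction depends on $\pi$ only through which almost-orthogonal vector $\jl(\cdot)$ appears in the query slot at each index; since $\jl(\cdot)$ furnishes nearly orthogonal vectors regardless of which indices are chosen (Corollary~\ref{cor:jlvec}), the softmax concentration and feedforward-threshold arguments transfer unchanged. Implicit in the statement is that the model (in particular its positional embeddings) is allowed to depend on the partition, matching the quantifier order of the lemma; this dependence is precisely what rules out a single partition-oblivious communication protocol for two-layer Transformers in the style of Theorem~\ref{thm:one-layer-commun_mainPaper}.
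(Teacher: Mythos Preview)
Your proposal is correct and follows essentially the same approach as the paper: define an equality-type function matching the partition via a bijection, invoke the Equality communication lower bound (Fact~\ref{thm:eq}) for the $\geq N/2$ direction, and obtain the two-layer Transformer by adapting the construction of Theorem~\ref{thm:tf_eq_main} through a change of positional encodings. The paper's proof is terser---it simply says ``renumbering the positional encodings results in a transformer computing $f_{S_A,S_B}$''---whereas you spell out explicitly which $\jl(\cdot)$ vector goes into which slot and verify that nothing in the softmax/threshold argument depends on the contiguous pairing; this added care is welcome but does not constitute a different route.
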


\begin{proof}
For any partitioning $S_A, S_B$, consider the task of determining whether $x_{S_A}$ and $x_{S_B}$ are identical.
That is, define (for $x \in \{0,1\}^N$):
\begin{equation}
    f_{S_A,S_B}(x) = \begin{cases} 1 & \text{if } x_{S_A} \neq x_{S_B} \\ 
    0 & \text{else}
    \end{cases}
\end{equation}
If Alice and Bob have access to $x_{S_A}$ and $x_{S_B}$, respectively, they need to exchange $\geq \frac{N}{2}$ bits to compute $f(x)$.
Now by Theorem~\ref{thm:tf_eq_main}, there is a transformer $f_{ineq} \in \tfclass{m}{p}{2}{2}$ that computes $f_{[1,\dots,n/2],[n/2+1,\dots,n]}$, where $m = O(\log N)$ and $p = O(\log N)$.
Now renumbering the positional encodings results in a transformer computing $f_{S_A,S_B}$.

\end{proof}

\begin{corollary}
If there is any communication protocol by which Alice and Bob can compute the output for any two-layer Transformer, for any partition, with $O(mpHg(n))$ bits, then it must be the case that $g(n) = \Omega\left (\frac{N}{(\log N)^2} \right)$.
\end{corollary}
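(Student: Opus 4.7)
The plan is to derive the corollary as a direct consequence of Lemma~\ref{lemma:equality-any-partition-two-layer}, which already produces the requisite hard instances. First, I would fix an arbitrary $N$ and pick any balanced partition $S_A, S_B$ of $\{1, \dots, N\}$. By the lemma, there exists a two-layer Transformer $f \in \tfclass{m}{p}{2}{2}$ with $m = O(\log N)$, $p = O(\log N)$, and $H = 2$ heads whose output on input $x$ requires $\geq N/2$ bits of communication to compute between Alice (holding $x_{S_A}$) and Bob (holding $x_{S_B}$).

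Next, I would suppose toward the claimed implication that there exists a generic protocol computing the output of \emph{any} two-layer Transformer under \emph{any} partition using at most $C \cdot m p H g(N)$ bits for some absolute constant $C$. Instantiating this hypothetical protocol on the specific $f$ from the lemma and the specific partition $S_A, S_B$ yields a protocol whose bit cost is at most
\begin{equation*}
C \cdot m \cdot p \cdot H \cdot g(N) \;=\; O\bigl((\log N)^2 \, g(N)\bigr),
\end{equation*}
since $H = 2$ is a constant and $m, p = O(\log N)$.

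Finally, I would combine this upper bound with the unconditional lower bound $N/2$ from the lemma. Equating $(\log N)^2 g(N) \gtrsim N$ immediately gives $g(N) = \Omega\bigl(N / (\log N)^2\bigr)$, as required. The argument is syntactically a one-line contrapositive, so there is no serious obstacle; all the difficulty has been absorbed into Theorem~\ref{thm:tf_eq_main} and its use in Lemma~\ref{lemma:equality-any-partition-two-layer}, where the $\eqfunc$ construction is transported across arbitrary partitions by relabeling positional encodings. The only modest subtlety worth flagging explicitly is that the lemma holds for \emph{every} partition, so the generic-protocol hypothesis must be applied to the worst-case partition, which is exactly the regime where Theorem~\ref{thm:one-layer-commun_mainPaper} breaks down for two layers; this is precisely why the corollary rules out a one-layer-style communication argument for two-layer Transformers.
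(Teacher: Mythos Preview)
Your proposal is correct and follows essentially the same approach as the paper: invoke Lemma~\ref{lemma:equality-any-partition-two-layer} to obtain a two-layer Transformer with $mpH = O((\log N)^2)$ requiring $\geq N/2$ bits of communication, then compare against the hypothetical $O(mpH\,g(N))$-bit protocol to force $g(N) = \Omega(N/(\log N)^2)$. The paper phrases it as a direct contradiction from assuming $o\bigl(mpH \cdot N/(\log N)^2\bigr)$ bits, but the logical content is identical to yours.
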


\begin{proof}
Suppose there is a communication protocol such that for any two-layer Transformer and over any partition, the protocol can compute the output of the Transformer with 
\begin{equation}
    o\left(mpH\frac{N}{(\log N)^2}\right)
\end{equation} bits.
But by Lemma~\ref{lemma:equality-any-partition-two-layer}, for each partition, there exists  a two-layer Transformer with
\begin{equation}
    mpH = O((\log N)^2)
\end{equation}
such that Alice and Bob need to exchange $\geq N/2$ bits to compute its output.
We obtain a contradiction.
\end{proof}
\section{Nearest Neighbors and Associative Recall}\label{app:nn}

Recall from Section \ref{subsec:nn_tf} that in the $\nntask$ task, a model is provided with a sequence of vectors and labels $(\rvx_1, y_1, \ldots, \rvx_{k-1}, y_{k-1}, \rvx_{k})$ where $N/2 < k \leq N$ and the goal of the model is to predict the label corresponding to the nearest neighbor of $\rvx_k$ for each $k = \frac{N}{2}+1, \ldots, N$. We first show that any recurrent model that performs this task must have a width or hidden state of size $\Omega(\frac{N}{p})$.

\noindent \textbf{Relation to Associative Recall.} The nearest neighbor task is closely related to the single and multi-query associative recall (\mqar) task introduced in \citet{ba2016associativerecall} and \citet{zoologyarora2023}. In the associative recall task, a model receives a sequence $s_1, y_1, \dots, s_{k-1}, y_{k-1}, s_k$ where the symbols $s_i$ belong to an alphabet $|\Sigma|$. Assume the symbols the $s_1, \ldots, s_{k-1}$ to be distinct and the labels $y_i \in \ham$ for simplicity. The query input $s_k$ is a repetition of one of the preceding inputs. The goal of the model is to predict the label of the query input $s_k$ by finding the exact match from the context and producing the corresponding label. A model that can compute the nearest neighbor algorithm can perform these associative recall tasks by embedding the sequence of symbols $s_1, y_1, \dots, s_{k-1}, y_{k-1}, s_k$ as a sequence of vectors $(\rvx_1, y_1, \ldots, \rvx_{k-1}, y_{k-1}, \rvx_{k})$.

If a model receives a sequence $(\rvx_1, y_1, \ldots, \rvx_{k-1}, y_{k-1}, \rvx_{k})$ where the vectors $\rvx_1, \rvx_2, \ldots, \rvx_{k-1}$ are distinct and the query vector $\rvx_k$ is a repetition then the task of applying nearest neighbor to predict the label for the query vector reduces to the single query associative recall task. Implementing the nearest neighbor algorithm is equivalent to finding the exact match for the query vector $\rvx_k$ and producing the label $y_k$ corresponding to that input. In the case, where the models are required to predict iteratively for all $\rvx_k$ for $k= \frac{N}{2}+1, \ldots, N$, the task becomes equivalent to \mqar. 

The \mqar task is a simplified version or a subset of the nearest neighbor task where the model is first provided with the prompt $(\rvx_1, y_1, \ldots, \rvx_{N/2}, y_{N/2})$ and the subsequent $N/2$ input vectors are a permutation of the first $N/2$ vectors. In other words, for $N/2 < k \leq N$ and a sequence $(\rvx_1, y_1, \ldots, \rvx_{k-1}, y_{k-1}, \rvx_{k})$, the model has to find the exact match of $\rvx_{k}$ in the first $N/2$ vectors and output the corresponding label. It is straightforward to see that any model that can perform the nearest neighbor task can also perform the \mqar task. Assume the size of the alphabet $|\Sigma| = \Theta(N)$ The embedding of each symbol $s_i \in \Sigma$ can be a distinct vector from the hypercube $\{ -\frac{1}{\sqrt{d}}, \frac{1}{\sqrt{d}}\}^d$ where $d= \ceil{\log |\Sigma|} = O(\log N)$. The embeddings can be thought of as the normalized binary encodings of each symbol. Thus, a model receives a sequence of vectors $(\rvx_1, y_1, \ldots, \rvx_{k-1}, y_{k-1}, \rvx_{k})$ corresponding to a sequence of symbols $s_1, y_1, \dots, s_{k-1}, y_{k-1}, s_k$. Since the query vectors $\rvx_{k}$ for $k>N/2$ are repetitions, there is an exact match for each of them in $\rvx_1, \ldots, \rvx_{N/2}$. Suppose the exact match for a query $\rvx_k = \rvx_{\jst}$, then their dot products $\dprod{\rvx_{k}}{\rvx_{\jst}} = 1$ and the dot product with every other vector $\dprod{\rvx_{k}}{\rvx_{i}} \leq 1 - \frac{1}{\log N}$. Since the margin between the dot product with the nearest neighbor and other vectors satisfy $\frac{1}{\gamma} = O(\log N)$ and all the input embeddings have unit norms, the problem satisfies the assumptions of the nearest neighbor task.

\subsection{Lower Bounds for Recurrent Models}\label{ssec:nn_lower}

\rnnlowernn*
\begin{proof}
    The proof is via a reduction from the disjointness problem. We show that the lower bound is true even for the restricted problem of multi-query associative recall (\mqar). Recall that in the \mqar task the sequence of query inputs $\rvx_{\frac{N}{2}+1}, \ldots, \rvx_N$ is a permutation of the sequence of labelled vectors $\rvx_1, \ldots, \rvx_{N/2}$. 

    If there exists a recurrent model $\rnn$ that can solve the \mqar task, then we show that Alice and Bob can use it to follow a communication protocol and compute the \disj function.

    The communication protocol is as follows. Alice and Bob have two Boolean vectors $a, b \in \ham^{N/2}$ respectively. Both of them know the description of the recurrent model $\rnn$. Alice and Bob have decided on a set of $N/2$ vectors $\rvv_1, \ldots, \rvv_{N/2}$ in $\mathbb{S}^{d-1}$ that they will use in the communication protocol. Choosing any $N/2$ distinct vectors from the hypercube $\{ -\frac{1}{\sqrt{d}}, \frac{1}{\sqrt{d}}\}^d$ will suffice and also satisfy the assumptions mentioned in Section \ref{subsec:nn_tf}. 
    
    To reiterate, both Alice and Bob have the model parameters/description and have decided on a set of $N/2$ unit vectors as a part of their communication protocol. Alice then uses the recurrent model $\rnn$ and provides the sequence of pairs $(\rvv_i, a_i)$ in any arbitrary order. Alice then sends the hidden state vector $\rvh_{N}$ to Bob. Bob then uses the model $\rnn$ and iteratively provides $\rvv_i$ vectors along with the generated output in any order. Bob knows that the output produced by the recurrent model for the vector $\rvv_i$ corresponds to the value $a_i$. Hence, Bob obtains the entire vector $a$ and computes $\disj(a, b)$. 

    Since Alice sent the hidden state vector, which requires $mp$ bits, and they could compute $\disj(a,b)$ over $\ham^{N/2}$ by exchanging $mp$ bits, by Fact \ref{thm:disj} we have that $mp \geq N/2$.
\end{proof}

\subsection{Transformer Construction for Nearest Neighbor}\label{appsubsec:trans_nn}

We now show how a log-sized 2-layer Transformer operating over log-precision numbers can compute the \nntask function over all inputs that satisfy the constraints described in Section \ref{subsec:nn_tf}. Recall the assumptions that 
\begin{itemize}
    \item Norm. All vectors $\rvx_i$ have unit norms.
    \item Margin. For any $N/2 < k \leq N$, and let $j^* = \argmax_{i \in [k-1]} \rvx_{k}^T \rvx_{i}$, then $\rvx_{k}^T \rvx_{j^*} \geq \rvx_{k}^T \rvx_{i} + \nnmar$ for any $i \neq j^*$.
\end{itemize}

\nnmain*

For $N/2 < k \leq N$, the model will be provided the sequence $(\rvx_1, y_1, \ldots, \rvx_{k-1}, y_{k-1}, \rvx_{k})$ where $\rvx_i \in \br^{d'}$ vectors contain the input points and $y_i$s contain the labels. For clarity, we will refer to the embedding of all inputs as $\rvz_i \in \br^{d}$ where $\rvz_{2i - 1} = \embf(\rvx_i, 2i-1)$ is the embedding of input points for $i = 1, \ldots, k$. Similarly, the vectors $\rvz_{2i} = \embf(y_i, 2i)$ will contain the embedding for the corresponding labels. Hence, technically the sequence of input vectors to the Transformer model will be $(\rvz_1, \rvz_2, \ldots, \rvz_{2k-1})$.

    \textbf{Overview of Idea.} The explicit construction is a bit tedious but the key ideas are straightforward to understand. The construction itself does not rely on input sequences of fixed lengths, unlike the one for the Equality problem.

    Recall that for an input sequence $(\rvz_1, \rvz_2, \ldots, \rvz_{2k-1})$, the odd indices ($2i-1$) correspond to the inputs $\rvx_i$ and the even indices ($2i$) contain the label information $y_i$. The final input $\rvz_{2k-1}$ contains the query input $\rvx_k$ and the goal of the model is to find the nearest neighbor input $\rvx_{\jst}$ and output the label corresponding to that $y_{\jst}$.

    Intuitively, a two-layer Transformer can do that in the following way: the first layer can find the nearest neighbor $\rvx_{\jst}$ and retrieve the position of the label $y_{\jst}$. The second layer can then attend over that position and produce the desired label. 

    \textbf{Challenges.} There are a few challenges to executing this strategy which our construction will address. If the input embeddings were identical to the input vectors, then $\rvx_k$ would have maximum dot product with itself and not with its nearest neighbor $\rvx_{\jst}$. Second, if you remove that, even then the dot product with some label vector could be larger than the dot product with the nearest neighbor (which could be close to $-1$). Lastly, suppose by design you have the maximum dot product with the desired input, you still need to retrieve the required information in a useful way since we are working with softmax attention which will also put weight over other inputs.

    \textbf{Key ideas.}  One can think of the vectors $\jl(1), \ldots, \jl(N)$ as some form of positional or address vectors of dimension $O(\log N)$. All vectors $\rvz_{2i-1}$ corresponding to inputs $\rvx_i$ will have the address vectors of their next position $\jl(2i)$. Along with that, all the vectors $\rvz_{i}$ will also have their own address/position vectors $\jl(i)$. If in the first layer, the model can attend over the vector $\rvz_{2\jst-1}$ with a high attention weight, then it will be able to retrieve the required address vector $\jl(2\jst)$ which it can then use in the second layer to retrieve the required label. 
    
    The input embeddings are designed in such a way that the maximum dot product will be with the desired input vector $\rvx_{\jst}$ or more specifically $\rvz_{2\jst-1}$. The embeddings are such that the dot products between the query vector of input $\rvz_{2k-1}$ and the key vectors of all other input vectors $\rvz_{i}$ will be of the following form,
    \[
    \dprod{Q(\rvz_{2k-1})}{K(\rvz_{2i-1})} = \dprod{\rvx_k}{\rvx_{i}} - c_1 \dprod{\jl(2k-1)}{\jl(2i-1)} \quad \text{ for } i = 1, \ldots, k.
    \]
    See that for $i\neq k$, $\dprod{Q(\rvz_{2k-1})}{K(\rvz_{2i-1})} \approx \dprod{\rvx_k}{\rvx_{i}}$ whereas for $i=k$, $\dprod{Q(\rvz_{2k-1})}{K(\rvz_{2i-1})} < -2$ or much smaller based on the constant $c_1$. Hence, attention weight will be smaller on the query input itself compared to other inputs. 

    Secondly, for $i= 1, \ldots, k-1$, the dot products with the label vectors will be of the following form,
    
    \[
    \dprod{Q(\rvz_{2k-1})}{K(\rvz_{2i})} = \dprod{\rvx_k}{\vzero} - c_1 \dprod{\jl(2k-1)}{\jl(2i-1)} - c_2 \approx -c_2 \quad \text{ for } i = 1, \ldots, k.
    \]

    which will be small depending on the constant $c_2$. Hence, the dot product will be maximum with input with the nearest neighbor $\rvx_{\jst}$ with a margin $\Omega(\nnmar)$. 
    
    To retrieve and use $\jl(2\jst)$ in the next layer, we do not need to hard-attend on $\rvz_{2\jst -1}$. Since the address vectors are of the form $\jl(i) \in \{ -\frac{1}{\sqrt{k}}, \frac{1}{\sqrt{k}} \}^{k}$, we only retrieve the corresponding sign vectors $\jt(i) = \sqrt{k}\jl(i) \in \bham^{k}$. Since the attention weight will be maximum on $\rvz_{2\jst -1}$ with a margin, we can scale the query vectors to increase the weight to a sufficient value and then use $\relu$ as a form of threshold to obtain $\jt(2\jst)$. The second layer is then straightforward and will retrieve the desired label.

    \begin{proof}
    We will now describe the explicit construction for Transformers to compute nearest neighbors.

    \textbf{Input Embeddings.} The embedding vectors are defined in the following way,
    \begin{equation}\label{eq:nn_emb}
        \embf(\rvx_i, 2i-1) = \rvz_{2i-1} = [\rvx_i, 0, 1, \jl(2i-1), \jl(2i), \vzero_k]
    \end{equation}
    \[
    \embf(y_i, 2i) = \rvz_{2i} = [\vzero_{d'}, y_i, -2, \jl(2i), \jl(2i), \vzero_k].
    \]

    Here, the vectors $\jl(i) \in \{ -\frac{1}{\sqrt{k}}, \frac{1}{\sqrt{k}}\}^k$ are JL transformations as described in Section \ref{ssec:tools} and hence $k = O(\log N)$. The dimension of the embedding vectors $d = 3k + d' + 2 = O(\log N)$. The vectors $\jl(1), \ldots, \jl(2N)$ are such that,
    \[
    \dprod{\jl(i)}{\jl(j)}  =
    \begin{cases} 
     1 \pm \nnmar/100  & \text{if } i = j, \\
     0 \pm \nnmar/100  & \text{otherwise}.
\end{cases}
    \]

    \textbf{Query, key and value vectors.} The query and key transformations in the first layers are designed in the following way,
    \[
    Q(\rvz_{2k-1}) = [\rvx_k, 1, -10\jl(2k-1)] 
    \]
    \[
    K(\rvz_{2i -1})  = [\rvx_i, 3, \jl(2i-1)] \quad \text{  for } i = 1, \ldots, k,
    \]
    \[
    K(\rvz_{2i})  = [\vzero_d, -2.3, \jl(2i)] \quad \text{  for } i = 1, \ldots, k.
    \]

    The value vectors will only have the 5th part ($\jl(2i)$) in the last slot and all other values will be $0$. Let $\jt(i) = \sqrt{k}\jl(i) \in \bham^{k}$, that is, $\jt(i)$ contains the signs of the vector $\jl(i)$. The value vector will be of the form $V(\rvz_{2i-1}) = [\vzero_{d'}, 0, 0, \vzero_{k}, \vzero_{k}, 2\jt(2i)]$ and $V(\rvz_{2i}) = [\vzero_{d'}, 0, 0, \vzero_{k}, \vzero_{k}, 2\jt(2i)]$ for all $i = 1, \ldots, N$. The goal of the attention in the first layer is to retrieve the vector $\jt(2\jst)$ corresponding to the label of the nearest neighbour.

    \textbf{Inner products.} The design of such query and key transformation ensures that for an input query $\rvx_k$, the dot product is maximum with its nearest neighbour $\rvx_{j^*}$ and not with itself or any of the embeddings of the labels $y_i$s.

    The inner products $A_{2k-1,2i-1}= \dprod{Q(\rvz_{2k-1})}{K(\rvz_{2i-1})}$ have the following form,
    \[
        \dprod{Q(\rvz_{2k-1})}{K(\rvz_{2i-1})} = \dprod{\rvx_{k}}{\rvx_{i}} + 2 -10\dprod{\jl(2k-1)}{\jl(2i-1)} 
    \]
    \[
    = \begin{cases} 
     \dprod{\rvx_{k}}{\rvx_{i}} + 3 \pm \nnmar/10 & \text{if } i \neq k, \\
     \dprod{\rvx_{k}}{\rvx_{i}} + 3 - 10 \pm \nnmar/10 & \text{if }  i = k.
\end{cases}
    \]

    \[
    \implies \dprod{Q(\rvz_{2k-1})}{K(\rvz_{2i-1})} 
    \begin{cases} 
     \geq 1  & \text{if } i \neq k, \\
     \leq -5  & \text{if } i = k.
\end{cases}
    \]

    Similarly, the inner product with the label vectors $\rvz_{2i}$s is less than $0$ for all $i$ as well,
    \[
     \dprod{Q(\rvz_{2k-1})}{K(\rvz_{2i})} =  0 - 6 \pm \nnmar/10 \leq -5
    \]

To summarize, the query and key vectors are such that for the query input $\rvz_2k-1$, the dot product with itself $\dprod{Q(\rvz_{2k-1})}{K(\rvz_{2k-1})} \leq -5$ and the dot product with all label vectors containing $y_i$ is $\dprod{Q(\rvz_{2k-1})}{K(\rvz_{2i})} \leq -5$. The remaining dot products are with the vectors of interest which include the nearest neighbour input point,

    \[
     \dprod{Q(\rvz_{2k-1})}{K(\rvz_{2i-1})} =  \dprod{\rvx_{k}}{\rvx_{i}} \pm \nnmar/10 \quad \text{for } i = 1, \ldots, k-1.
    \]

Suppose $\jst = \argmax_{i \in [k-1]} \rvx_{k}^T \rvx_{i}$ is the index for the input point which has the minimum L2 distance or maximum inner product with the query vector $\rvx_k$. Then we have that,

\begin{equation}
    \dprod{Q(\rvz_{2k-1})}{K(\rvz_{2\jst-1})} - \dprod{Q(\rvz_{2k-1})}{K(\rvz_{2i-1})} \geq \nnmar - \nnmar/5
\end{equation}

for all $i \neq \jst$. This indicates the maximum inner product will be with its nearest neighbor and all other inner products will have a margin of at least $\nnmart = \frac{4}{5}\gamma$. 

If we have a Transformer with a hard-attention mechanism, then it will attend only to the input which is the nearest neighbor of the query vector $\rvx_k$. However, with softmax attention, it is non-trivial to only attend over a single input. 

The embedding of all input vectors $\rvx_i$ contains a vector $\jl(2i)$ (see Eq. \ref{eq:nn_emb}) which serves as a key vector to retrieve the label following that input vector. Note that, we only need to retrieve the signs of the vector $\jl(2i)$ since the vectors are of the form $ \{ -\frac{1}{\sqrt{k}}, \frac{1}{\sqrt{k}}\}^k$. We can then use it to retrieve the label of the corresponding input in the next layer. 

\textbf{Retrieving with softmax.} We show using softmax attention and a $\relu$ FFN we can retrieve the required $\jl(2\jst)$. In particular, we will obtain $\jt(2\jst) = \sqrt{k}\jl(2\jst) \in \bham^{k}$. As shown earlier, if $\jst = \argmax_{i \in [k-1]} \rvx_{k}^T \rvx_{i}$, then the maximum dot product $ \dprod{Q(\rvz_{2k-1})}{K(\rvz_{2\jst-1})}  \geq \dprod{Q(\rvz_{2k-1})}{K(\rvz_{2i-1})} + \nnmart$ is greater by a margin $\nnmart = \frac{4}{5}\nnmar$. The value vector corresponding to $\rvz_{\jst}$, i.e., $V(\rvz_{\jst}) = [0, \ldots, 0, 2\jt(2\jst)$] has the key vector which will allow the next layer to retrieve the required label. 

The basic idea is that even if at least $9/10$ of the attention weight is on $V(\rvz_{\jst})$ and essentially $2\jt(2\jst)$ then that suffices to preserve the signs using a $\relu$ FFN. 

Let $\sigma(a)$ be a function such that $\sigma(a) = 1$ for $a>1$, $\sigma(a) = -1$ for $a<-1$, and $\sigma(a) = a$ for $-1 \leq a \leq 1$. See that $\sigma(a)$ can easily be implemented by a $\relu$ FFN since it is essentially $\relu(x+1) - \relu(x-1) - 1$.

Without loss of generality, let's say $2\jt(2\jst)_1 = +2$. If the attention weight on it is $9/10$ and the remaining $1/10$ weight is distributed among the rest of the inputs, then in the worst case that value will be $\frac{9}{10}2 - \frac{1}{10}2 \geq 1$. Hence, applying $\sigma(\cdot)$ over the value will result in $+1$. The same goes for the case when $2\jt(2\jst)_1 = -2$ and for other indices of $2\jt(2\jst)$. Thus, it suffices to show that the attention weight over $V(\rvz_{2\jst-1})$ can be greater than $9/10$ since it implies that with the $\relu$ FFN, the output for $\rvz_{2k-1}$ in the first layer will contain $\jt(2\jst)$.

Consider another construction which is identical to the construction described above with the exception that $\tilde{Q}(\rvz_i) = \eta Q(\rvz_i)$. We show that for large enough $\eta$, the weight $\sfmax(A)_{2k-1,2\jst-1}$ will be greater than $9/10$ and the remaining weights combined will be less than $1/10$. Let $\beta = \dprod{\tilde{Q}(\rvz_{2k-1})}{K(\rvz_{2\jst-1})}$ and $Z \in \br^{N \times d}$ contain vectors $(\rvz_1, \ldots, \rvz_{2k-1})$. Then, for any $r \in \bn$,

\[
\sfmax(\tilde{Q}(\rvz_{2k-1})K(Z)^{T})_{2\jst-1} 
\]
\[
= \frac{\exp(\eta\dprod{Q(\rvz_{2k-1})}{K(\rvz_{2\jst-1})}) }{\exp(\eta\dprod{Q(\rvz_{2k-1})}{K(\rvz_{2\jst-1})}) + \sum_{p \neq 2\jst-1} \exp(\eta\dprod{Q(\rvz_{2k-1})}{K(\rvz_{p})})}
\]
\[
\geq \frac{\exp(\eta\beta) }{\exp(\eta\beta) + (2k-1) \exp(\eta(\beta - \nnmart))} \geq \frac{\exp(\eta\beta) }{\exp(\eta\beta) + 2N \exp(\eta(\beta - \nnmart))}
\geq \frac{r-1}{r}
\]
\[
\implies \exp(\eta \beta) \geq (r-1)(2N)\exp(\eta(\beta - \nnmart))
\]
\[
\implies \eta \geq \frac{1}{\nnmart}\log (r-1)2N.
\]

Hence, for $r=10$, if $\eta = \frac{5}{4\nnmar}\log 18N$ then the attention weight on the $2\jst -1$th input vector will be greater than $9/10$. Similarly, one can verify that for $\eta = \frac{5}{4\nnmar}\log 18N$, the attention weight for the rest of the inputs combined is at most $1/10$.

\textbf{Output of the first layer.} Let $\rvz_{i}^{(1)}$ denote the output of the first layer on the $i$th input vector. By construction, with MLP and residual connection after the attention block, the output of the first layer is such that, 

\[
    \rvz_{2k-1}^{(1)} = [\rvx_k, 0, 1, \jl(2k-1), \jl(2k), \jt(2\jst)]
\]
\[
        \rvz_{2i-1}^{(1)} = [\rvx_i, 0, 1, \jl(2i-1), \jl(2i), \ldots] \quad \text{for } i= 1,\ldots,k-1
\]
    \[
    \rvz_{2i}^{(1)} = [\vzero_{d'}, y_i, -2, \jl(2i), \jl(2i), \ldots] \quad \text{for } i= 1,\ldots,k-1.
    \]

\textbf{Second Layer.} Since we have the address/key vector $\jt(2\jst)$ for the target label as the input in the first layer, it is straightforward to apply attention and $\sigma(\cdot)$ with $\relu$ FFN to produce the desired label. 

See that if the query vector $Q(\rvz_{2k-1}^{(1)}) = [\frac{1}{k}\jt(2\jst)]$, and the key vectors contain the 4th part of the input vector, that is, $K(\rvz_{i}^{(1)}) = [\jl(i)]$, then the dot product will be greater than $1- \nnmar/100$ with the desired input at $2\jst$ and will be less than $\nnmar/100$ for the rest of the inputs. The value vectors will be assigned the second part of the input $V(\rvz_{2i}^{(1)}) = [\vzero_{d'}, y_i, 0, \ldots, 0]$ and will $V(\rvz_{2i-1}^{(1)}) = [0, \ldots, 0]$ for all $i=1, \ldots, k$.  Using the techniques used for the first layer, it is straightforward to see that a scaling of the query vector and applying $\sigma(\cdot)$ to the output will produce the desired label $y_{\jst}$.

\end{proof}
\section{Empirical Analysis: Additional Details and Experiments}\label{app:experiments}

In this section, we discuss the details of implementation and data generation for experiments described in Section \ref{sec:experiments}. Further, we discuss some additional experiments on the string equality task.

\textbf{Implementation and hyperparameters.} All of our implementations for experiments are based on PyTorch \citep{pytorch}. For our experiments with Transformers, we use the Huggingface Transformers library \citep{huggingface} with the GPT-2 backbone as well as our own custom implementation. We use PyTorch's standard or in-built implementation for experiments with LSTMs. For state-space models like Mamba \citep{mamba} and Diagonal state-space models \citep{dss}, we use the official implementation provided by the authors for our experiments. For RetNet \citep{retnet} and Linear Transformers, we use our own custom implementation.

For each task, we tune the models across several hyperparameters and report the results based on the best-performing model. We use grid search to tune the models for each task. For each architecture excluding LSTMs, we tuned across depths $\in \{1, 2, 4, 6\}$ and widths $\{64, 128, 256, 512\}$. Since LSTMs with large depths are hard to train \citep{difficultyrnn}, we only consider LSTMs with depths up to $3$. For Transformers and linear Transformer variants, we tune the heads across $\{4, 8\}$.  For all models, we tune the learning rate across $\{0.01, 0.005, 0.001, 0.0005, 0.0001, 0.00005, 0.000001\}$. We primarily use Transformers with absolute encodings for the result presented in the main paper. We train the models for up to $250$k steps unless they achieve (almost) perfect validation accuracy earlier. After training, we evaluate the models on a fresh set of $5000$ examples. 

Note, however, that the focus of our paper is slightly different and for some problems, we are interested in the behavior of small-sized (width or depth) models of one architecture and a relatively larger model for another architecture. For instance, in the index lookup task, we are concerned with how well one-layer Transformers with small widths fare against relatively larger recurrent models. Additionally, our focus is on the comparison of different architectures of fixed sizes across lengths and not primarily on how well models of the scale used in practice perform these tasks on much larger lengths. Hence, we do not consider models with much larger depth or width. 

\textbf{Compute.} All our experiments were conducted using 8 NVIDIA Tesla V100 GPUs each with 16GB memory and 16 NVIDIA GTX 1080 Ti GPUs each with 12GB memory. Each run for 500k steps could take between 1 hour and 16 hours depending on the length of the inputs and the size of the models. Some runs are much shorter if the model achieves high accuracy quite early (e.g. on lengths 20). While each individual run does not take a significant amount of time, tuning the models, particularly for the tasks where they fail to learn requires several runs and consequently a much longer duration. We estimate that all the runs across hyperparameters, architectures, and input lengths took $\leq 1200$ GPU hours in total on the GPUs mentioned above.

\begin{figure}[t]%
	\centering
	\subfloat{{\includegraphics[scale = 0.26, trim=0 0 5 5, clip]{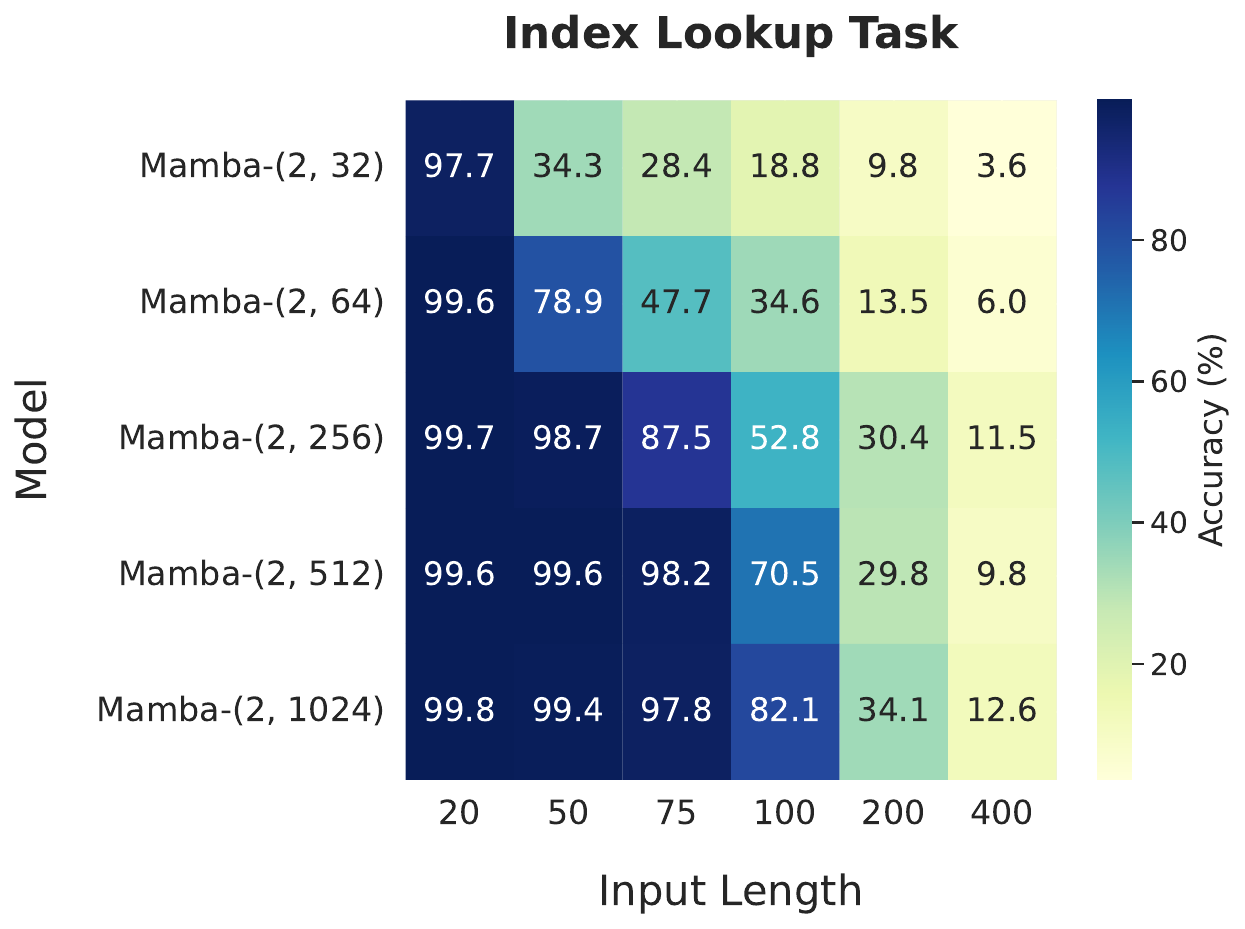} \label{fig:mamba_posr} }}%
        \qquad
	\subfloat{{\includegraphics[scale = 0.26, trim=0 0 5 5, clip]{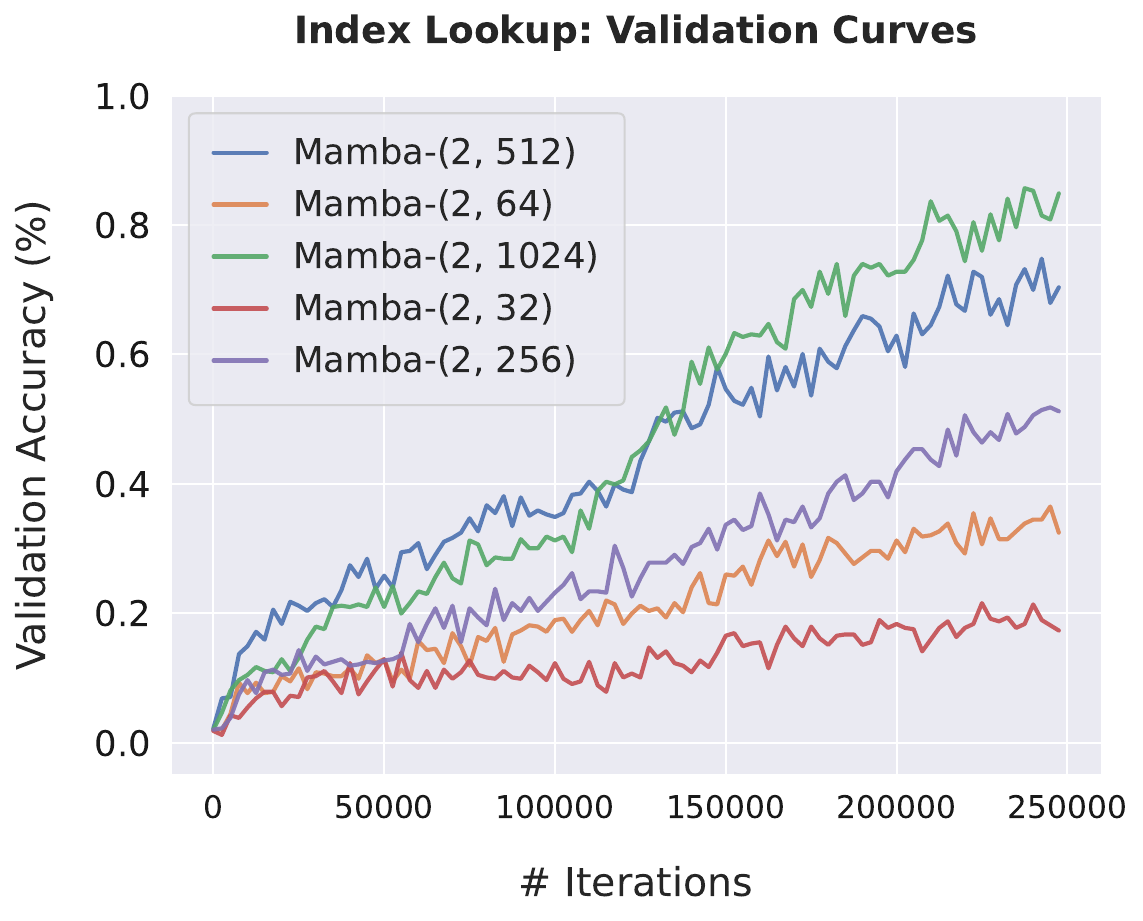} \label{fig:mamba_posr_curves} }}
	\caption{Performance of Mamba on the Index Lookup task across various lengths and widths. See Section~\ref{appsubsec:data_gen} for more details.}%
	\label{fig:mamba_res}%
\end{figure}

\subsection{Additional Experiments and Data Generation}\label{appsubsec:data_gen}

\textbf{Index Lookup Task.} The input sequences are comprised of symbols in $\Sigma$ and a positional token in $[N]$. In our experiments, the size of the set of symbols is $|\Sigma| = 64$ and $N$ varies from $20$ to $400$. To create each example, we first sample a length $k$ uniformly between $10$ and $N$ (20 - 400) and then sample $N$ symbols independently from $\Sigma$ uniformly at random. Lastly, we sample a position between $1$ and $k$ uniformly at random and append it to the sequence to produce a labeled example for training or evaluation. During evaluation, the model is tested on sequences of length exactly $N$.

\textbf{Bounded Dycks.} For the \dyckbt{2} task, we generate the examples in such a way that with 0.5 probability, the generated string is well-balanced with depth at most 2, and with 0.5 probability it does not belong to the language and has label 0. The generated strings are of length exactly $N$ in both cases. To generate positive examples we use the following strategy: we iterate over $N-2$ steps and for each step, we check whether the current depth of the stack is $< 2$. If the depth of the stack is $2$, the sequence continues with the closing bracket corresponding to the open bracket in the stack. If the depth is $< 2$, the sequence is either continued by choosing one of the open brackets uniformly if the stack is empty or by choosing uniformly between open brackets and the closing bracket if the stack is non-empty. After $N-2$ steps, the generated string could be a well-balanced string of length $N-2$ in which case we add a depth 1 string of length 2 at the end which results in a well-balanced string of length $N$. Otherwise, the stack could be nonempty we add the remaining closing brackets which also leads to a string of length $N$. 

To generate negative examples, we first sample a positive example and then corrupt some of the symbols in the string. For strings of length $N$, we first sample a number $k = 1, \ldots, N/10$ uniformly at random and then pick $k$ different indices uniformly at random. For each of those positions, with probability $1/2$, we swap the types of brackets, e.g. round `(' to square `[', and with probability $1/2$ we switch open brackets to closing brackets (or vice versa). There is a very small probability that after the corruption the resulting string will still be a valid well-balanced string of depth at most 2, in which case we redo the corruption again. The probability of the event is too low to affect the efficiency of the generation process.

\textbf{Additional Experiment with Mamba.} We explore how the size of the hidden state influences the performance of a Mamba model across various lengths on the Index Lookup task. We evaluate two-layer models of different widths $\{32, 64, 256, 512, 1024\}$ across various lengths ranging from $20$ to $400$. We find a clear trend where the performance of Mamba models increases monotonically with the increase in the width of the model (see Figure \ref{fig:mamba_res}). While the performance does exactly scale linearly with width it is still somewhat interesting that the trend exists. We did a similar experiment with LSTM but did not observe such a trend and for lengths above $100$ the performance remained at chance level even when the width was increased. Note, however, that even with a width of $1024$, the performance of Mamba is still much worse than a one-layer Transformer with a width of $64$.

\subsection{String Equality Task}\label{appsubsec:eq_exp}

We explore a few different strategies to evaluate the performance of models on the string equality task. In these experiments, the goal of the models is to determine whether the first half of the input string and the second half of the string are equal. The first two experiments are in the standard classification setting but differ in the way the negative examples are created. The third experiment is in the next character prediction which is commonly used in prior works to test models on formal languages. For the equality task, we find that it is not straightforward to generate negative examples in a way that the problem cannot be solved using shortcuts. Hence, we include the next character prediction setting which does not require the generation of negative examples.

\textbf{Summary of results.} The results on the String Equality task are relatively more nuanced than the results for index lookup and \dyckt. On a standard binary classification setup, one must make a design choice regarding the generation of negative examples that could influence the difficulty of the task. We first conduct experiments in the classification setup by generating negative examples using two different strategies. On one task, we find that while recurrent models like LSTMs struggle beyond a certain length, state-space models like DSS and Mamba are able to match Transformer's performance. In the second task, we find that all models are able to achieve near-perfect accuracy. We note that both classification tasks can be solved by using shortcuts based on the way the negative examples are created. Hence, we explore another strategy called the next character prediction setting inspired by prior works \citep{suzgun2019lstm, gers2001lstm} on empirical analysis on formal languages. We note that the task in that setting becomes almost identical to the copying task where a model observes a sequence and then has to produce the same sequence. In that setting, we observe that at a small width like 64, state-space models fail to perform the task accurately at certain lengths whereas Transformers with the same width succeed. For models with larger widths, we find that DSS and Transformers succeed at performing the tasks for the lengths considered in our experiments.

\begin{figure}[t]%
	\centering
	\subfloat{{\includegraphics[scale = 0.26, trim=0 0 5 5, clip]{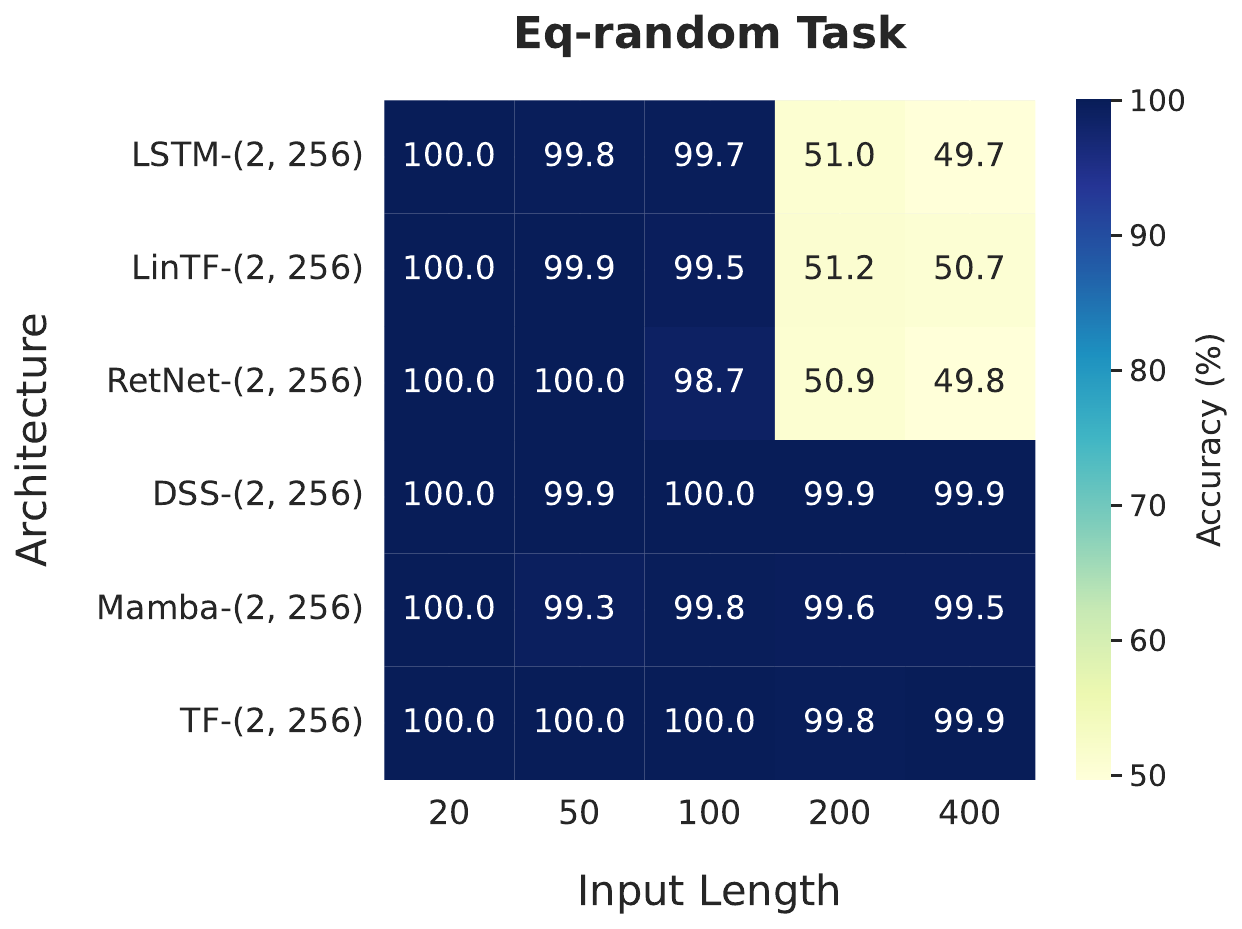} \label{fig:eqrand} }}%
        \qquad
	\subfloat{{\includegraphics[scale = 0.26, trim=0 0 5 5, clip]{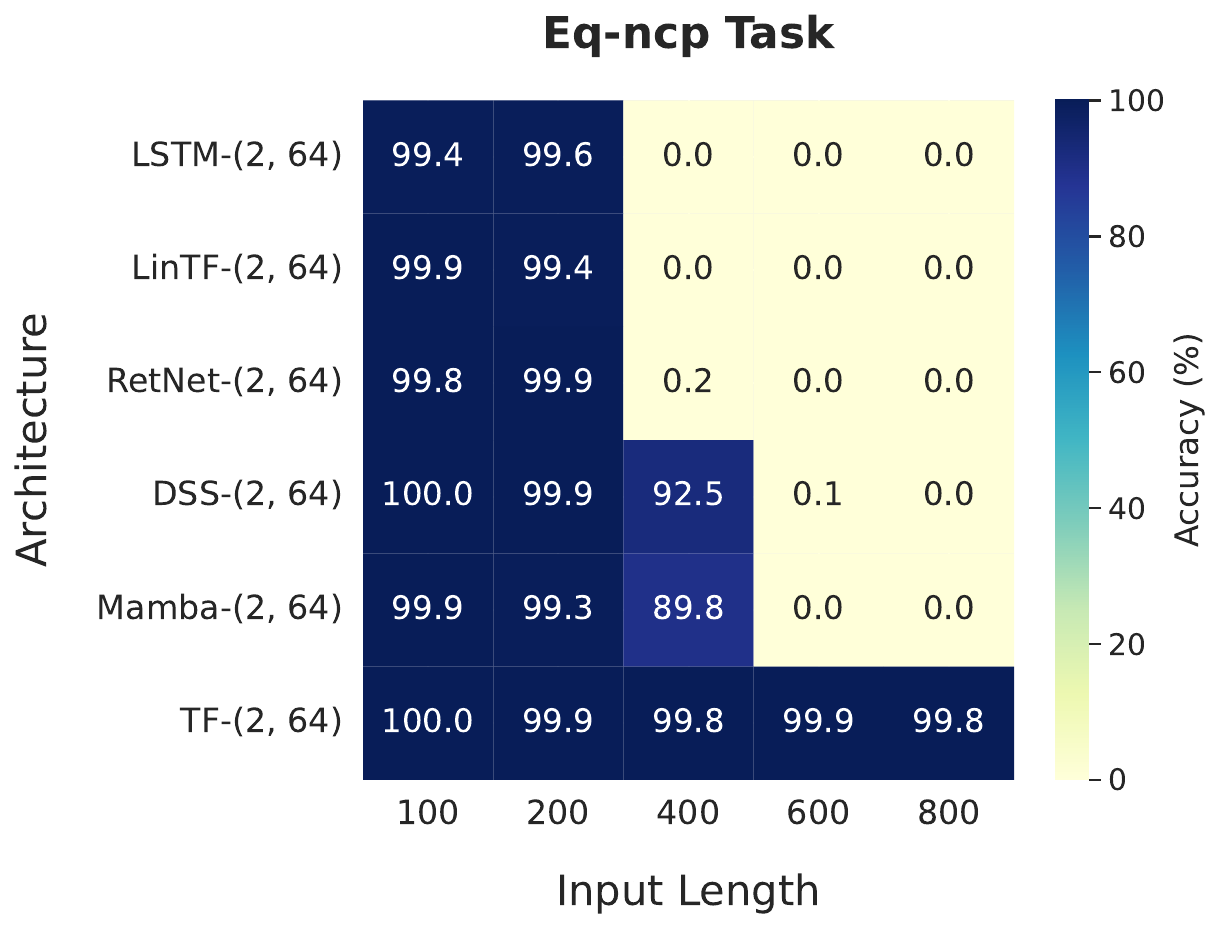} \label{fig:eqncp} }}
	\caption{Performance of architectures on the Equality task. See Section~\ref{appsubsec:eq_exp} for more details.}%
	\label{fig:eq_res}%
\end{figure}

\textbf{Setup.} In all our experiments with string equality, we have a vocabulary $\Sigma$ which contains one token that is reserved as a separator token `\sept' and is placed between the first half and the second half of the input string. For each task, while creating an example we first pick the length $k$ uniformly at random from $\frac{N}{10}, \frac{N}{10} +2, \frac{N}{10}+4, \ldots N$. The choice of lengths is due to the requirement that the length of the strings must be even. During evaluation, we test the models on strings of length exactly $N$.

\textbf{(i) Eq-random.} The first experiment is pretty straightforward and contains binary strings, i.e., $\Sigma = \{0, 1, \sept\}$. With probability $1/2$, we create a positive example by first sampling a string of length $k/2$ uniformly followed by the separator token and then the same string again. By construction, the created example has a positive label. With probability $1/2$, we create the second half of the string by sampling another string of the same length where each symbol is sampled uniformly at random. We assign the label based on whether or not it is equal to the first half but with high probability ($1 - \frac{1}{2^{k/2}}$), the example has a negative label. 

\textbf{(ii) Eq-one} In the second experiment, we have a vocabulary with $1024$ and we create the positive example in the same way as earlier in the Eq-one setting by sampling strings uniformly at random. To generate a negative example, we set the second half to be equal to the first half and then change the symbol at only one of the positions. In other words, the second half matches the first half at all positions except one. 

The training details and hyperparameters are almost identical to the experiments described earlier.

\textbf{Results.} On both the Eq-random task and Eq-one task we find that Transformers achieve near-perfect accuracy on lengths up to 400. Recurrent models like LSTMs and linear Transformers struggle at lengths beyond $100$ on the Eq-random task. On the other hand, we find that recently proposed state-space models like DSS and Mamba are able to match Transformers' performance on both tasks (See Figure~\ref{fig:eq_res} left). On the Eq-one task, we find that for lengths up to 400, all models are able to achieve near-perfect accuracy.

\textbf{Remark.} One thing to note is that both of the experimental setups described above can be solved using some form of shortcuts. In the first case, it is straightforward to see that the first half and the second half will differ at about half the positions on average. A recurrent algorithm does not necessarily have to store the first $N/2$ elements to compute the output correctly with high probability. Even if it stores the first few elements, then with high probability it can determine the label of the string correctly. For the second case (Eq-one), even if it might seem difficult at first look, a recurrent model can solve it perfectly by just maintaining a dictionary of the counts of each symbol in the vocabulary. Since the first half and second half differ at exactly one position, the number of occurrences of at least one symbol will be different in the two halves. 

To rule out such phenomena, we adopt the next character prediction setting \citep{gers2001lstm, rodriguez2001simple, suzgun2019lstm, ebrahimi2020can}. 

\textbf{(iii) Eq-ncp.} In the next character setting (NCP), a model is required to predict the next set of valid continuations for every prefix of a given string. It can be seen as a multi-label classification problem for every prefix of a given string. The prediction for a particular input string is considered correct if the prediction for every prefix is correct. 

In the context of the string equality task with length $N$, for the first $N/2$ symbols, all symbols are valid continuations, and hence the predictions for the first half of the string are trivial. After observing the separator token \sept, only one of the $|\Sigma|$ symbols is allowed at every prefix until the end of the string. We note that the prediction problem becomes equivalent to copying \citep{jelassi2024repeat} a sequence of symbols. For our experiments the vocabulary size $|\Sigma|= 1024$. We explore sequences of higher lengths up to 800. In this setting, we find that when the model sizes are restricted, i.e., the width of the models is 64, state-space models such as DSS and Mamba struggle to perform better than chance-level accuracy for lengths over 400. In contrast, Transformers are able to achieve near-perfect accuracy (See Figure~\ref{fig:eq_res}). However, unlike the case of Index Lookup, we find that the DSS architecture in particular is able to solve the task for lengths up to 800 with larger widths in the NCP setting.

\textbf{Discussion.} We discuss a few takeaways from our experiments. For the Index Lookup task, our results indicate that even small-sized one-layer Transformers can learn a lot more efficiently than recurrent models of much larger sizes. The experiments with bounded Dycks are primarily for one-layer Transformers and indicate that they learn at a much slower rate than recurrent models like LSTMs and even two-layer Transformers. On the string equality task, the difference in performance between Transformers and recurrent models is not as stark as the Index Lookup task, particularly with state-space models such as DSS. However, unlike Transformers, they seem to struggle on long sequences in the NCP setting when the widths of models are small. 





\end{document}